\newcommand{\alias}[2]{\newcommand{#1}[0]{#2}}
\newcommand{\ralias}[2]{\renewcommand{#1}[0]{#2}}
\newcommand{\malias}[2]{\alias{#1}{\ensuremath{#2}}}
\newcommand{\mralias}[2]{\ralias{#1}{\ensuremath{#2}}}
\newtheorem{lemma}{Lemma}
\newtheorem{proposition}{Proposition}
\newtheorem{corollary}{Corollary}
\newtheorem{theorem}{Theorem}
\newtheorem{example}{Example}
\newtheorem{claim}{Claim}
\newenvironment{proof} {\noindent\emph{Proof:}}{$\left.\right.$\hfill$\Box$}
\malias{\UP}{\mathbb P}
\malias{\UQ}{\mathbb Q}
\alias{\patquery}{union of patterned conjunctive queries}
\alias{\Patquery}{Union of patterned conjunctive queries}
\alias{\conjpatquery}{patterned conjunctive query}
\alias{\Conjpatquery}{Patterned conjunctive query}
\malias{\x}{\vec x}
\mralias{\t}{\vec t}
\mralias{\v}{\vec v}
\malias{\iv}{\vec i}
\mralias{\j}{\vec j}
\mralias{\k}{\vec k}
\mralias{\l}{\vec l}
\malias{\R}{\mathcal R}
\malias{\F}{\mathcal F}
\malias{\Q}{\mathcal Q}
\newcommand{\fun}[1]{\ensuremath{\mbox{\sl #1}}} 
\def\ltrans{linear+trans}
\def\PSpace{\textsc{PSpace}}
\def\NL{\textsc{NL}}
\title{Combining Existential Rules and Transitivity: Next Steps}
\author{Jean-Fran\c cois Baget\\INRIA \& \\ Univ. Montpellier II \\ Montpellier, France
\And
Meghyn Bienvenu\\ CNRS \& \\ Univ. Paris-Sud\\Orsay, France
\And
Marie-Laure Mugnier \\INRIA \& \\Univ. Montpellier II \\ Montpellier, France
\And
Swan Rocher \\INRIA \& \\ Univ. Montpellier II \\ Montpellier, France
}
\begin{document}

\maketitle

\begin{abstract}
We consider existential rules (aka Datalog$\pm$) as a formalism for specifying ontologies.
In recent years, 
many classes of existential rules have been exhibited for which conjunctive query (CQ) entailment is decidable.
However, most of these classes cannot express transitivity of binary relations, a 
 frequently used
 modelling construct.
In this paper\footnote{This document is an extended and revised version of the IJCAI'16 paper with the same title. It contains an appendix with proofs omitted from the conference version. The most recent revision (December 2016) adds a new result about \emph{msa} (Proposition \ref{msa}) and corrects a mistake in the complexity bounds in Theorem \ref{thm:combined}.}, we address 
 the issue of whether transitivity can be safely combined with decidable classes of existential rules.
First, we prove that transitivity is incompatible with one of the simplest decidable
classes, namely aGRD (acyclic graph of rule dependencies), 
which clarifies the landscape of `finite expansion sets' of rules.
Second,
we show that transitivity can be safely added to linear rules (a subclass of guarded rules, which generalizes the description logic DL-Lite$_R$)
in the case of atomic CQs, and also for general CQs if we place a
minor syntactic restriction on the rule set.
This is shown by means of a novel query rewriting algorithm that is
specially tailored to handle transitivity rules.
 Third, 
 for the identified decidable cases,
 we analyze 
 the combined and data complexities of query entailment. 
\end{abstract}

\section{Introduction}

Ontology-based data access (OBDA)
 is a new paradigm in data management, which exploits the semantic information provided by ontologies when querying data.
 Briefly, the notion of a database is replaced by that of a knowledge base (KB), composed of a dataset and an ontology.
 \emph{Existential rules}, aka Datalog$\pm$, have been proposed to represent ontological knowledge in this context \cite{cgl09,blms09,bmrt11,kr11}.
 These rules are an extension of function-free first-order
 Horn rules (aka Datalog), that allows for existentially
 quantified variables in rule heads.
 The addition of existential quantification allows one 
 to assert
 the existence of yet unknown entities and to reason about
 them, an essential feature of ontological languages, which
 is also at the core of description logics (DLs).
 Existential rules generalize the DLs most often considered
in the OBDA setting,
 like the DL-Lite and $\mathcal {EL}$ families
 \cite{dl-lite07,baader03,ltw09} and 
 Horn DLs \cite{krh07}.

The fundamental decision problem related to OBDA is the following:
is a Boolean
conjunctive query (CQ) entailed from a KB?
This problem has long been known to be undecidable for general existential rules
(this follows e.g., from \cite{beeri-vardi81}). Consequently, 
a significant amount of research has been devoted to the issue of 
finding decidable subclasses with a good expressivity / tractability
tradeoff.
It has been observed that most exhibited decidable classes  fulfill one of the
three following properties \cite{blms11}:
finiteness of a forward chaining mechanism known
as the chase, which allows inferences  to be materialized 
 in the data
(we call such rule sets \emph{finite expansion sets}, \emph{fes});
finiteness of  query rewriting into a union of CQs, which
allows to the rules to be  compiled into the query (\emph{finite unification sets, fus});
tree-like shape of the possibly infinite chase,
 which allows one to finitely encode the result
(\emph{bounded-treewidth sets, bts)}. The class of \emph{guarded } rules \cite{cgk08}
is a well-known class satisfying the latter property.

Known decidable classes are able to express many useful properties of binary relations
(e.g., 
inverses / symmetry) 
but most of them lack the ability to define a frequently required property,
namely \emph{transitivity}. This limits their applicability in key application
areas like biology and medicine, for which transitivity of binary relations (especially the ubiquitous `part of' relation)
is an essential modelling construct.
The importance of transitivity has long been acknowledged in the DL community \cite{DBLP:journals/logcom/HorrocksS99,DBLP:conf/ecai/Sattler00},
and many DLs support transitive binary relations. 
While adding transitivity to a DL often does not increase the complexity of CQ entailment (see \cite{DBLP:conf/ijcai/EiterLOS09} for some exceptions),
it is known to complicate the design of query answering procedures \cite{DBLP:journals/jair/GlimmLHS08,DBLP:conf/aaai/EiterOSTX12}, 
due to the fact that it destroys
the tree structure of the chase upon which DL reasoning algorithms typically rely.
In contrast to the extensive literature on transitivity in DLs, 
rather  little is known about the compatibility of transitivity with decidable classes
of existential rules.\footnote{Since the conference version of this paper, the compatibility of 
transitivity with frontier-one rules (a \emph{bts} class that has close connections to Horn DLs) has been shown \cite{amarilli16}.}
A notable exception is the result  of \cite{gpt13} on
the incompatibility of transitivity with guarded rules, which holds
even under strong syntactic restrictions (see Section 3).

In this paper, we 
investigate the issue of whether transitivity can be safely added to some
well-known rule classes and provide three main contributions.
First, we show that
adding transitivity 
to one of the simplest \emph{fes} and \emph{fus} classes (namely \emph{aGRD}) makes atomic CQ entailment undecidable (Theorem \ref{th-agrd}).
We also provide (un)decidability results for the classes \emph{swa} and \emph{msa} extended with transitivity, which yields a complete picture of the impact of transitivity on known \emph{fes} classes.
%
Second, we investigate the impact of adding transitivity to \emph{linear} rules,
a natural subclass of guarded rules which generalizes the well-known description logic DL-Lite$_R$.
We introduce a query rewriting procedure that is sound and complete for all
rule sets consisting of linear and transitivity rules (Theorem  2),
and which is guaranteed to terminate 
for atomic CQs, and for arbitrary CQs if the rule set contains only unary and binary predicates or satisfies a certain safety condition, yielding
decidability for these cases (Theorem 3).
Third, based on a careful analysis
of our algorithm, 
we establish upper and lower bounds on the combined and data
complexities of query entailment for the identified decidable cases (Theorems \ref{thm:data} and \ref{thm:combined}). 
While the addition of transitivity leads to an increase in the combined complexity of atomic CQ entailment (which rises from \PSpace-complete to ExpTime-complete),
the obtained data complexity is the lowest that could be expected,
namely, \NL-complete. 

\section{Preliminaries}

\label{sec:prelims}

A \emph{term} is a variable or a constant. An \emph{atom} is of the form $p(t_1, \ldots, t_k)$ where $p$ is a predicate of arity
$k$, and the $t_i$ are terms. We consider \emph{(unions of) Boolean conjunctive queries ((U)CQs)},
which are (disjunctions of) existentially closed conjunctions of atoms.
Note however that all results can be extended to non-Boolean queries.
A CQ is often viewed as the \emph{set} of atoms.
An \emph{atomic CQ} is a CQ consisting of a single atom.
A \emph{fact} is an atom without variables. A \emph{fact base} is a finite set of facts.

An \emph{existential rule} (hereafter abbreviated to  \emph{rule})  $R$
 is a formula
$ \forall \vec x \forall \vec y(B[ \vec x,\vec y] \rightarrow \exists \vec z ~H[\vec x,\vec z])$
where $B$  and $H$ are conjunctions of atoms, resp. called the \emph{body}
and the \emph{head} of $R$. The variables  $\vec z$ (resp. $\vec x$),
which occur only in $H$ (resp. in $B$ and in $H$) are called  \emph{existential} variables (resp. \emph{frontier} variables).
Hereafter, we  omit quantifiers in rules and simply denote a rule by  $B \rightarrow H$.
For example, $p(x,y) \rightarrow p(x,z)$
stands for $\forall x \forall y (p(x,y) \rightarrow \exists z (p(x,z)))$.
A \emph{knowledge base} (KB) $\mathcal K = (\F, \mathcal R)$ consists of 
a fact base $\F$ and a finite set of 
rules $\mathcal R$.
The \emph{(atomic) CQ entailment} problem consists in deciding 
 whether $\mathcal K \models Q$, where  
$\mathcal K$ is a KB viewed as a first-order theory, $Q$ is {an (atomic) CQ, and $\models$ denotes standard
 logical entailment.

Query rewriting relies on a unification operation between the query and a rule head.
Care must be taken when handling existential variables:
when a term $t$ of the query is unified with an existential variable in a rule head, all atoms in which $t$ occurs must also be part of the unification, otherwise the result is unsound.  Thus, 
instead of unifying one query atom 
 at a time, 
we have to unify subsets (``pieces'') of the query, hence the notion of a piece-unifier defined next.
 A partition $P$ of a set of terms is said to be
 \emph{admissible} if no class of $P$ contains two constants; 
  a substitution $\sigma$ can be obtained from $P$ by selecting an element $e_i$ in each class $C_i$ of  $P$, with priority given to constants, and setting $\sigma(t) = e_i$ for all $t \in C_i$.
A \emph{piece-unifier} of a CQ $Q$ with a rule $R = B \rightarrow H$ is a triple $\mu = (Q',H', P_{\mu})$, where $Q' \subseteq Q$, $H' \subseteq H$  and $P_{\mu}$ is an admissible partition on the terms of $Q' \cup H'$
such that:
\begin{enumerate}
\item $\sigma(H') = \sigma(Q')$, where $\sigma$ is any substitution obtained from $P_{\mu}$;
\item   if a class $C_i$ in $P_{\mu}$ contains an existential variable,
	then the other terms in $C_i$ are variables from $Q'$ that do not occur in $(Q \setminus Q')$.
\end{enumerate}	
We say that $Q'$ is a \emph{piece} (and $\mu$ is a single-piece unifier) if there is no non-empty subset $Q''$ of $Q'$ such that $P_{\mu}$ restricted to $Q''$ satisfies
Condition~2.
From now on, we consider only single-piece unifiers, which we simply call \emph{unifiers}.
The \emph{(direct) rewriting} of $Q$ with $R$ w.r.t.\ $\mu$ is $\sigma(Q \setminus Q') \cup \sigma(B)$ where
$\sigma$ is a substitution obtained from $P_{\mu}$. A \emph{rewriting }of $Q$ w.r.t.\ a set of rules
$\mathcal R$ is a CQ obtained by a sequence $Q = Q_0, \ldots, Q_n$ ($n \geq 0$) where for all $i > 0$,
$Q_i$ is a direct rewriting of $Q_{i-1}$ with a rule from $\mathcal R$. For any fact base $\F$, we have
that $\F, \mathcal R \models Q$ iff there is a rewriting $Q_n$ of $Q$ w.r.t.\ $\mathcal R$ such
that $\F \models Q_n$ \cite{klmt13}.

\begin{example}
Consider the rule $R= h(x) \rightarrow p(x,y)$
and CQ $Q = q(u) \wedge p(u,v) \wedge p(w,v) \wedge r(w)$.
If $p(u,v)$ is unified with $p(x,y)$, then $v$ is unified with the existential variable $y$, hence $p(w,v)$ has to be part of the unifier.
The triple $\mu = (\{p(u,v), p(w,v)\}, \{p(x,y)\}, \{\{x,u,w\}\{v,y\}\}$ is a unifier.
The direct rewriting of $Q$ associated with the substitution
$\sigma = \{x \mapsto u, w \mapsto u, y \mapsto v\}$ is $h(u) \wedge q(u) \wedge r(u)$.
\end{example}

We now define some important kinds of rule sets (see e.g., \cite{mugnier11} for an overview). A model $M$ of a KB $\mathcal K$ is called
\emph{universal} if for any CQ $Q$, $M$ is a model of $Q$ iff $\mathcal K \models Q$.  A rule set $\mathcal R$
is a \emph{finite expansion set (fes)} if any KB $(\F,\mathcal R)$ has a finite universal model. It
is a \emph{bounded-treewidth set (bts)} if any KB $(\F,\mathcal R)$ has a (possibly infinite)
universal model of bounded treewidth. It is a \emph{finite unification set} \emph{(fus)} if, for any CQ $Q$, there is a finite set $S$
of rewritings of $Q$ w.r.t.\ $\mathcal R$ such that for any fact base $\F$, we have $\F, \mathcal R \models
Q$ iff there is $Q' \in S$ such that $\F \models Q'$.

 A \emph{Datalog} rule has no existential variables, hence Datalog rule sets are \emph{fes}.
 Other kinds of \emph{fes} rules are considered in the next section. A rule $B \rightarrow H$ is \emph{guarded} if there is an atom in $B$ that contains all the variables occurring in $B$. Guarded rules are \emph{bts}. A \emph{linear} rule has a body composed of a single atom and does not contain any constant. Linear rules are guarded, hence \emph{bts}, moreover they are \emph{fus}.

 As a special case of Datalog rules, we have \emph{transitivity rules}, of the form $p(x,y) \wedge p(y,z) \rightarrow p(x,z)$, which are not \emph{fus}. A predicate is called \emph{transitive} if it appears in a transitivity rule.
If $\mathcal{C}$ is a class of rule sets, $\mathcal C$+$\fun{trans}$ denotes the class obtained by adding transitivity rules to rule sets from $\mathcal C$.

\section{Combining \emph{fes / fus} and Transitivity}

A large hierarchy of \emph{fes} classes is known (see e.g., \cite{chkk13} for an overview). Beside Datalog, the simplest classes are \emph{weakly-acyclic (wa)} sets, which prevent cyclic propagation of existential variables along predicate positions, and \emph{aGRD} (acyclic Graph of Rule Dependencies) sets, which prevent cyclic dependencies between rules. Datalog is generalized by \emph{wa}, while \emph{wa} and \emph{aGRD} are incomparable. Some classes generalize \emph{wa} by a finer analysis of variable propagation (up to \emph{super-weakly acyclic (swa)} sets). Most other \emph{fes} classes generalize both \emph{wa} and \emph{aGRD}.

We show that \emph{aGRD}+\fun{trans} is undecidable even for atomic CQs. Since \emph{aGRD} is both \emph{fes} and \emph{fus}, this negative result also transfers to \emph{fes}+\fun{trans} and \emph{fus}+\fun{trans}.

\begin{theorem}\label{th-agrd}
Atomic CQ entailment over \emph{aGRD}+\fun{trans} KBs is undecidable, even with a single transitivity rule.
\end{theorem}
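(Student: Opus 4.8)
The plan is to encode an undecidable problem using existential rules that form an acyclic graph of rule dependencies, together with a single transitivity rule. The natural target is the halting problem for Turing machines, or equivalently some standard undecidable tiling or word problem. The key tension is this: aGRD forbids cyclic dependencies among the existential rules, so the rules alone cannot drive an unbounded computation. The only source of ``unboundedness'' available is the transitivity rule, which can chain together arbitrarily many atoms over a single transitive predicate. So the whole difficulty is to offload the iteration/recursion of the undecidable computation onto the transitive predicate, while keeping all the genuinely existential (or even just Datalog) rules arranged so their dependency graph is acyclic.

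First I would fix the problem to reduce from. A clean choice is a semi-Thue system / word rewriting system, or the Post correspondence problem, since a transitive relation $p$ naturally models a linear order or a ``reachability along a sequence'' structure: a chain $p(a_0,a_1), p(a_1,a_2), \ldots$ closed under transitivity gives me $p(a_i,a_j)$ for all $i<j$, which is exactly a betweenness/ordering predicate I can use to lay out the tape cells or the sequence of configurations. I would use the atomic query $Q$ (a single atom) to ask whether a distinguished ``accepting'' or ``halting'' atom is derivable, so that $\mathcal K \models Q$ iff the simulated machine halts. Because the query must be atomic, I have to make sure the final YES-condition is detectable by the derivation of one atom rather than by matching a conjunctive pattern; this typically means propagating a flag predicate up through the construction.

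The heart of the construction is to design the linear/existential rules so that (i) each rule either introduces fresh structure that is never fed back into the body of a rule it (transitively) depends on, guaranteeing acyclicity of the rule-dependency graph, and (ii) the transitivity rule on $p$ supplies the only feedback loop, carrying the ``state'' of the computation forward. I expect the main obstacle to be precisely the reconciliation of these two demands: I must verify that, once transitivity-derived atoms are allowed to match rule bodies, no dependency cycle is created among the \emph{non-transitivity} rules in the sense used by aGRD, since aGRD is defined only on the existential rules and the transitivity rule is added separately. Concretely, the danger is that a rule producing a $p$-atom depends on a rule consuming a $p$-atom, which itself (through transitivity) can be triggered by the first rule's output, reintroducing a cycle through the back door. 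I would resolve this by routing the computation so that the genuinely recursive part lives entirely inside the transitive predicate and its transitive closure, with the remaining rules acting as a one-pass decoder of the chain.

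Finally I would prove the two directions of correctness. For soundness, I would argue that any halting computation of the machine yields a finite sequence of rule applications and transitivity applications producing the target atom, hence $\mathcal K \models Q$. For completeness, I would show that any derivation of the target atom from $\mathcal K$ can be read back as an accepting run, using the fact (stated in the preliminaries) that $\F,\mathcal R \models Q$ iff some rewriting of $Q$ maps into $\F$, or dually via the universal-model/chase characterization, to rule out spurious derivations that do not correspond to genuine computations. The single-transitivity-rule refinement follows by checking that the construction uses transitivity on only one predicate, so the stated strengthening is immediate once the base reduction is in place.
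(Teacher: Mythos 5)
There is a genuine gap: you have correctly identified the central tension (the rule-dependency graph must be acyclic, yet the computation must loop through the transitivity rule), but your proposal contains no mechanism that actually resolves it, and the resolution is the entire content of the proof. Your plan to have "the genuinely recursive part live entirely inside the transitive predicate, with the remaining rules acting as a one-pass decoder of the chain" does not work as stated: transitive closure of a finite set of $p$-atoms adds only finitely many atoms over existing terms, so it cannot by itself create an unbounded chain; the unbounded chain of configurations must be generated by the existential rules firing unboundedly often, which requires exactly the producer--consumer feedback (a rule consuming $p$-atoms whose output eventually feeds a rule producing $p$-atoms) that you flagged as the danger. A "one-pass decoder" cannot both generate the next configuration from the current one and be fired again on its own output without creating a dependency edge, so your construction either fails aGRD or fails to simulate unbounded computation.

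The paper resolves this with a concrete gadget that your proposal lacks, and it reduces from atomic CQ entailment with \emph{general existential rules} rather than from a Turing machine, so no machine simulation is needed at all. Each arbitrary rule $R_i = B_i \rightarrow H_i$ is split into a producer $R^1_i = B_i \rightarrow a_i(\vec x,z_1) \wedge p(z_1,z_2) \wedge p(z_2,z_3) \wedge b_i(z_3)$ and a consumer $R^2_i = a_i(\vec x,z_1) \wedge p(z_1,z_2) \wedge b_i(z_2) \rightarrow H_i$, where $a_i,b_i$ are fresh per-rule predicates and the $z_j$ are existential. The point is a \emph{path-length mismatch}: the producer emits a $p$-path of length $2$ between its markers, the consumer demands a $p$-path of length $1$. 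Because the $z_j$ are existential variables, any piece-unifier of $R^2_i$'s body with $R^1_i$'s head must absorb the whole connected piece, and the length mismatch then forces two distinct existential variables into one class, which is forbidden; hence there is no edge $R^1_i \to R^2_i$, all GRD edges go from consumers to producers, and the graph is acyclic. The single transitivity rule for $p$ is precisely what repairs the mismatch at chase time (deriving $p(z_1,z_3)$ from the length-$2$ path), so the pair $R^1_i, R^2_i$ jointly emulates $R_i$ and one gets $\F,\R \models Q$ iff $\F,\R^a \cup \{R^t\} \models Q$. Without this gadget (or an equivalent one), the soundness/completeness argument you sketch in your final paragraph has nothing to be run on.
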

\begin{proof}
The proof is by reduction from atomic CQ entailment with general existential rules (which is known to be undecidable).
Let $\R$ be a set of rules. We first translate $\R$ into an  \emph{ aGRD} set of rules $\R^a$.
We consider the following new predicates: $p$ (which will be the transitive predicate) and, for each rule $R_i \in \R$, predicates $a_i$ and $b_i$.
Each rule $R_i = B_i \rightarrow H_i$ is translated into the two following rules:
\begin{itemize}
	\item $R^{1}_i = B_i \rightarrow a_i(\x,z_1) \wedge p(z_1,z_2) \wedge p(z_2,z_3) \wedge b_i(z_3)$
	\item $R^{2}_i = a_i(\x,z_1) \wedge p(z_1,z_2) \wedge b_i(z_2) \rightarrow H_i$
\end{itemize}
where $z_1$,$z_2$ and $z_3$ are existential variables and $\x$ are the variables in $B_i$.

Let $\R^a = \{R^1_i,R^2_i~|~R_i \in \R\}$, and let $GRD(\R^a)$ be the graph of rule dependencies of $\R^a$, defined as follows: the nodes of $GRD(\R^a)$ are in bijection with $\R^a$, and there is an edge from a node $R_1$ to a node $R_2$ if the rule $R_2$ depends on the rule $R_1$, i.e., if there is a piece-unifier of the body of $R_2$ (seen as a CQ) with the head of $R_1$.

We check that for any $R_i \in \R$,  $R^1_i$ has no outgoing edge and $R^2_i$ has no incoming edge (indeed the $z_j$ are existential variables).
Hence, in $GRD(\R^a)$ all (directed) paths are of length less or equal to one. It follows that
$GRD(\R^a)$ has no cycle, i.e., $\R^a$ is aGRD.

Let  $R^{t}$ be the rule stating that $p$ is transitive.
 Let  $\R' = \R^a \cup \{R^t\}$. The idea is that $R^t$ allows to ``connect" rules in $\R^a$ that correspond to the same rule in $\R$.
For any fact base $\mathcal F$ (on the original vocabulary), for any sequence of rule applications from $\mathcal F$ using rules in $\R$, one can build a sequence of rule applications from $\mathcal F$ using rules from $\R'$, and reciprocally, such that both sequences produce the same fact base (restricted to atoms on the original vocabulary).
Hence, for any $\mathcal F$ and $Q$ (on the original vocabulary),  we have that $\mathcal F, \R \models Q$ iff $\mathcal F, \R' \models Q$.
\end{proof}

\begin{corollary}
Atomic CQ entailment over \emph{fus}+\fun{trans} or \emph{fes}+\fun{trans} KBs is undecidable.
\end{corollary}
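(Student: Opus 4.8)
The plan is to
derive the Corollary directly from Theorem~\ref{th-agrd} by exploiting the
class containments that were already established in the preliminaries.
Recall from Section~\ref{sec:prelims} that \emph{aGRD} rule sets are simultaneously
\emph{fes} and \emph{fus}: they are \emph{fes} because the acyclicity of the
graph of rule dependencies bounds the chase, and they are \emph{fus} because a
finite rewriting set exists. Consequently every \emph{aGRD}+\fun{trans} KB is, by
definition, also a \emph{fes}+\fun{trans} KB and a \emph{fus}+\fun{trans} KB.
The whole point of the Corollary is that undecidability of a problem over a
\emph{subclass} immediately entails undecidability over any \emph{superclass}.

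First I would make precise the reduction that is implicit here: the identity
map sends every instance of the atomic CQ entailment problem over
\emph{aGRD}+\fun{trans} to an instance of the same problem over
\emph{fus}+\fun{trans} (respectively \emph{fes}+\fun{trans}), since the input KB
$\R' = \R^a \cup \{R^t\}$ constructed in the proof of Theorem~\ref{th-agrd} is
already a valid input in the larger class. The answer to the entailment question
is unchanged because the semantics $\models$ does not depend on which syntactic
class we regard the rule set as belonging to. Thus this identity reduction is
trivially computable and answer-preserving.

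Then I would invoke Theorem~\ref{th-agrd}, which states that atomic CQ
entailment over \emph{aGRD}+\fun{trans} KBs is undecidable. Composing the
undecidable source problem with the identity reduction yields undecidability of
atomic CQ entailment over \emph{fus}+\fun{trans} and over \emph{fes}+\fun{trans},
establishing the Corollary. Formally, if either larger problem were decidable,
restricting any decision procedure to \emph{aGRD}+\fun{trans} inputs would decide
the problem of Theorem~\ref{th-agrd}, a contradiction.

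There is no real obstacle here: the entire content lies in the containments
$\emph{aGRD} \subseteq \emph{fes}$ and $\emph{aGRD} \subseteq \emph{fus}$, both of
which are recorded in the preliminaries, together with the observation that
adding transitivity rules preserves these containments (since $\fun{trans}$ is
applied uniformly to all three classes). The only thing I would be slightly
careful about is to note explicitly that the single transitivity rule $R^t$ used
in Theorem~\ref{th-agrd} does not disturb the \emph{fus}/\emph{fes} status of the
base class in the sense required, i.e., that $\R^a$ itself is \emph{aGRD} and
hence both \emph{fes} and \emph{fus}, so that $\R'$ genuinely witnesses an
instance in each of the two target classes.
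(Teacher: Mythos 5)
Your proposal is correct and matches the paper's own reasoning exactly: the paper derives this corollary from Theorem~\ref{th-agrd} via the remark that \emph{aGRD} is both \emph{fes} and \emph{fus}, so undecidability over the subclass \emph{aGRD}+\fun{trans} lifts immediately to the superclasses \emph{fes}+\fun{trans} and \emph{fus}+\fun{trans}. Your explicit mention of the identity reduction and the preservation of class containment under adding transitivity rules just spells out what the paper leaves implicit.
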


Most known \emph{fes} classes that do not generalize \emph{aGRD} range between Datalog and \emph{swa} (inclusive). It can be easily checked that any \emph{swa} set of rules remains \emph{swa} when transitivity rules are added (and this is actually true for all known classes between Datalog and \emph{swa}).

\begin{proposition}
The classes \emph{swa} and  \emph{swa+\fun{trans}} coincide. 
Hence, \emph{swa+\fun{trans}} is decidable.
\end{proposition}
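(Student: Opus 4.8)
The plan is to prove the nontrivial inclusion, namely that adding a set $T$ of transitivity rules to a \emph{swa} set $\R$ yields a set $\R \cup T$ that is again \emph{swa}; the reverse inclusion is immediate, since any $\R \in$ \emph{swa} can be written as $\R = (\R \setminus T) \cup T$ with $T$ the transitivity rules occurring in $\R$, and \emph{swa} is closed under taking subsets (deleting a rule can only remove nodes and edges from the dependency analysis, and a subgraph of an acyclic graph is acyclic). Recall that super-weak acyclicity is defined by tracking, through a unification-refined \emph{move} relation on predicate positions $\langle p, i\rangle$, how an existentially generated null can propagate through the chase, and by declaring a set \emph{swa} exactly when the induced dependency relation among its existential variables is acyclic, i.e. when no null can propagate so as to re-trigger the creation of a null from the same existential position.

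First I would record two structural facts about a transitivity rule $R^t = p(x,y) \wedge p(y,z) \rightarrow p(x,z)$. (i) $R^t$ is a Datalog rule: it has no existential variable, so it contributes no new node to the existential-variable dependency graph and can never be the target of a re-triggering. (ii) $R^t$ is \emph{position-preserving}: the only frontier variables occurring in its head are $x$, which occurs in the body solely at $\langle p,1\rangle$ and in the head solely at $\langle p,1\rangle$, and $z$, which occurs in the body solely at $\langle p,2\rangle$ and in the head solely at $\langle p,2\rangle$; the join variable $y$ is projected out and does not appear in the head at all. Hence the move relation contributed by $R^t$ consists only of the self-loops $\langle p,1\rangle \to \langle p,1\rangle$ and $\langle p,2\rangle \to \langle p,2\rangle$: a null carried through $R^t$ remains at exactly the position it came from, and no cross-position move $\langle p,1\rangle \to \langle p,2\rangle$ (or its reverse) is ever added.

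The heart of the argument is then that such self-loops cannot create any new dependency. Adding self-loops to a directed graph does not alter reachability between distinct positions: any propagation path that uses a transitivity self-loop can be shortened by deleting that self-loop, producing a path already present in the move relation of $\R$ and passing through exactly the same null-creating steps. Consequently the null produced by any existential variable of $\R$ reaches precisely the same body positions of precisely the same rules whether or not the rules of $T$ are present, so the existential-variable dependency relation of $\R \cup T$ coincides with that of $\R$. Since $\R$ is \emph{swa}, this relation is acyclic, and therefore $\R \cup T$ is \emph{swa} as well. Decidability is then inherited for free: \emph{swa} sets are \emph{fes}, so every KB over a \emph{swa+\fun{trans}} set has a finite universal model, and (U)CQ entailment is decidable by evaluating the query against this finite model (equivalently, by running the terminating chase).

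The step I expect to require the most care is verifying fact (ii) against the precise, unification-refined definition of the move relation rather than the coarser weak-acyclicity edges, since it is there that one must rule out the join variable $y$ contributing a hidden propagation. But this reduces to the single observation that $y$ never reaches the head, so $R^t$ can transport no null to a position it did not already occupy; this is exactly what makes the check routine, and it is also what explains why the same reasoning goes through for every class lying between Datalog and \emph{swa}.
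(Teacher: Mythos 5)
Your strategy is the same as the paper's own one-line proof (which asserts that transitivity rules ``do not create new edges in the SWA position graph''), but there is a genuine gap, and it sits exactly at the step you flagged as delicate and then dismissed. Super-weak acyclicity is not a position-level notion: its move relation is defined on \emph{places} (occurrences of atoms inside rules), and propagation between places is mediated by \emph{unification} against the Skolemized heads \cite{chkk13}. At that granularity, the claim that the transitivity rule contributes ``only self-loops'' that ``can be deleted from any propagation path'' fails. The culprit is not the join variable $y$ (you are right that it is harmless) but the head $p(x,z)$ itself: being the head of a Datalog rule, it consists of two independent plain variables, so it unifies with \emph{every} $p$-atom in a rule body --- including atoms such as $p(u,u)$, or atoms whose unification with the original null-carrying head atoms is blocked by the occurs check, by distinct Skolem functions, or by constants. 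A null that passes through the transitivity rule re-emerges with that structural information erased and can reach body places it could not reach in $\mathcal{R}$ alone; the shortened path you appeal to need not exist.

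Concretely, let $R \colon p(u,u) \rightarrow \exists z \exists w \,(p(z,w) \wedge p(w,z) \wedge s(u))$. The set $\{R\}$ is \emph{swa}: the Skolemized head atoms $p(f(u),g(u))$ and $p(g(u),f(u))$ do not unify with the body atom $p(u',u')$ (that would force $f(u)=g(u)$), so the move sets of $z$ and $w$ never reach the frontier variable $u$ and the trigger relation is empty. Now add the transitivity rule $T$ for $p$: each head atom of $R$ unifies with a body atom of $T$, the null exits at $T$'s generic head $p(x,v)$, which \emph{does} unify with $p(u',u')$, and the trigger $R \sqsubset R$ appears, so $\{R,T\}$ is not \emph{swa}. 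This is not an artifact of one formalization: the Skolem chase of $\{R,T\}$ on $\{p(a,a)\}$ is genuinely infinite ($p(a,a)$ yields $p(f(a),g(a)),p(g(a),f(a))$; transitivity yields the diagonal atoms $p(f(a),f(a)),p(g(a),g(a))$, which re-fire $R$, and so on), and \emph{swa} guarantees Skolem chase termination, so no sound variant of the definition can accept $\{R,T\}$. Your argument (and the paper's one-liner) is sound for genuinely position-based notions such as weak or joint acyclicity, where transitivity really does add only self-loops; but for \emph{swa} as defined by Marnette and in \cite{chkk13} the preservation claim fails on this example, so the gap is not a missing verification but the crux, and it cannot be repaired by a more careful run of the same position-level reasoning.
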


\begin{proof}
It suffices to note that the addition of transitivity rules does not create new edges in the `SWA position graph'  from \cite{chkk13}.
\end{proof}
\smallskip

The only remaining \emph{fes} class that is not covered by the preceding results, namely Model
Summarizing Acyclicity (\emph{msa}) from \cite{chkk13},
can be shown to be
incompatible with transitivity rules:


\begin{proposition}\label{msa}
Atomic CQ entailment over \emph{msa}+\fun{trans} KBs is
	undecidable.
\end{proposition}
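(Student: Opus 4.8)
The plan is to reduce atomic CQ entailment with general (undecidable) existential rules to atomic CQ entailment over \emph{msa}+\fun{trans} KBs, reusing \emph{verbatim} the construction of Theorem~\ref{th-agrd}: an arbitrary rule set $\R$ is translated into $\R^a = \{R^1_i, R^2_i \mid R_i \in \R\}$, and we take $\R' = \R^a \cup \{R^t\}$ with $R^t$ the single transitivity rule for $p$. The equivalence $\F, \R \models Q$ iff $\F, \R' \models Q$ (for all $\F$ and $Q$ over the original vocabulary) transfers unchanged, since the argument in that proof never uses aGRD-ness. Thus the whole statement reduces to one new obligation: showing that $\R^a$ is \emph{msa}. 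Crucially, this does \emph{not} follow from $\R^a$ being aGRD, since the latter does not entail the former.

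To establish it I would recall the criterion of \cite{chkk13}: replace every existential variable of $\R^a$ by a fresh constant (one per existential variable of each rule), record for every rule firing a functional-dependency edge from each frontier term to each freshly introduced constant, close the edge relation under transitivity, and run the terminating (Datalog) chase on the critical instance $I^*$ containing $q(*,\dots,*)$ for every predicate $q$; then $\R^a$ is \emph{msa} iff no constant ends up depending on itself. The obstacle here is that the rule-dependency reasoning behind aGRD does \emph{not} carry over: over $I^*$ every rule body is satisfied by the all-$*$ atoms, so in particular each $R^2_i$ now fires, whereas the graph-of-rule-dependencies analysis treats $R^2_i$ as having no incoming edge. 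A careful direct analysis of the dependency relation is therefore required.

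The heart of the proof is that the very structural feature giving aGRD also blocks every dependency cycle. Every recorded edge points from an already-available term (starting with $*$) to a newly introduced fresh constant, so a self-dependency can only come from a \emph{cyclic} creation chain, which would have to traverse the link $R^1_i \to R^2_i$. But the fresh atoms produced by $R^1_i$, namely $a_i(\cdot,c_1)\wedge p(c_1,c_2)\wedge p(c_2,c_3)\wedge b_i(c_3)$, can never match the body $a_i(\x,z_1)\wedge p(z_1,z_2)\wedge b_i(z_2)$ of $R^2_i$ on fresh constants: this would require simultaneously the edge $p(c_1,c_3)$ and a $b_i$-atom on the intermediate term $c_2$, neither of which is derivable without $R^t$, which is excluded from the \emph{msa} test. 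Hence $R^2_i$ fires only on tuples entirely over $*$, so its existential constants depend on $*$ alone, and although they may feed some $R^1_j$ (via $H_i$ matching $B_j$), the resulting constants of $R^1_j$ are again dead ends for $R^2_j$. All creation chains therefore respect the acyclic $R^2_i \to R^1_j$ pattern, no constant depends on itself, and $\R^a$ is \emph{msa}. Combined with the entailment equivalence, this yields undecidability. The one delicate point — also the main obstacle — is precisely this dependency analysis: one must verify that, even though the \emph{msa} encoding merges all firings of a rule into a single set of constants, the existential constants of $R^1_i$ remain genuine dead ends for $R^2_i$ and so cannot close a cycle.
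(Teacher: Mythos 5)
Your proposal is correct, but it follows a genuinely different route from the paper's own proof. The paper does \emph{not} reuse the construction of Theorem~\ref{th-agrd}: it gives a fresh reduction in which each original rule $B \rightarrow H$ becomes $B' \rightarrow H'$, where $B'$ adds a loop guard $p(t,t)$ for every term $t$ of $B$, and $H'$ adds $p(z,x_z) \wedge p(x_z,z)$ for every existential variable $z$ (with $x_z$ fresh), while the fact base is augmented with $p(t,a_t)$ and $p(a_t,t)$ for every term $t$ of $\F$. With this gadget the \emph{msa} verification is nearly immediate: in the Datalog chase of the critical instance, a freshly created constant $c$ only ever acquires the atoms $p(c,c')$ and $p(c',c)$, never the loop $p(c,c)$, so no rule body (each of which demands a loop on every body term) can map any term to $c$; hence there are no dependency edges out of fresh constants at all, and acyclicity is trivial. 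With transitivity restored, the two-cycles collapse into loops and the original chase is simulated. Your route instead keeps the rule set $\R^a$ of Theorem~\ref{th-agrd} and leaves the fact base untouched, inherits the entailment equivalence (you are right that that part of the proof never uses acyclicity), and pays for the reuse with a more delicate \emph{msa} check, which you carry out correctly: the body of $R^2_i$ can never be matched with $z_1$ sent to a fresh constant, so $R^2_i$ fires only with its frontier on $*$, the dependency edges run only from $*$ and from $R^2$-constants into $R^1$-constants, and the latter are sinks, so no cycle can form. What your approach buys is economy: a single gadget witnesses both incompatibilities and exhibits one rule set that is simultaneously \emph{aGRD} and \emph{msa} yet becomes undecidable with one transitivity rule. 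What the paper's approach buys is a self-contained proof whose acyclicity argument is simpler and more robust (no rule whatsoever can re-fire on fresh constants, rather than a case analysis of which matches fail), at the cost of also having to modify the fact base.

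One small slip in your write-up: you say that matching $R^2_i$'s body on fresh constants ``would require simultaneously the edge $p(c_1,c_3)$ and a $b_i$-atom on $c_2$.'' It requires one \emph{or} the other: either $p(c_1,c_3)$ (to pair with the existing $b_i(c_3)$) or $b_i(c_2)$ (to pair with the existing $p(c_1,c_2)$). Since neither is derivable without the transitivity rule, which is excluded from the \emph{msa} test, your conclusion is unaffected.
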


%
%
%
%
%
%
%

It follows that the effect of transitivity on the currently known \emph{fes} landscape is now quite clear, which is not the case for  \emph{fus} classes.
In the  following, we focus on a well-known \emph{fus} class, namely \emph{linear} rules.
We show by means of a query rewriting procedure that query entailment over \emph{linear}+\fun{trans} KBs is decidable 
in the case of atomic CQs, as well as for general CQs if we place a
minor safety condition on the rule set. 
Such an outcome was not obvious in the light of 
existing results.
Indeed,
atomic CQ entailment over \emph{guarded}+\fun{trans} rules was recently shown
undecidable, even when restricted to rule sets that belong to the two-variable fragment, use only unary and binary predicates, and contain only two transitive predicates
 \cite{gpt13}. Moreover, inclusion
dependencies (a subclass of linear rules) and functional
dependencies (a kind of rule known to destroy tree
structures, as do transitivity rules) are known to be
incompatible \cite{chandra-vardi85}.

\section{Linear Rules and Transitivity}

To obtain finite representations of sets of rewritings involving transitive predicates, we define a framework based on the notion of  \emph{pattern}.
\subsection{Framework} To each transitive predicate we assign a \emph{pattern name}.
Each pattern name has an associated \emph{pattern definition }
$P := a_1 | \dots | a_k$, where each $a_i$ is an atom that contains the special variables $\#1$ and $\#2$.
A \emph{pattern} is either a \emph{standard pattern} $P[t_1,t_2]$ or a \emph{repeatable pattern} $P^+[t_1,t_2]$,
where $P$ is a pattern name and $t_1$ and $t_2$ are terms.
A \emph{ \patquery{}} (UPCQ) is a pair $(\UQ, \UP)$, where
$\UQ$ is a disjunction of conjunctions of atoms and patterns,
and $\UP$ is a set of pattern definitions that gives
a unique definition to each pattern name occurring in $\UQ$. A
{\em \conjpatquery} (PCQ) $\Q$ is a UPCQ without disjunction.
For the sake of simplicity, we will often denote a (U)PCQ by its first
component $\UQ$, leaving the pattern definitions implicit.

An \emph{instantiation} $T$ of a UPCQ $(\UQ, \UP)$ is a node-labelled tree
that satisfies the following conditions:
\begin{itemize}
\item the root of $T$ is labelled by $\Q \in \UQ$; 
\item 
the children of the root are labelled by the patterns and atoms occurring in $\Q$; 
\item 
 each node that is labelled by a repeatable pattern $P^+[t_1,t_2]$ may be 
expanded into $k \geq 1$ children
 labelled  respectively by $P[t_1,x_1]$, $P[x_1,x_2], \dots$, $P[x_{k-1},t_2]$,
 where the $x_i$ are fresh variables;
\item 
each node labelled by a standard pattern $P[t_1,t_2]$ may be expanded into a single child whose label is
obtained from an atom $a$ in the
pattern definition of $P$ in $\mathbb{P}$ by substituting $\#1$ (resp. $\#2$) by $t_1$ (resp. $t_2$), and freshly renaming
the other variables.
\end{itemize}
For brevity, we will often refer to nodes in an instantiation using their labels.

The {\em instance} associated with an instantiation 
is the PCQ obtained
by taking the conjunction of the labels of its leaves.
An instance of a UPCQ is an instance associated with one of its instantiations.
An instance is called \emph{full} if it does not contain any pattern, and we denote by
$full(\UQ,\UP)$ the set of  full instances of $(\UQ,\UP)$.

\begin{figure}[t]

\begin{center}
\begin{tikzpicture}[node distance=1.41cm,every node/.style={transform shape},scale=0.75]
	\node (f0) at(-3.75,0.25) {};
	\node (f1) at(-3.75,-2.25) {};
	\node (f2) at(3.75,0.25) {};
	\node (f3) at(3.75,-2.25) {};

	\node (q)   at(0,0)       {$\Q$};
	\node (p1)  at(-3,-1)   {$P_1^+[a,z]$};
	\node (p2)  at(0,-1)      {$P_2^+[z,b]$};
	\node (s)   at(3,-1)    {$s_1(a,b)$};

	\node (p11) at(-3,-2)   {$P_1[a,z]$};

	\node (p21) at(-1,-2)  {$P_2[z,x_1]$};
	\node (p22) at(1,-2)   {$P_2[x_1,b]$};

	\node[node distance=1cm] (s1) [below of=p11] {$s_2(a,y_0,z)$};
	\node[node distance=1cm] (s2) [below of=p21] {$s_2(x_1,y_1,z)$};
	\node[node distance=1cm] (s3) [below of=p22] {$p_2(x_1,b)$};

	\path[draw=black] (q) -- (p1);
	\path[draw=black] (q) -- (p2);
	\path[draw=black] (q) -- (s);

	\path[draw=black] (p1) -- (p11);
	\path[draw=black] (p2) -- (p21);
	\path[draw=black] (p2) -- (p22);

	\path[draw=black] (p11) -- (s1);
	\path[draw=black] (p21) -- (s2);
	\path[draw=black] (p22) -- (s3);

	\path[densely dotted,draw=black] (f0.center) --
	(f1.center) -- (f3.center) -- (f2.center) -- (f0.center);
\end{tikzpicture}
\end{center}

	\caption{Instantiations of a PCQ}
	\label{fig:ex_instantiation}
\end{figure}
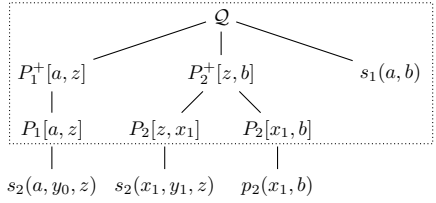

\begin{example}
\label{ex:instantiation}
	Let $(\Q, \UP)$ be a PCQ, where $\Q = P_1^+[a,z] \wedge P_2^+[z,b] \wedge s_1(a,b)$
	and $\UP$ contains the pattern definitions:
	$P_1 := p_1(\#1,\#2) | s_2(\#1,y,\#2)$ and $P_2 := p_2(\#1,\#2) |$ $ s_2(\#2,y,\#1)$.


	Two instantiations of $\Q$  are displayed in Figure \ref{fig:ex_instantiation}. The smaller instantiation (within the dotted lines)
	gives rise to the (non-full) instance $Q_1 = P_1[a,z] \wedge P_2[z,x_1] \wedge P_2[x_1,b] \wedge s_1(a,b)$. By expanding
	the three nodes labelled by patterns according to the definitions in $\UP$, we may obtain the larger instantation (occupying
	the entire figure), whose associated instance $Q_2 = s_2(a,y_0,z) \wedge s_2(x_1,y_1,z) \wedge
	p_2(x_1,b) \wedge s_1(a,b)$ is a full instance for $(\Q,\UP)$.

\end{example}

A UPCQ $(\UQ,\UP)$ can be translated into a set of Datalog rules $\Pi_\UP$ and a UCQ $Q_\UQ$ as follows.
For each 
definition $P := a_1(\t_1) | \dots | a_k(\t_k)$ in $\UP$,
we create the transitivity rule
$p^+(x,y) \wedge p^+(y,z) \rightarrow p^+(x,z)$
and the rules
$a_i(\t_i) \rightarrow p^+(\#1,\#2)$ ($1 \leq i \leq k$).
The UCQ $Q_\UQ$ is obtained from $\UQ$ by replacing each repeatable pattern $P^+[t_1,t_2]$
by the atom $p^+(t_1,t_2)$.
Observe that $\Pi_\UP$ is non-recursive except for the transitivity rules.
The next proposition states that $(\Pi_\UP,Q_\UQ)$ can be seen as a finite representation of the set of
full instances of
$(\UQ,\UP)$.

\begin{proposition}\label{datalog}
	Let $\F$ be a fact base and $(\UQ,\UP)$ be a UPCQ. 
	Then $\F,\Pi_\UP \models Q_\UQ$ iff $\F \models Q$ for some  $Q \in full(\UQ,\UP)$.
\end{proposition}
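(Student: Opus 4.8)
The plan is to prove both directions of the biconditional by relating the Datalog derivation of $Q_\UQ$ from $\F,\Pi_\UP$ to the tree-structured instantiations defining $full(\UQ,\UP)$. The central observation is that the transitivity rules together with the rules $a_i(\t_i) \rightarrow p^+(\#1,\#2)$ exactly encode, at the level of forward chaining, the recursive expansion of a repeatable pattern $P^+[t_1,t_2]$: a derived atom $p^+(s,t)$ holds in the chase of $\F$ under $\Pi_\UP$ precisely when there is a chain $s = c_0, c_1, \dots, c_m = t$ such that each consecutive pair $p^+(c_{j-1},c_j)$ is produced by some rule $a_{i}(\t_{i}) \rightarrow p^+(\#1,\#2)$, i.e.\ $\F$ contains (a homomorphic image of) the atom $a_{i}$ with its $\#1,\#2$ positions mapped to $c_{j-1},c_j$. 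This chain is exactly the data that an instantiation tree records when it expands $P^+[t_1,t_2]$ into $k=m$ children $P[c_0,c_1],\dots,P[c_{m-1},c_m]$ and then expands each standard-pattern node by choosing the corresponding disjunct $a_i$ of the definition.

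First I would fix the non-recursive structure of $\Pi_\UP$: since $\Pi_\UP$ is non-recursive except for the transitivity rules, and the $p^+$ predicates are fresh (they do not appear in any rule body other than the transitivity rules themselves), the only way to derive a $p^+$ atom is to first fire some $a_i(\t_i)\rightarrow p^+(\#1,\#2)$ on facts of $\F$ and then close off under transitivity. I would make this precise with a standard induction on the length of a derivation, establishing the chain-characterization of derived $p^+$-atoms stated above. This lets me decompose a successful derivation witnessing $\F,\Pi_\UP \models Q_\UQ$ into (i) a homomorphism from some disjunct of $Q_\UQ$ into the chase, which selects a disjunct $\Q$ of $\UQ$ and maps its ordinary atoms and its $p^+(t_1,t_2)$ atoms into the chase, and (ii) for each such $p^+(t_1,t_2)$ atom, a witnessing chain as above.

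For the forward direction ($\F,\Pi_\UP \models Q_\UQ$ implies $\F\models Q$ for some full instance $Q$), I would use the data from the decomposition to build an instantiation tree $T$ of $(\UQ,\UP)$: start from the chosen disjunct $\Q$ at the root, expand each repeatable pattern $P^+[t_1,t_2]$ according to the length of its witnessing chain, and expand each resulting standard pattern node by the disjunct $a_i$ that fired in the chain, instantiating the freshly renamed variables by the matching terms from $\F$. The associated full instance $Q$ then maps homomorphically into $\F$ by construction, so $\F\models Q$. For the converse, given a full instance $Q \in full(\UQ,\UP)$ with $\F\models Q$ via a homomorphism $h$, I would read the defining instantiation tree backwards: each standard-pattern leaf corresponds to an application of some $a_i(\t_i)\rightarrow p^+(\#1,\#2)$, and the chain of siblings under a repeatable-pattern node yields, via transitivity, the atom $p^+(t_1,t_2)$; composing these with $h$ gives a homomorphism of $Q_\UQ$ into the chase of $\F$ under $\Pi_\UP$, whence $\F,\Pi_\UP\models Q_\UQ$ by the soundness/completeness of forward chaining.

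The main obstacle I anticipate is handling the fresh-renaming and variable-sharing bookkeeping correctly in both translations. On the instantiation side, each standard-pattern expansion freshly renames the non-distinguished variables of $a_i$ (for example the $y$ in $s_2(\#1,y,\#2)$), whereas on the Datalog side the rule $a_i \rightarrow p^+(\#1,\#2)$ simply projects those variables away and the chase introduces its own witnesses; I must check that these two ways of introducing ``local'' variables agree up to homomorphism, so that neither direction is spuriously strengthened or weakened. A related subtlety is the direction/orientation of variables within a pattern disjunct (note $P_2$ in the example uses $s_2(\#2,y,\#1)$, reversing the roles of $\#1$ and $\#2$), which must be tracked consistently when converting a chain into a sequence of fired rules and vice versa. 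Once the chain-characterization of $p^+$-atoms is established and the renaming correspondence is pinned down, both directions follow by matching up the tree structure of instantiations with the layered structure of the non-recursive-plus-transitive program.
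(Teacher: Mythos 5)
Your proposal is correct and follows essentially the same route as the paper's proof: both directions are established by translating between instantiation trees and Datalog derivations, and your explicit chain-characterization lemma (every derived $p^+(s,t)$ decomposes into a path of atoms each produced by a rule $a_i(\t_i) \rightarrow p^+(\#1,\#2)$ firing on $\F$) is exactly the fact the paper asserts inline when it extracts a $p^+$-path whose edges were not obtained by transitivity. The bookkeeping subtleties you flag (fresh renaming of non-distinguished variables and the orientation of $\#1,\#2$) are handled in the paper precisely as you anticipate, via the homomorphisms $\pi'$ and $\pi''$ attached to the rule applications.
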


A \emph{unifier} $\mu = (Q',H,P_u)$ of a \emph{PCQ} 
is a unifier of one of its (possibly non-full) instances 
such that $Q'$ 
is a set of (usual) atoms. We distinguish two types of unifiers (internal and external),
defined next.

Let $T$ be an instantiation, $Q$ be its associated instance,
and $\mu = (Q',H,P_u)$ be a unifier of $Q$.
Assume $T$ contains a repeatable pattern 
$P^+[t_1,t_2]$ that is expanded into
$P[u_0,u_1],$ $ \ldots, P[u_k,u_{k+1}]$, where $u_0=t_1$ and $u_{k+1}=t_2$.
We call 
$P[u_i,u_{i+1}]$ \emph{relevant for $\mu$} if it is expanded
into an atom from $Q'$.
Because we consider only single-piece unifiers (cf.\ Sec.\ \ref{sec:prelims}), it follows that 
if such relevant patterns exist, 
they form a sequence $P[u_i,u_{i+1}]$,$P[u_{i+1},u_{i+2}]$,$\dots, P[u_{j-1},u_{j}]$.
Terms $u_i$ and $u_j$ are called {\em external} to $P^+[t_1,t_2]$ w.r.t.\ $\mu$;
the other terms occurring in the sequence are called {\em internal}.
The unifier $\mu$ is said to be {\em internal} if all atoms from $Q'$
are expanded from a single repeatable pattern, and no external
terms are unified together or with an existential variable;
otherwise $\mu$ is called \emph{external}.

\begin{example}
\label{ex:instantiation_unifiers}
	Consider 
	$Q_2$ from 
	Example \ref{ex:instantiation} and the rules $R_1 = s_1(x',y') \rightarrow p_2(x',y')$ and
	$R_2 = s_1(x',y') \rightarrow s_2(x',y',z')$. 
	The unifier of $Q_2$ with $R_1$ that unifies $p_2(x_1,b)$ with $p_2(x',y')$ is internal. The unifier of $Q_2$ with $R_2$ that unifies $\{s_2(a,y_0,z), s_2(x_1,y_1,z)\}$ with $s_2(x',y',z')$ is external because it involves two repeatable patterns.
\end{example}

\subsection{Overview of the Algorithm}

Our query rewriting algorithm takes as input a CQ $Q$ and a set of rules $\R = \R_L \cup \R_T$,
with $\R_L$ a set of linear rules and $\R_T$ a set of transitivity rules,
and produces a finite set of Datalog rules 
and a (possibly infinite) set of CQs.
The main steps of the algorithm are outlined below.
\\[1mm]
	\textbf{Step 1} For each predicate $p$ that appears in $\R_T$, 
	create a pattern definition
	$P := p(\#1,\#2)$, where $P$ is a fresh pattern name. Call the resulting set of 
	definitions $\UP_0$.\\[1mm]
	\textbf{Step 2} Let $\R_L^+$ be the result of considering all of the rule bodies in $\R_L$ and replacing every body atom
	$p(t_1,t_2)$ such that $p$ is a transitive predicate by the repeatable pattern $P^+[t_1,t_2]$.\\[1mm]
	\textbf{Step 3 (Internal rewriting)}
	Initialize $\UP$ to $\UP_0$ and repeat the following operation until fixpoint:
	select a pattern definition $P \in \UP$ and a rule $R \in \R_L^+$ and
	compute the direct rewriting of $\UP$ w.r.t.\ $P$ and $R$. \\[1mm]
	\textbf{Step 4} Replace in $Q$ all atoms $p(t_1,t_2)$ such that $p$ is a transitive predicate
	by the repeatable pattern $P^+[t_1,t_2]$, and denote the result by $\Q^+$. \\[1mm]
	\textbf{Step 5 (External rewriting)} Initialize $\UQ$ to $\{\Q^+\}$ and repeat the following operation until fixpoint:
	choose $\Q_i \in \UQ$, compute a direct rewriting of $\Q_i$ w.r.t.\ $\UP$ and a rule from $\R_L^+$, 
	and add the result to $\UQ$ (except if it is isomorphic to some $\Q_j \in \UQ$).  \\[1mm]
	\textbf{Step 6} Let $\Pi_\UP$ be the Datalog translation of $\UP$, and let $Q_\UQ$ be the (possibly infinite) set of CQs
obtained by 
replacing each repeatable pattern $P^+[t_1,t_2]$ in $\UQ$
	by 
	$p^+(t_1,t_2)$. \\[1.25mm]
The rewriting process in Step~3 is always guaranteed to terminate,
and in Section \ref{sec:prop}, we 
propose a modification to Step 5 that ensures termination
	and formulate sufficient conditions that preserve completeness.
When $\Q_\UQ$ is finite (i.e., it is a UCQ),
 it can be evaluated over the fact base saturated by $\Pi_\UP$, or alternatively,
 translated into a set of Datalog rules, which can be combined with $\Pi_\UP$
and passed to a Datalog engine for evaluation.
Observe 
 that the construction of $\Pi_\UP$ is query-independent 
and can 
be executed as a preprocessing step.

\section{Rewriting Steps in Detail}

A PCQ that contains a repeatable pattern has an infinite number of instances.
Instead of considering all 
instances of a PCQ, we consider a finite set of 
`instances of interest' 
for a given rule.
Such instances will be used for both the internal and external rewriting steps. 
\vspace*{-.4cm}
\paragraph{Instances of interest}
Consider a PCQ $(\Q,\UP)$ and a rule
$R \in \R_L^+$ with head predicate $p$.
	 The {\em instantiations of interest} of $(\Q,\UP)$ w.r.t.\ $R$ are
  	constructed as follows.
	 For each repeatable pattern $P_i^+[t_1,t_2]$ in $\Q$, let
	$a^i_1, \dots, a^i_{n_i}$ be the atoms in the definition of
	$P_i$ with predicate $p$.
	 If $n_i>0$, then expand $P_i^+[t_1,t_2]$ 
	  into $k$ standard patterns,
	where $0 < k \leq min(arity(p),n_i)+2$,  and expand
	each of these standard patterns in turn  
	into some $a^i_\ell$.
	An \emph{ instance of interest} is the instance associated with an instantiation
	of interest.

\begin{example}
\label{ex:interest}
	Reconsider $\Q$, $Q_2$ and $R_2$ from Examples \ref{ex:instantiation} and \ref{ex:instantiation_unifiers}.
	$Q_2$ is not an instance of interest of $\Q$ w.r.t.\ $R_2$ since
	 $P_2[x_1,b]$ is expanded into $p(\#1,\#2)$ whereas the head predicate of $R_2$ is $s_2$.
	If we expand $P_2[x_1,b]$ with $s_2(\#2,y,\#1)$ instead, we obtain the instance of interest
	$Q_3 = s_2(a,y_0,z) \wedge s_2(x_1,y_1,z) \wedge s_2(b,y_2,x_1) \wedge s_1(a,b)$.

\end{example}

We next show that the set of unifiers computed on the
instances of interest of a PCQ `captures' the set of unifiers computed on all of its instances.

\begin{proposition}
\label{prop:inst-interest_unifiers}
	Let $(\Q,\UP)$ be a PCQ and $R \in \R_L^+$.
	For every instance $Q$ of $(\Q,\UP)$ and unifier $\mu$ of $Q$ with $R$,
	 there exist an instance of interest $Q'$ of $(\Q,\UP)$ w.r.t.\ $R$ and a unifier
	$\mu'$ of $Q'$ with $R$ such that $\mu'$ is more general
	\footnote{
		Consider unifiers $\mu=(Q,H,P_{\mu})$ and $\mu'=(Q',H,P_{\mu'})$,
		and let $\sigma$ (resp. $\sigma'$) be a substitution associated
		with $P_{\mu}$ (resp. $P_{\mu'}$).
		We say that
			$\mu'$ is \emph{more general} than $\mu$
		if there is a substitution $h$ from $\sigma'(Q')$ to $\sigma(Q)$ such that $h(\sigma'(Q')) \subseteq \sigma(Q)$ (i.e., $h$ is a homomorphism from $\sigma'(Q')$ to $\sigma(Q)$), and
		for all terms $x$ and $y$ in $Q' \cup H$, if $\sigma'(x) = \sigma'(y)$ then
			$\sigma(h(x)) = \sigma(h(y))$.}
	than $\mu$.
\end{proposition}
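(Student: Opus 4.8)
The plan is to start from an arbitrary instance $Q$ (with instantiation $T$) together with a unifier $\mu=(Q',H',P_\mu)$ of $Q$ with $R$, and to transform $T$ into an instantiation of interest whose associated unifier $\mu'$ is more general than $\mu$. The whole argument is local to each repeatable pattern. Since $\mu$ is a single-piece unifier, the standard patterns of $T$ that are relevant for $\mu$ form, inside each repeatable pattern $P_i^+[t_1,t_2]$, a single contiguous block $P[u_0,u_1],\dots,P[u_{m-1},u_m]$ whose endpoints $u_0,u_m$ are the external terms, and are therefore (by the external/internal analysis and Condition~2 of piece-unification) forbidden from being merged with an existential variable or with one another. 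I would treat each such block independently, leaving untouched the ordinary atoms of the piece and the repeatable patterns whose definition contains no $p$-atom (those contribute nothing to a unifier with head predicate $p$).

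The core step is to bound the length of each relevant block. Because $R$ has head predicate $p$ and $\mu$ is single-piece, every relevant atom $b_\ell$ — the expansion of $P[u_{\ell-1},u_\ell]$ into one of the $p$-atoms $a^i_1,\dots,a^i_{n_i}$ of the definition of $P_i$ — satisfies $\sigma(b_\ell)=\sigma(H')$, a single atom with at most $arity(p)$ distinct arguments. Each junction term $u_\ell$ occupies the image of $\#2$ in $b_\ell$ and the image of $\#1$ in $b_{\ell+1}$, so its value $\sigma(u_\ell)$ is simultaneously one of the $\le arity(p)$ arguments of $\sigma(H')$ and the $\#2$-image of one of the $\le n_i$ available pattern atoms. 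Hence at most $min(arity(p),n_i)$ distinct internal-junction values can occur. Whenever two internal junctions $u_\ell,u_{\ell'}$ ($\ell<\ell'$) receive the same $\sigma$-value, I would excise the loop between them, identifying $u_\ell$ with $u_{\ell'}$ and deleting $b_{\ell+1},\dots,b_{\ell'}$; the shortened sequence is still a legal chain of standard patterns that all unify to $\sigma(H')$. Iterating removes every repetition, so the relevant block, together with the (at most two) boundary standard patterns needed to carry its external endpoints in atoms lying outside the new piece, expands $P_i^+[t_1,t_2]$ into at most $min(arity(p),n_i)+2$ standard patterns, exactly as the definition of instance of interest allows.

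It then remains to assemble the instance of interest $Q'$ from these shortened blocks and the untouched ordinary atoms, to read off the unifier $\mu'$, and to exhibit the witnessing homomorphism of the `more general' definition, namely a map $h$ sending $\sigma'$ applied to the piece of $\mu'$ into $\sigma$ applied to the piece of $\mu$. I would define $h$ piecewise: the identity on the images of ordinary atoms, and on each shortened block the map sending every surviving junction term to the $\sigma$-value of the original junction it represents and each freshly renamed variable of the chosen pattern atom to its original counterpart. By construction $h$ carries $\sigma'(H')$ to $\sigma(H')$ and the new piece into the old one; and since loop-excision only identifies terms that already shared a $\sigma$-value, the compatibility clause of the footnote (if $\sigma'(x)=\sigma'(y)$ then $\sigma(h(x))=\sigma(h(y))$) is satisfied.

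The main obstacle, and where I would spend the most care, is the loop-excision step. I must verify that identifying two equal-valued junctions leaves a genuine instantiation of a single repeatable pattern (the chain stays connected and each surviving link is still an honest expansion of some $a^i_\ell$), that $\mu'$ remains an admissible single-piece unifier (no class of $P_{\mu'}$ mixes two constants, and the external endpoints keep their external status precisely because the boundary patterns place them in out-of-piece atoms), and that these guarantees hold simultaneously across all blocks and at the shared endpoints $t_1,t_2$. The surrounding bookkeeping — including pinning down the exact additive constant $+2$ — is routine once the junction-value pigeonhole is in place, but it is the part most prone to soundness slips and off-by-one errors.
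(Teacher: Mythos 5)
Your overall strategy is essentially the paper's: use single-piece-ness to see that the relevant expansions inside each repeatable pattern form one contiguous block whose internal junctions are unified with arguments of the head atom, pigeonhole those junction values against the $\leq arity(p)$ arguments of $\sigma(H')$ and the $\leq n_i$ available pattern atoms, excise loops between equal-valued junctions (the paper's ``matching pairs'', which it detects via equality of positions rather than of values), and then transport the partition through an injective renaming to obtain $\mu'$ together with the homomorphism $h$ witnessing that $\mu'$ is more general than $\mu$. So the architecture is sound and matches the paper's proof.

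There are, however, two problems, one of which is a genuine gap. First, your blanket claim that the external terms $u_0,u_m$ ``are forbidden from being merged with an existential variable or with one another'' is false: Condition~2 forbids this only when the external term occurs in some atom \emph{outside} the piece. Proposition~2 must also cover external unifiers in which an endpoint of the block is unified with an existential variable (these are exactly the unifiers behind the ``problematic'' rewritings treated later in the paper), so your construction must not presuppose this. Second --- the real gap --- your count does not close. Your pigeonhole gives at most $\min(arity(p),n_i)$ distinct internal-junction values, hence after excision a block of up to $\min(arity(p),n_i)+1$ atoms; adding the two boundary standard patterns you introduce to keep the endpoints external yields up to $\min(arity(p),n_i)+3$ standard patterns under one repeatable pattern, one more than the definition of instance of interest allows. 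The paper avoids this entirely by adding \emph{no} boundary patterns: it truncates the expansion so that the external terms become $t_1$ and $t_2$ themselves, i.e., it expands the pattern into $P[t_1{=}x'_s,x'_{s+1}],\dots,P[x'_{e-1},x'_e{=}t_2]$. This is sound because a class of the new partition that contains no existential variable is unconstrained by Condition~2, so externality need not be preserved syntactically for $\mu'$ to remain a valid, more general unifier. If you insist on keeping boundary patterns, you must sharpen the pigeonhole: a boundary pattern is only needed on a side whose external term occurs outside the piece, hence is not unified with an existential variable, hence its $\sigma$-value lies in a different class of $P_\mu$ from every internal junction value; this buys back exactly one unit per boundary and restores the bound $\min(arity(p),n_i)+2$. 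As written, the instance you construct need not be an instance of interest, so the proof is incomplete at precisely the point you dismissed as routine bookkeeping.
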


\subsection{Internal Rewriting} 
\label{sec:internal-rw}

Rewriting w.r.t.\ internal unifiers
is performed `inside' a repeatable pattern,
independently of the
other patterns and atoms in the query.
We will therefore
handle this kind of rewriting in a query-independent manner
by updating  the
pattern definitions. 

To find all internal unifiers between instances under a repeatable pattern $P^+[t_1,t_2]$ and a
rule head $H = p(\ldots)$, one may think that it is sufficient to consider each
atom $a_i$ in $P$'s definition 
and check if there is an internal unifier of $a_i$ with
$H$.
Indeed, this suffices when predicates are binary: in an internal unifier,
$t_1$ and $t_2$ are unified with distinct variables, which cannot be
existential; thus, the terms in $H$ are 
frontier variables, and a piece 
must consist of a single atom.
If the arity of $p$ is 
greater than 2, the other variables can
be existential,
so it may be possible to unify a path of atoms from $P$'s definition, but not a single such atom 
(see next example). 

\begin{example} \label{ex-internal}
Let $R = s(x,y) \rightarrow r(z_1,x,z_2,y)$ and
$P := r(\#2,\#1,x_0,x_1) | $
$r(\#1,x_2,\#2,x_3) | $
$r(x_4, x_5, \#1, \#2)$.
There is no internal unifier of an atom in $P$'s definition
with $H = r(z_1,x,z_2,y)$. However, if we expand $P^+[t_1,t_2]$ into a path
$P[t_1,y_0] P[y_0,y_1] P[y_1,t_2]$, then expand the $i$th 
pattern of this path into
the $i$th atom in $P$'s definition, 
the resulting instance 
can be unified with $H$ by an
internal unifier (with the partition
$\{\{z_1, y_0, x_4\}, $ $\{x, t_1, x_2, x_5\},$ $ \{z_2,x_0,y_1\},$ $ \{y,x_1,x_3, t_2\}\}$).
\end{example}

Fortunately, we can bound the length of paths to be considered using both the arity of $p$ and
the number of atoms with predicate $p$ in $P$'s definition, allowing us to use 
instances of interest 
 introduced
earlier. 


	A {\em direct rewriting} $\UP'$
	of a set of pattern definitions $\UP$ w.r.t.\ a pattern name $P$ and a rule $R = B \rightarrow H \in \R_L^+$
	is the set of pattern definitions obtained from $\UP$
	by updating $P$'s definition as follows.
	We consider the PCQ $(\Q = P^+[x,y], \UP)$.
	We select an instance of interest $Q$ of $\Q$ w.r.t.\ $R$,
	an internal unifier $\mu$ of $Q$ with $H$,
	and a substitution $\sigma$ associated with $\mu$ that preserves the external terms.
	Let $B'$ be obtained from $\sigma(B)$ by substituting the first (resp.\ second) external term
	by $\#1$ (resp.\ $\#2$). If $B'$ is an atom, we add it to $P$'s definition. Otherwise, $B'$ is a repeatable
	pattern of the form $S^+[\#1, \#2]$ or $S^+[\#2, \#1]$. Let $f$ be a bijection on $\{\#1,\#2\}$:
	if $B'$ is of the form $S^+[\#1, \#2]$, $f$ is the identity, otherwise  $f$ permutes $\#1$ and $\#2$.
	For all $s_i$ in the definition of $S$, we add $f(s_i)$ to $P$'s definition.

	Note that the addition of an atom to a pattern definition is up to \emph{isomorphism}
	(with $\#1$ and $\#2$ 
	treated as distinguished variables,
	i.e., 
	$\#1$ and  $\#2$ are mapped to themselves).
	
\begin{example}
	Reconsider $R$, $\mu$, and the definition of $P$ from Example \ref{ex-internal}.
	Performing a direct rewriting w.r.t.\ $P$ using 
	$R$ and 
	$\mu$
	results in adding the atom $s(\#1,\#2)$ to $P$'s definition.
\end{example}


\begin{proposition}
\label{prop:drw-internal}
	Let $(\Q,\UP)$ be a PCQ where 
	$P^+[t_1,t_2]$ occurs and $R \in \R_L^+$.
	For any instance $Q$ of $(\Q,\UP)$,
	any classical direct rewriting $\Q'$ of $Q$ with $R$
	w.r.t.\ to a unifier internal to $P^+[t_1,t_2]$,
	and any $Q' \in full(\Q',\UP)$,
	there exists a direct rewriting $\UP'$ of $\UP$ w.r.t.\ $P$ and $R$
	such that $(\Q,\UP')$ has a full instance that is
	 isomorphic to $Q'$. 
\end{proposition}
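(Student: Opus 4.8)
The plan is to show that the query-independent definition update (the direct rewriting of $\UP$) captures, at the level of full instances, every internal rewriting performed directly on an instance. First I would fix an instantiation $T$ of $(\Q,\UP)$ whose associated instance is $Q$, and examine how the distinguished pattern is expanded, say $P^+[t_1,t_2]$ into a chain $P[u_0,u_1], \dots, P[u_k,u_{k+1}]$ with $u_0=t_1$ and $u_{k+1}=t_2$. Since $\mu$ is internal to $P^+[t_1,t_2]$ and we use only single-piece unifiers, the unified atoms forming $Q'$ are exactly the expansions of a contiguous sub-chain $P[u_i,u_{i+1}], \dots, P[u_{j-1},u_j]$, whose endpoints $u_i,u_j$ are the external terms and which, by the definition of \emph{internal}, are neither merged together nor unified with an existential variable. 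The target is then a single definition update $\UP'$ together with a full instance of $(\Q,\UP')$ in which this sub-chain has been collapsed to a single standard pattern $P[u_i,u_j]$ (or a short chain) expanded through the newly added definition atoms, while everything outside this window is expanded exactly as in $T$.

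Because the direct rewriting of $\UP$ is defined only on \emph{instances of interest} of $(P^+[x,y],\UP)$, whereas $\mu$ may act on an arbitrarily long sub-chain, the second step is a reduction via Proposition \ref{prop:inst-interest_unifiers}: applied to the sub-PCQ $(P^+[x,y],\UP)$ with $u_i,u_j$ playing the roles of the endpoints $x,y$, and to the unifier that $\mu$ induces on the relevant sub-chain, it yields an instance of interest together with a unifier $\bar\mu$ (with an associated substitution $\bar\sigma$ preserving the external terms) that is more general than $\mu$. This triple is a legitimate input to the definition-update operation, so it determines a concrete direct rewriting $\UP'$ of $\UP$ w.r.t.\ $P$ and $R$, and this is the $\UP'$ I will use.

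Next I would analyse the material this update adds to $P$'s definition and reconstruct $Q'$ from it, following the case split of the definition. Writing $B'$ for $\bar\sigma(B)$ with its two external endpoints abstracted to $\#1$ and $\#2$, either (a) $B'$ is a single atom, added directly to $P$, or (b) $B'$ is a repeatable pattern $S^+[\#1,\#2]$ or $S^+[\#2,\#1]$, in which case each $f(s_i)$ taken from $S$'s definition is added, with the bijection $f$ recording the orientation. I then construct an instantiation of $(\Q,\UP')$ that expands $P^+[t_1,t_2]$ as in $T$ outside the window; inside the window it uses a single pattern $P[u_i,u_j]$ expanded through $B'$ in case (a), or a chain of standard $P$-patterns expanded through the newly added $f(s_i)$ in case (b), choosing the chain length and intermediate variables to mirror the way $S^+$ is expanded inside $Q'$. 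Every atom outside the window, and every other pattern of the query, is expanded exactly as in the instantiation underlying $Q'$. Atom by atom, the resulting full instance then coincides with $Q'$: the untouched part matches the full expansion of $\sigma(Q\setminus Q')$, and the window reproduces the material that $\sigma(B)$ contributed to $\Q'$, once the freshly introduced variables are named as described below.

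The hard part will be reconciling the \emph{more general} guarantee of Proposition \ref{prop:inst-interest_unifiers} with the \emph{isomorphic} conclusion demanded here, and handling the repeatable-pattern case (b). For the first point I would exploit the freedom in choosing the instantiation of $(\Q,\UP')$: since expanding a standard pattern renames the non-distinguished variables freshly, I can name these fresh variables to agree with the corresponding variables of $Q'$, so that the homomorphism witnessing ``more general'' becomes a bijection, and hence an isomorphism, on the reconstructed instance. For case (b) the delicate bookkeeping is to match the orientation through $f$ and to align the length of the $S$-chain appearing in $Q'$ with a chain of added $P$-definition atoms, so that the two expansions are literally isomorphic rather than merely homomorphically comparable; I expect this orientation-and-length matching to be the only genuinely technical ingredient, the single-atom case (a) being routine.
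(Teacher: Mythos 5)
Your proposal is correct and takes essentially the same approach as the paper's proof: you reduce the arbitrary internal unifier to one on an instance of interest via Proposition \ref{prop:inst-interest_unifiers} (the paper applies that proposition to the whole query and then restricts to the relevant sub-chain, while you apply it to the sub-PCQ $P^+[x,y]$ directly---an inessential variant), let that unifier define $\UP'$, and then splice together a full instance of $(\Q,\UP')$ that copies $Q'$ outside the unified window and fills the window with standard patterns expanded through the newly added definition atoms, with chain length and orientation (via $f$) matching the expansion of $\sigma(B)$ in $Q'$. Your resolution of the ``more general vs.\ isomorphic'' tension---using the freedom in naming the fresh variables introduced by pattern expansion---is exactly the role played by the explicit bijection $\pi$ (and the expansions into $\pi^{-1}(a_i)$, $\pi^{-1}(a'_i)$) in the paper's proof.
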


\subsection{External Rewriting} 
\label{sec:external-rw}

	Let $(\Q,\UP)$ be a PCQ, $R \in \R_L^+$,
	$T$ be an instantiation of interest of $(\Q,\UP)$ w.r.t.\ $R$,
	$Q$ be the instance associated with $T$, and
	$\mu = (Q',H,P)$ be an external unifier of $Q$ with $R$.
	From  this, several direct rewritings of $\Q$ w.r.t.\ $\UP$ and $R$
	can be built.
	First, we mark all leaves in $T$ that either have the root as parent
	or are labelled by an atom in $Q'$, 
	 and we restrict $T$ to branches
	leading to a marked leaf.
	Then, we consider each instantiation $T_i$
	that can be obtained from $\Q$
	as follows.
	Replace each repeatable pattern $P^+[t_1,t_2]$ that has
	$k > 0$ children in $T$ by one of the following:
	\begin{itemize}
	\item[(i)] $P^+[t_1,x_1] \wedge X[x_1,x_2] \wedge P^+[x_2,t_2]$,
	\item[(ii)] $P^+[t_1,x_1] \wedge X[x_1,t_2]$,
	\item[(iii)] $X[t_1,x_2] \wedge P^+[x_2,t_2]$,
	\item[(iv)] $X[t_1,t_2]$,
	\end{itemize}	
	where $X[v_1,v_2]$ is a sequence $P[v_1,y_{1}], P[y_1,y_2], \dots,$ $ P[y_{k-1},v_2]$.
	Let $\Q_i$ be the instance associated with $T_i$.

	If $P[x',y']$ in $T$ has child $a(\t)$, 
	expand in $T_i$ the corresponding $P[x,y]$ into $a(\rho(\t))$ where
	$\rho = \{ x' \mapsto x, y' \mapsto y\}$.
	If $\mu' = (\rho(Q'),H,\rho(P))$ is still a unifier of $\Q_i$ with $H$,
	we say that $\Q_i$ is a {\em minimally-unifiable instance} of $\Q$ w.r.t.\ $\mu$.
	In this case,  $\Q'_i = \mu'(Q_i) \setminus \mu'(H) \cup \mu'(B)$ is a
	{\em direct rewriting} of $\Q$ w.r.t.\ $\UP$ and $R$.

\begin{example}
\label{ex:external}
	Reconsider
	$Q_3$ and $R_2$, 
	and  let 
	$\mu = (\{s_2(a,y_0,z),$ $s_2(x_1,y_1,z) \},$ $H_2
	,$ $\{\{a,x_1,x'\},$ $\{y_0,y_1,y'\},$ $\{z,z'\}\})$. 
	First, we consider the instantiation that generated $Q_3$, and we remove the node labelled $P_2[x_1,b]$ and its child
	$s_2(b,y_2,x_1)$, since the latter atom is not involved in $\mu$.
	Next will replace the repeatable pattern $P_1^+[a,z]$ (resp.\ $P_2^+[z,b]$)
	using one of the four cases detailed above,
	and we check whether $\mu'$ (obtained from $\mu$) is still a unifier.
	We obtain in this manner the following minimally-unifiable instances:
	$\Q_1 =$ $P_1^+[a,x_2] \wedge$ $s_2(x_2,y_0,z) \wedge$ $s_2(x_1,y_1,z) \wedge$ $P_2^+[x_1,b] \wedge$ $s_1(a,b)$,
	and
	$\Q_2~=~s_2(a,y_0,z)~\wedge~s_2(x_1,y_1,z)$ $\wedge~P_2^+[x_1,b] \wedge s_1(a,b)$.
	Finally, we rewrite $\Q_1$ and $\Q_2$ into:
	$\Q'_1 = P_1^+[a,x']$ $\wedge$ $s_1(x',y') \wedge$ $P_2^+[x',b] \wedge$ $s_1(a,b)$
	and
	$\Q'_2 = s_1(a,y') \wedge$ $P_2^+[a,b] \wedge$ $s_1(a,b)$.
\end{example}

\begin{proposition}
\label{prop:drw-pcq}
	Let $(\Q,\UP)$ be a PCQ and $R \in \R_L^+$.
	For every $Q \in full(\Q,\UP)$ and every classical direct rewriting $Q'$ of $Q$ with $R$ w.r.t.\ an external unifier,
	there is a 
	 direct rewriting $\Q'$ of $\Q$ w.r.t.\ $\UP$ and $R$ that has an instance
	isomorphic to $Q'$.
\end{proposition}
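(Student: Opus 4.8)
The plan is to prove Proposition~\ref{prop:drw-pcq} by reducing the claim about a full instance $Q$ to the machinery of instances of interest already established in Proposition~\ref{prop:inst-interest_unifiers}, and then to track the external unifier through the four-way case split that defines a direct rewriting of $(\Q,\UP)$. Concretely, I would start from a full instance $Q \in full(\Q,\UP)$ together with a classical external direct rewriting $Q'$ of $Q$ with $R$ witnessed by an external unifier $\nu=(Q'_0,H,P_\nu)$. The first step is to invoke Proposition~\ref{prop:inst-interest_unifiers} to replace $Q$ and $\nu$ by an instance of interest $Q_I$ of $(\Q,\UP)$ w.r.t.\ $R$ and a unifier $\mu$ of $Q_I$ with $R$ that is \emph{more general} than $\nu$. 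I would first check that $\mu$ is again external: since externality is characterized by atoms of $Q'$ coming from more than one repeatable pattern, or by external terms being identified with each other or with an existential variable, and since $\mu$ is more general than $\nu$, the identifications forced by $\mu$ are a subset of those forced by $\nu$; a short argument shows that $\mu$ cannot collapse into an internal unifier unless $\nu$ already were internal, which it is not.

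Next I would run the external-rewriting construction on the instantiation of interest $T$ underlying $Q_I$ with the unifier $\mu$. Following the definition in Section~\ref{sec:external-rw}, I mark the leaves that are either children of the root or labelled by an atom of the piece, restrict $T$ to branches reaching a marked leaf, and for each repeatable pattern $P^+[t_1,t_2]$ that still has $k>0$ children I select the appropriate one of the four cases (i)--(iv). The choice of case is dictated by whether the external terms $u_i,u_j$ of that pattern w.r.t.\ $\mu$ coincide with the pattern's endpoints $t_1,t_2$: an interior external term on the left calls for a leading $P^+[t_1,x_1]$ (cases (i),(ii)), and symmetrically on the right (cases (i),(iii)), while full consumption of the pattern gives case (iv). With these choices, the resulting instantiation $T_i$ yields an instance $\Q_i$ whose leaves include exactly the atoms of the piece, expanded as $a(\rho(\t))$ via the endpoint-renaming $\rho$, so that $\mu'=(\rho(Q'_0),H,\rho(P_\mu))$ is a unifier and $\Q_i$ is a minimally-unifiable instance. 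The direct rewriting $\Q'=\mu'(\Q_i)\setminus\mu'(H)\cup\mu'(B)$ is then a legitimate direct rewriting of $\Q$ w.r.t.\ $\UP$ and $R$.

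It then remains to exhibit a full instantiation of $(\Q',\UP)$ whose instance is isomorphic to $Q'$. The key observation is that $\Q'$ retains the repeatable patterns $P^+[\cdot,\cdot]$ flanking the consumed segment (the $X[\cdot,\cdot]$ parts of cases (i)--(iv) being exactly the consumed portion that $\mu'$ rewrites), so I would take the instantiation of $(\Q',\UP)$ that expands each remaining $P^+$ exactly as $T$ expanded the corresponding sub-path outside the marked region, and expands each standard pattern back into the atom it carried in $T$. Because $Q'=\sigma(Q\setminus Q'_0)\cup\sigma(B)$ is built from the same unexpanded material plus $\sigma(B)$, and $\Q'$ rebuilds $\sigma(B)$ as $\mu'(B)$ with the endpoint substitution matching $\rho$, the two instances agree up to the renaming of the fresh variables introduced during expansion. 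I would finish by writing down the bijection between these fresh variables explicitly and verifying it is an isomorphism of conjunctions of atoms, using the ``more general'' clause of Proposition~\ref{prop:inst-interest_unifiers} to guarantee that no two terms kept apart in $\Q'$ need to be identified in $Q'$.

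The main obstacle I anticipate is the bookkeeping in the case analysis of the four replacements (i)--(iv), combined with the subtlety that $Q'$ is a full instance while $\Q'$ still carries repeatable patterns: I must ensure that the \emph{particular} expansion of the surviving $P^+$ patterns that reproduces $Q'$ is actually admissible, i.e.\ that it respects the partition $P_\nu$ on the external terms and does not force any identification that $\Q'$ forbids. Handling the boundary cases where an external term coincides with a pattern endpoint (so that one of the flanking $P^+$ pieces is empty and case (ii), (iii), or (iv) applies) is where the argument is most delicate, since there the correspondence between the consumed segment $X[\cdot,\cdot]$ and the atoms of the piece must be matched exactly. I expect the externality-preservation check and the final isomorphism verification to be routine once these boundary choices are pinned down correctly.
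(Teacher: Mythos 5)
Your proposal is correct and follows essentially the same route as the paper's own proof: both pass to an instance of interest with a more general unifier via Proposition~\ref{prop:inst-interest_unifiers}, build a minimally-unifiable instance by a case analysis on which flanking segments of each relevant repeatable pattern survive, perform the direct rewriting, and then recover a full instance isomorphic to $Q'$ by re-expanding the surviving repeatable patterns exactly as they were expanded in the original instantiation, using the ``more general'' homomorphism to conclude the isomorphism. One caution on your externality-preservation check (a point the paper itself glosses over): it should be argued from the concrete construction in the proof of Proposition~\ref{prop:inst-interest_unifiers} --- which pulls back the partition $P_u$ through the injective map $\pi$ and therefore preserves the identification of external terms with each other or with existential variables --- rather than from the abstract ``more general'' relation alone, since a more general unifier makes \emph{fewer} identifications, which by itself would push the unifier toward being internal, not keep it external.
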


\section{Termination and Correctness}\label{sec:prop}

To establish the correctness of the query rewriting algorithm, we utilize
Propositions \ref{datalog}, \ref{prop:drw-internal} and \ref{prop:drw-pcq}.

\begin{theorem} \label{thm:correct}
	Let $Q$ be a CQ, $(\F,\R)$ be a {\ltrans} KB, 
	and ($\Pi_{\UP}$,$Q_{\UQ}$) be the (possibly infinite)
	output of the algorithm. 
	Then: $\F,\R \models Q$ iff $\F,\Pi_{\UP} \models Q'$ for some $Q' \in Q_{\UQ}$.
\end{theorem}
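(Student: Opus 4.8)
The plan is to reduce the statement to the classical theory of piece-rewriting and then to show that the set of full instances of the output UPCQ faithfully represents the (infinite) set of classical rewritings of $Q$ w.r.t.\ $\R$. First I would invoke Proposition \ref{datalog} to reformulate the right-hand side: $\F,\Pi_\UP \models Q'$ for some $Q' \in Q_\UQ$ holds iff $\F \models Q''$ for some full instance $Q'' \in full(\UQ,\UP)$. Combining this with the soundness and completeness of piece-rewriting recalled in Section \ref{sec:prelims} (so that $\F,\R \models Q$ iff $\F \models Q_n$ for some classical rewriting $Q_n$ of $Q$ w.r.t.\ $\R$), the theorem becomes equivalent to the assertion that $\F$ satisfies some full instance of $(\UQ,\UP)$ iff $\F$ satisfies some classical rewriting of $Q$. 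It then suffices to prove the two inclusions below, each up to homomorphism.

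For soundness I would show, by induction on the algorithm's rewriting steps, that every full instance of the final $(\UQ,\UP)$ is (isomorphic to) a classical rewriting of $Q$ w.r.t.\ $\R$. The base case is Step 4: since each transitive query atom $p(t_1,t_2)$ is replaced by a repeatable pattern whose definition is initialized in Step 1 to $p(\#1,\#2)$, the full instances of $(\Q^+,\UP_0)$ are exactly the chains $p(t_1,x_1) \wedge \dots \wedge p(x_{k-1},t_2)$, i.e.\ precisely the rewritings of $Q$ obtained using only transitivity rules. For the inductive step I would read the construction of a direct rewriting of $\UP$ (Section \ref{sec:internal-rw}) and of $\UQ$ (Section \ref{sec:external-rw}) as honest applications of a linear rule: the atom(s) added to a pattern definition are obtained as $\sigma(B)$ for a genuine rule $R = B \to H$ and internal unifier, so expanding a chain position with the new atom replicates a single linear-rule rewriting at that position; and an external direct rewriting is literally $\mu'(Q_i) \setminus \mu'(H) \cup \mu'(B)$. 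Hence every newly produced full instance remains a classical rewriting (a mixed chain being reached by first building the chain via transitivity and then rewriting selected links), and a homomorphism $Q_n \to \F$ yields $\F,\R \models Q$.

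For completeness I would proceed by induction on the length of a classical rewriting sequence $Q = Q_0,\dots,Q_n$, maintaining the invariant that each $Q_i$ admits a homomorphism from some full instance of the current PCQ. A transitivity step only splits a transitive atom and is absorbed by re-expanding the relevant repeatable pattern. A linear step is handled by lifting its unifier to the covering instantiation and classifying it as internal or external; Proposition \ref{prop:inst-interest_unifiers} lets me assume the lifted unifier acts on an instance of interest, after which Proposition \ref{prop:drw-internal} (internal case) or Proposition \ref{prop:drw-pcq} (external case) supplies a direct rewriting of $\UP$ or of $\UQ$ whose full instances cover $Q_i$. Since Steps 3 and 5 compute these rewritings to fixpoint, the required covering instance appears in the final output, and composing the covering homomorphism with $Q_n \to \F$ gives $\F \models Q''$.

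The hard part will be the ordering of the two phases: the algorithm saturates the pattern definitions (Step 3) completely before performing any external rewriting (Step 5), whereas a classical sequence interleaves internal and external steps freely. I would justify the separation by arguing that internal rewriting is \emph{query-independent} — it depends only on $\UP$ and $\R_L^+$, not on the surrounding query — and is monotone in $\UP$, so saturating $\UP$ first can never prevent a later external step from finding the pattern expansion it needs. Concretely, I would show that any internal step performed by a classical sequence after an external step is already reflected in the saturated $\UP$, so that the full instances of the final $(\UQ,\UP)$ cover every interleaving. Verifying this commutation carefully, together with checking that the bound on instances of interest in Proposition \ref{prop:inst-interest_unifiers} never discards a needed unifier, is where the bulk of the technical effort lies.
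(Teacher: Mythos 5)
Your proposal is correct and takes essentially the same route as the paper's proof: the paper likewise splits the statement into a completeness lemma (induction on classical rewriting sequences, with transitivity steps absorbed by re-expanding repeatable patterns and linear steps dispatched to Propositions \ref{prop:inst-interest_unifiers}, \ref{prop:drw-internal} and \ref{prop:drw-pcq}) and a soundness lemma (full instances of the output trace back, step by step, to classical rewritings of $Q$), both directions being closed off by Proposition \ref{datalog}, and it resolves your phase-ordering concern exactly as you suggest, via the fixpoint property of Step 3. The only cosmetic difference is that the paper proves soundness of the saturated pattern definitions semantically (every rule of $\Pi_{\UP}$ is shown to be a consequence of $\Pi_{\UP_0} \cup \R$) rather than by your syntactic induction showing that each definition atom is itself a classical rewriting of a single transitive atom.
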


Regarding termination, we observe that Step 3 (internal rewriting)
must halt since every direct
rewriting step adds a new atom (using a predicate from $\R_L^+$)
to a pattern definition, and there are
finitely many such atoms, up to isomorphism.

By contrast, Step 5 (external rewriting) need not halt, as the rewritings may
grow unboundedly in size. Thus, to ensure termination, we will modify Step 5 to
exclude 
direct rewritings that increase rewriting size. Specifically, we
identify the following `problematic' minimally-unifiable instances:
\begin{itemize}
\item 
 $Q'$  is composed of atoms expanded from a single pattern $P^+[t_1,t_2]$, $\mu'(t_1) = \mu'(t_2)$, and
$P^+[t_1,t_2]$ is replaced as in case $(i)$, $(ii)$ or $(iii)$.
\item 
$Q'$ is obtained  from the expansion of repeatable patterns, a term $t$ of
$Q$ is unified with an existential variable of the head of the rule, $t$
appears only in repeatable patterns of form $P_i^+[t_i, t]$ (resp.\
$P_i^+[t, t_i]$), and all these repeatable patterns are rewritten as in
case {\it (ii)} $P_i^+[t_i, t'_i] \wedge X[t'_i, t]$  (resp.\ as in case
{\it (iii)} $ X[t, t'_i] \wedge P_i^+[t'_i, t_i]$).
\end{itemize}
We will call a direct rewriting \emph{excluded} if it is based
on such a minimally-unifiable instance; otherwise, it is \emph{non-excluded}.

\begin{example}
The rewriting $\Q'_1$ from Example \ref{ex:external} is excluded
because it is obtained from the minimally-unifiable instance $Q_1$ in which
the repeatable patterns $P_1^+[a,z]$ is expanded as in case $(ii)$
and $P_2^+[z,b]$ as in case $(iii)$, and
$z$ is unified with the existential variable $z'$.
\end{example}

\begin{proposition}\label{excluded}
	Let $(\Q,\UP)$ be a PCQ and $R \in \R_L^+$.
	If $\Q'$ is a non-excluded direct rewriting of $\Q$ with $R$,
	then $|\Q'| \leq |\Q|$.
\end{proposition}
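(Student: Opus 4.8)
The plan is to compute the size of $\Q'$ directly from the construction of an external direct rewriting, reduce the goal to a purely combinatorial inequality, and then establish that inequality from the connectivity of the single piece, showing that it can fail only in the two \emph{excluded} configurations.

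First I would recall that $\Q'$ is built from a minimally-unifiable instance $\Q_i$ and an external unifier $\mu'=(Q',H,P)$ as $\Q'=(\mu'(\Q_i)\setminus\mu'(H))\cup\mu'(B)$. Since $R\in\R_L^+$ is linear, $B$ is a single atom or a single repeatable pattern, so $\mu'(B)$ adds exactly one object. I would then compare $\Q$ and $\Q'$ object by object. Let $d$ be the number of atoms of $Q'$ that are ordinary atoms of $\Q$ (rather than atoms produced by expanding a repeatable pattern), and let $\#(i),\dots,\#(iv)$ count the replaced repeatable patterns handled by each of the four cases. A case $(i)$ pattern leaves two repeatable patterns where $\Q$ had one, cases $(ii)$ and $(iii)$ leave one, case $(iv)$ leaves none, the $d$ ordinary atoms of $Q'$ disappear, and $\mu'(B)$ is added once; since applying $\mu'$ can only identify objects, this gives
\[ |\Q'| \;\le\; |\Q| + \#(i) - \#(iv) - d + 1. \]
Hence it suffices to show $\#(i)-\#(iv)+1 \le d$ for every non-excluded rewriting.

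The heart of the proof is a connectivity analysis of $Q'$. I would view $Q'$ as a graph whose vertices are the $\mu'$-classes of its \emph{boundary terms} (the two external terms of each relevant sequence and the terms of the $d$ ordinary atoms) and whose edges are the $d$ ordinary atoms together with the relevant sequences, each sequence being an edge joining its two external classes (its internal terms are fresh and private, so they cannot link distinct objects). As $\mu$ is a single-piece unifier, this graph is connected. The key observation, via Condition~2, is that the fresh split points created in cases $(i)$--$(iii)$ occur in the leftover repeatable patterns, i.e.\ in $\Q\setminus Q'$, so they cannot be unified with existential variables; they map to frontier variables of $H$, which appear in $B$, and are exactly the terms through which each leftover pattern reattaches to $\mu'(B)$. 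A case $(iv)$ sequence is anchored at the genuine endpoints $t_1,t_2$ of $\Q$, whereas a case $(i)$ sequence is anchored only at such fresh split points, each of which must be identified with another boundary class to keep $Q'$ connected. A spanning-tree (Euler) count over this connected structure then yields $\#(i)-\#(iv)+1\le d$, leaving only the boundary situations where the count is exactly tight.

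Finally I would check that tightness occurs precisely in the two excluded patterns: either $(a)$ $Q'$ lies within a single repeatable pattern whose endpoints are identified, $\mu'(t_1)=\mu'(t_2)$, yet that pattern is cut open by case $(i)$/$(ii)$/$(iii)$ rather than consumed by case $(iv)$ (the first excluded case), or $(b)$ a term $t$ of $Q$ is unified with an existential variable while occurring only as the shared endpoint of repeatable patterns all rewritten towards $t$ by case $(ii)$ (resp.\ $(iii)$), so each leaves a stub but supplies no compensating anchor (the second excluded case). The excluded rewriting $\Q'_1$ of Example~\ref{ex:external} illustrates the second case, with $|\Q'_1|=|\Q|+1$. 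Ruling out both patterns forces $\#(i)-\#(iv)+1\le d$, and therefore $|\Q'|\le|\Q|$. The main obstacle is exactly this last matching: showing that the spanning-tree bound is tight \emph{only} for the two syntactic configurations, while correctly handling several repeatable patterns sharing an endpoint, the merges of leftover patterns under $\mu'$ (which only decrease the size), and the higher-arity case in which one ordinary atom of $Q'$ may anchor several sequences at once.
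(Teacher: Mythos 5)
Your accounting step is sound: with $d$ the number of ordinary atoms of $\Q$ erased and $\#(i),\dots,\#(iv)$ the counts of relevant patterns per case, $|\Q'| \le |\Q| + \#(i) - \#(iv) - d + 1$, so everything reduces to $\#(i)-\#(iv)+1\le d$; and your final matching of failure configurations to the paper's two excluded cases is also essentially correct. The gap is the middle step. A spanning-tree/Euler count over your graph cannot yield the inequality, because your graph is connected in too weak a sense: a single-piece unifier glues atoms of $Q'$ only through classes containing an \emph{existential} variable of $H$ (this is what Condition~2 plus minimality of pieces gives), whereas your graph also admits edges through boundary terms that are merely identified in a common \emph{frontier} class. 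Concretely, consider one relevant pattern treated by case $(i)$ together with a second relevant pattern treated by case $(iv)$ (or together with one ordinary atom of $Q'$). Your graph can be made connected by letting a fresh split point of the case-$(i)$ sequence share a frontier class with an endpoint of the other sequence; neither excluded condition applies (the first requires $Q'$ to come from a single pattern, the second requires all patterns to be rewritten as $(ii)$/$(iii)$); yet the target inequality fails ($1\le 0$, resp.\ $2\le 1$). So the proof as outlined cannot conclude on these configurations; they must instead be shown to be \emph{impossible}, which connectivity of your graph does not do.

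What rules them out—and what the paper establishes first—is a stronger consequence of piece-ness: a case-$(i)$ sequence has no existential anchors at all, since its internal terms are private to the sequence and its two external terms are fresh split points that also occur in the leftover stubs $P^+[t_1,x_1]$ and $P^+[x_2,t_2]$, hence (by Condition~2) lie in frontier classes. Therefore the case-$(i)$ sequence by itself already satisfies Condition~2, and would be a proper non-empty sub-piece of $Q'$; single-piece-ness forces $Q'$ to be exactly that sequence, and externality then forces its two external terms to be unified together, which is precisely the first excluded case. Thus $\#(i)=0$ for every non-excluded rewriting, and it only remains to prove $d+\#(iv)\ge 1$: if $d=0$ and every relevant pattern keeps a stub (cases $(ii)$/$(iii)$), then either a lone sequence has its external terms unified together (first excluded case again), or the sequences are glued through a genuine endpoint $t$ unified with an existential variable; Condition~2 then forces $t$ to occur in no stub and in no ordinary atom, i.e., every pattern containing $t$ is rewritten with its stub oriented away from $t$—exactly the second excluded case. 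Replacing your Euler count by this ``case-$(i)$ sequences are isolated sub-pieces'' argument repairs the proof, and brings it into line with the paper's own case analysis.
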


Let us consider the `modified query rewriting algorithm' that is obtained
by only performing non-excluded direct rewritings in Step 5. 
This modification ensures
termination but may comprise completeness.
However, we can show that the modified algorithm is complete 
in the following
key cases:
when the CQ is atomic,
when there is no specialization of a transitive
	predicate,
or when all predicates have arity at most two.
By further analyzing the latter case, we
can formulate a safety condition, defined next, that
guarantees completeness for a much wider class of rule sets.

\vspace*{-3.5mm}
\paragraph{Safe rule sets} 
\label{sec:safety-condition}
We begin by defining a specialization relationship between predicates.
A predicate $q$ is a \emph{direct specialization} of a binary predicate $p$ on positions $\{\iv,\j\}$
$(\iv \neq \emptyset, \j \neq \emptyset)$
if there is a rule of the form $q(\vec{u}) \rightarrow p(x,y)$ 
such that $\iv$ (resp.\ $\j$) contains those positions of $\vec{u}$ that contain the term 
$x$ (resp.\ $y$).
It is a \emph{specialization} of $p$ on positions $\{\iv,\j\}$
if (a) it is a direct specialization of $p$ on positions $\{\iv,\j\}$, or
(b) there is a rule of the form $q(\vec{u})\rightarrow r(\vec{v})$ such that
$r(\vec{v})$ is a specialization of $p$ on positions $\{\k,\l\}$ and
the terms occurring in positions $\{\k,\l\}$ of $\vec{v}$ occur
in positions $\{\iv,\j\}$ of $\vec{u}$ with $\iv \neq \emptyset$ and $\j \neq \emptyset$.
We say that $q$ is a {\em pseudo-transitive predicate} if it is
a specialization of at least one transitive predicate. 

We call a \ltrans\ rule set \emph{safe} if it satisfies the following {\em safety condition}:
for every pseudo-transitive predicate $q$,
there exists a pair of positions $\{i,j\}$ with $i \neq j$ such that
for all transitive predicates $p$ of which $q$ is a specialization on positions $\{\iv,\j\}$,
either $i \in \iv$ and $j \in \j$, or $i \in \j$ and $j \in \iv$.

Note that if we consider binary predicates,
the safety condition is always fulfilled.
Then, specializations correspond exactly to 
the subroles 
considered in DLs.

\begin{example}
\label{ex:safety-condition} 
	Let $R_1 = s_1(x,x,y) \rightarrow p_1(x,y)$,
	$R_2 = s_2(x,y,z) \rightarrow p_2(x,y)$,
	$R_3 = s_1(x,y,z) \rightarrow s_2(z,x,y)$,
	and $p_1$ and $p_2$ be two transitive predicates.

	The following specializations have to be considered:
	$s_1$ is a direct specialization of $p_1$ on positions $\{ \{ 1, 2 \}, \{ 3 \} \}$,
	$s_2$ is a direct specialization of $p_2$ on positions $\{ \{ 1 \}, \{ 2 \} \}$,
	$s_1$ is a specialization of $p_2$ on positions $\{ \{ 3 \}, \{ 1 \} \}$.
	We then have two pseudo-transitive predicates: $s_1$ and $s_2$.
	By choosing the pair $\{1,3\}$ for $s_1$ and $\{1,2\}$ for $s_2$,
	we observe that $\{R_1,R_2,R_3\}$ satisfies the safety condition.

	If we replace $R_3$ by $R_4 = s_1(x,y,z) \rightarrow s_2(x,y,z)$,
	$s_1$ is a specialization of $p_2$ on positions $\{ \{1\},\{2\} \}$,
	and $\{R_1,R_2,R_4\}$ is not safe.
\end{example}

\begin{theorem}
\label{prop:completeness-excluded}
The modified query rewriting algorithm halts.
Moreover, Theorem \ref{thm:correct} (soundness and completeness) holds for the modified 
algorithm if either
the input CQ is atomic, or the input rule set is safe.
\end{theorem}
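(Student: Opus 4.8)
The plan is to prove two separate things: termination of the modified algorithm, and completeness under the two stated restrictions (atomic query, or safe rule set). I expect termination to be the routine part, resting on Proposition \ref{excluded}, while completeness is where the real work lies.

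For termination, I would argue as follows. By the observation already made in the text, Step 3 (internal rewriting) halts because each direct rewriting adds a new atom (up to isomorphism) to some pattern definition, and there are only finitely many atoms modulo isomorphism over a fixed finite signature. For Step 5 in the modified algorithm, we only perform non-excluded direct rewritings, so by Proposition \ref{excluded} every such rewriting $\Q'$ satisfies $|\Q'| \leq |\Q|$; hence the size of generated PCQs never increases. Since the modified Step 5 adds a new PCQ only when it is not isomorphic to one already present, and there are only finitely many PCQs (over the fixed finite signature, with pattern definitions fixed after Step 3) up to isomorphism of size at most $|\Q^+|$, the fixpoint computation in Step 5 must terminate. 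I would be careful to note that the number of pattern names and their definitions is fixed once Step 3 completes, so the bound on PCQs up to isomorphism is genuinely finite.

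For completeness, the strategy is to show that excluding the `problematic' minimally-unifiable instances loses no entailments. By Theorem \ref{thm:correct}, the unmodified algorithm is complete, so it suffices to show that whenever a full instance $Q'$ of some reachable PCQ witnesses $\F \models Q'$, there is also a full instance of a PCQ reachable using only non-excluded rewritings that maps homomorphically into $\F$ (equivalently, that the excluded rewritings are subsumed). The key idea is that an excluded rewriting always corresponds to `collapsing' a repeatable pattern around a term forced to coincide with an existential variable or around a self-loop ($\mu'(t_1)=\mu'(t_2)$); in such cases the extra instantiation produced by the excluded rewriting can instead be obtained by expanding the retained repeatable pattern $P_i^+$ further, so the full instances it generates are already covered. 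The hard part, and the main obstacle, is making this subsumption argument precise across an entire rewriting sequence: one must show that for any derivation ending in an excluded step, there is an alternative derivation of no greater length avoiding that step and producing a PCQ whose full instances dominate those of the excluded one, and then close this under iteration without the PCQ size creeping up.

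This is exactly where the two restrictions enter. In the atomic case, the query $Q^+$ is a single (possibly repeatable) pattern or atom, so every external unifier degenerates and the problematic configurations described cannot arise in a way that loses solutions — I would verify that each excluded rewriting on an atomic query is redundant directly. For the safe case, the argument is subtler: the danger is a pseudo-transitive predicate whose occurrences in a rewriting get `split' along incompatible position pairs (as in the unsafe $\{R_1,R_2,R_4\}$ of Example \ref{ex:safety-condition}), so that rewriting inside one repeatable pattern produces atoms that should, but cannot, be re-absorbed into the pattern. The safety condition guarantees a uniform pair of positions $\{i,j\}$ witnessing transitivity for every transitive predicate that $q$ specializes, which is precisely what lets the internal rewriting of Step 3 have already produced (in the relevant pattern definition) the atoms that an excluded external rewriting would otherwise generate. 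I would therefore prove the completeness claim in the safe case by arguing that every excluded external rewriting on a safe rule set has a full instance that is already an instance of a PCQ obtainable via internal rewriting plus non-excluded external steps, using the uniform position pair to relocate the split atoms back under a single repeatable pattern. The main technical obstacle is verifying this relocation is always possible when predicates of arity greater than two are present; the safety condition is designed exactly to rule out the obstruction, so the proof amounts to showing that its failure is the only obstruction.
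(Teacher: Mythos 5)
Your termination argument is exactly the paper's: Step 3 halts for the reason already noted in the text, and for the modified Step 5, Proposition \ref{excluded} bounds every non-excluded rewriting by the size of $\Q^+$, so with the pattern definitions fixed after Step 3 there are only finitely many PCQs up to isomorphism and the fixpoint is reached. Your overall completeness strategy --- show that every excluded rewriting is subsumed, then close this under iteration --- is also the paper's: its Lemmas \ref{prop:unif_1with2} and \ref{prop:unif_1withE} prove precisely that each excluded direct rewriting is \emph{more specific} than the PCQ $\Q$ being rewritten (hence redundant), and their ``furthermore'' clauses resolve the iteration problem you flag as the main obstacle, by showing that any further rewriting of an excluded rewriting is dominated either by $\Q$ itself or by a rewriting of the non-excluded case-$(iv)$ minimally-unifiable instance. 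Your treatment of the self-loop case ($\mu'(t_1)=\mu'(t_2)$) via ``expand the retained repeatable pattern further'' is indeed the proof idea of Lemma \ref{prop:unif_1with2}, and the atomic case is handled in the paper as you suggest (only one repeatable pattern is relevant, so the subsumption holds trivially).

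The genuine gap is in the safe case, where you attribute the wrong mechanism to the safety condition. You claim that safety ensures ``the internal rewriting of Step 3 [has] already produced (in the relevant pattern definition) the atoms that an excluded external rewriting would otherwise generate,'' and that one can ``relocate the split atoms back under a single repeatable pattern.'' This cannot work: the problematic excluded rewritings arise from external unifiers that join atoms expanded from \emph{several distinct} repeatable patterns, with the shared term sent to an existential variable, and such bridging atoms can never appear inside a pattern definition --- internal unifiers are by definition confined to a single pattern, and pattern definitions are query-independent. Concretely, in Example \ref{ex:external} the excluded rewriting $\Q'_1$ contains the atom $s_1(x',y')$ linking $P_1^+$ and $P_2^+$; no internal rewriting ever adds an $s_1$-atom to either pattern definition, because the would-be internal unifier sends an external term to the existential variable $z'$ and is therefore not internal. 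What safety actually buys (Lemma \ref{prop:unif_1withE}) is different: it forces all the non-existential external terms $x^i_s$ of the patterns involved in the unifier to occupy the same position pair of the head predicate, hence to be unified into a single term $x'$; consequently $\Q$ maps homomorphically (indeed isomorphically onto a subquery) into the excluded rewriting minus $\sigma(B)$, by sending the shared term to $x'$, so the excluded rewriting is subsumed by $\Q$ itself and can be dropped. Without this forced coincidence of external terms the collapse fails --- that is exactly the unsafe obstruction of Example \ref{ex:safety-condition} --- so a proof following your proposed ``relocation'' step would break precisely where the real work has to be done.
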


\section{Complexity}

	A careful analysis of our query rewriting algorithm allows us to
	 provide bounds on the  
	 worst-case complexity of atomic CQ entailment over \ltrans\ KBs,
	 and of general CQ entailment over safe \ltrans\ KBs.
	As usual, we consider two complexity measures:
	\emph{combined complexity} (measured
	in terms of the
	size of the whole input), and
	\emph{data complexity} (measured
	 in terms of the size of the fact base).
	The latter 
	 is often considered more relevant since the fact base
	is typically significantly larger than the rest of the input.

	With regards to data complexity, we show completeness for 
	\NL\ (non-deterministic logarithmic space), which is the same complexity
	as in the presence of transitivity rules alone.
	
\begin{theorem}
\label{thm:data}
	Both (i) atomic CQ entailment over \ltrans\ KBs and (ii) CQ entailment over safe \ltrans\ KBs
	are  \NL-complete in data complexity.
\end{theorem}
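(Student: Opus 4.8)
The plan is to prove matching lower and upper bounds. For the \NL-hardness direction, which I would establish uniformly for both (i) and (ii), I would reduce from directed graph reachability (\textsc{stcon}), the canonical \NL-complete problem. Given a directed graph $G$ with distinguished vertices $s$ and $t$, I encode each edge $(u,v)$ as a fact $p(u,v)$, where $p$ is a single transitive predicate governed by the rule $p(x,y)\wedge p(y,z)\rightarrow p(x,z)$, and I pose the atomic query $p(s,t)$. The transitive closure of $p$ contains $(s,t)$ exactly when $t$ is reachable from $s$, so $\F,\R\models p(s,t)$ iff $G$ has an $s$--$t$ path. Only $\F$ varies while the rule set and query stay fixed, and the encoding is a \textsc{LogSpace} reduction, so this yields \NL-hardness in data complexity. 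Since the query is atomic, the predicate binary, and a rule set with a single binary transitive predicate is trivially safe, the very same instance witnesses hardness for both (i) and (ii).

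For membership, I would fix $\R$ and $Q$ and run the \emph{modified} rewriting algorithm. By Theorem~\ref{prop:completeness-excluded} it halts and is complete in exactly the two cases of interest (atomic $Q$, or safe $\R$), producing a finite Datalog program $\Pi_{\UP}$ and a finite UCQ $Q_{\UQ}$; by Theorem~\ref{thm:correct} we then have $\F,\R\models Q$ iff $\F,\Pi_{\UP}\models Q'$ for some $Q'\in Q_{\UQ}$ (via Proposition~\ref{datalog}). The crucial point for data complexity is that $\Pi_{\UP}$ and $Q_{\UQ}$ depend only on $\R$ and $Q$, hence are of \emph{constant} size, so the preprocessing is free and only the evaluation over $\F$ is measured.

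Next I would exploit the stratified structure of $\Pi_{\UP}$, which (as observed in the excerpt) is non-recursive except for the transitivity rules. Its only recursion computes, for each pattern predicate $p^+$, the transitive closure of a relation whose base edges are derived by the non-recursive rules $a_i(\t_i)\rightarrow p^+(\#1,\#2)$. Because every $a_i$ is a single atom over the original predicates and pattern definitions never mention the new $p^+$ predicates, there is no feedback from $p^+$ back into its base edges; whether a base edge $p^+(u,v)$ holds is thus decidable in \textsc{LogSpace} by scanning $\F$ for one matching fact. Consequently each $p^+$ is the transitive closure of an implicitly defined graph whose edges are checkable on the fly, and reachability in such a graph lies in \NL. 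I would then assemble the procedure: nondeterministically select $Q'\in Q_{\UQ}$ and guess a homomorphism $h$ from the (constantly many) variables of $Q'$ to the constants of $\F$, storing $h$ in $O(\log|\F|)$ space; verify each atom of $Q'$ under $h$, checking original-predicate atoms by direct lookup and each $p^+(t_1,t_2)$ atom by an \NL\ reachability test from $h(t_1)$ to $h(t_2)$, reusing the logarithmic work tape across the constantly many atoms. Accepting iff all checks succeed gives a single nondeterministic logarithmic-space computation, hence membership in \NL, and with the reduction above, \NL-completeness.

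I expect the membership argument to be the main obstacle. The delicate points are justifying that the interaction between the non-recursive derivation of $p^+$-edges and their transitive closure reduces \emph{cleanly} to reachability over an implicitly defined graph, and that guessing the query homomorphism together with the nested reachability subroutines stays within logarithmic space. The stratification observation---that the only recursion is transitive closure of first-order-definable edges, with no feedback into the pattern definitions---is precisely what makes the reduction to reachability, and therefore the \NL\ bound, go through.
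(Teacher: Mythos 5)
Your proposal is correct and follows essentially the same route as the paper's proof: both rely on Theorem \ref{prop:completeness-excluded} to obtain a fact-independent (hence constant-size) pair $(\Pi_{\UP}, Q_{\UQ})$, check atoms over original predicates by direct lookup in $\F$, and verify each $p^+(t_1,t_2)$ atom by an on-the-fly reachability test that guesses the intermediate constants together with the witnessing rule and substitution at each step (your ``implicitly defined graph'' formulation is exactly the paper's path-guessing procedure with a logarithmic counter). Your hardness argument via directed reachability with a single transitive predicate is also the reduction the paper alludes to, just spelled out in more detail.
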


Regarding combined complexity, we show that both problems are in ExpTime, 
and prove that  atomic CQ entailment over \ltrans\ KBs is ExpTime-complete.  
Hence, the addition of transitivity rules increases the complexity
of query entailment for atomic queries. The precise combined complexity of 
general CQ entailment over safe \ltrans\ KBs remains an open issue. 

\begin{theorem}
\label{thm:combined}
	Both (i) atomic CQ entailment over \ltrans\ KBs and (ii) CQ entailment over safe \ltrans\ KBs
	are in ExpTime in combined complexity.
	Furthermore, atomic CQ entailment over \ltrans\ KBs is
	ExpTime-hard in combined complexity.
\end{theorem}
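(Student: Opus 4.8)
The plan is to prove Theorem~\ref{thm:combined} in three parts: the ExpTime upper bound for atomic CQ entailment over \ltrans\ KBs, the ExpTime upper bound for general CQ entailment over safe \ltrans\ KBs, and the ExpTime lower bound for the atomic case. For both upper bounds I would leverage the Datalog translation $(\Pi_\UP, Q_\UQ)$ produced by the (modified) algorithm, whose correctness is guaranteed by Theorems~\ref{thm:correct} and \ref{prop:completeness-excluded}. The key quantity to control is the size of $\UP$: each pattern definition accumulates atoms built over the predicates appearing in $\R_L^+$, and since atoms are added only up to isomorphism (with $\#1,\#2$ distinguished), the number of possible atoms over a predicate of arity $a$ is bounded by the number of ways to fill $a$ positions with $\#1$, $\#2$, or a fresh variable---roughly $3^a$ up to renaming. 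With predicate arities bounded by the input size, this gives at most exponentially many atoms per pattern definition, and hence $|\Pi_\UP|$ is at most exponential in the input. Crucially, by the remark following Step~6, $\Pi_\UP$ is query-independent and its internal-rewriting construction (Step~3) terminates; I would carefully verify that Step~3 runs in exponential time, since each fixpoint iteration adds at least one new atom to some pattern definition and there are at most exponentially many atoms total.

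For the atomic case I would then argue as follows. When $Q$ is a single atom, the external rewriting of Step~5 starts from a single repeatable pattern or atom, and the modified algorithm (Theorem~\ref{prop:completeness-excluded}) both halts and remains complete. By Proposition~\ref{excluded}, non-excluded rewritings never increase the query size, so starting from an atomic query the UCQ $\Q_\UQ$ consists of boundedly-sized CQs, of which there are at most exponentially many up to isomorphism; thus $\Q_\UQ$ is a UCQ of at most exponential size. The final entailment check then amounts to evaluating the Datalog program $\Pi_\UP$ together with (the Datalog translation of) $\Q_\UQ$ over $\F$. Since $\Pi_\UP$ contains transitivity rules it is genuinely recursive, but Datalog entailment is in ExpTime in combined complexity (indeed, it is in $\text{ExpTime}$ even when the program is of exponential size, since one saturates over a polynomial-size active domain), so the whole procedure runs in ExpTime. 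For the safe general-CQ case (part (ii)) the same structure applies: Theorem~\ref{prop:completeness-excluded} guarantees termination and completeness of the modified algorithm, Proposition~\ref{excluded} bounds rewriting sizes by $|Q|$, and so $\Q_\UQ$ is again an at-most-exponential UCQ to be evaluated alongside $\Pi_\UP$, yielding membership in ExpTime.

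For the lower bound I would give a reduction from a known ExpTime-complete problem into atomic CQ entailment over \ltrans\ KBs. The natural target is the fact that CQ (or atomic CQ) entailment over guarded rules, or equivalently a suitable alternating-space or Datalog-style problem, is ExpTime-hard; a clean route is to encode the acceptance problem of an alternating polynomial-space Turing machine (APSPACE $=$ ExpTime) or to reduce from the combined-complexity-hard problem for a fragment already known to sit at ExpTime, using transitive predicates to propagate a ``reachability''/configuration-chaining relation that linear rules alone (being \fus, hence only \PSpace-hard) cannot produce. The essential idea is that transitivity lets one take the transitive closure of a successor relation encoded by linear rules, and atomic CQ entailment can then test reachability in an exponentially large implicit graph; I would set up linear rules to generate the edge relation of a configuration graph and a single transitivity rule to close it, reducing the query to asking whether an accepting configuration is reachable.

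The main obstacle I expect is the combined-complexity upper bound, specifically bounding $|\Pi_\UP|$ and the cost of evaluating the recursive Datalog program without an extra exponential blowup. Because pattern definitions grow during internal rewriting and the external rewriting (even when size-bounded by Proposition~\ref{excluded}) can produce exponentially many non-isomorphic CQs, I must show these two exponentials do not compound into a double-exponential: the argument is that both $|\Pi_\UP|$ and $|\Q_\UQ|$ are \emph{singly} exponential in the input, and Datalog evaluation of a singly-exponential program over $\F$ stays within ExpTime because the saturation is computed over the active domain of $\F$ (polynomial), not over the program size. Pinning down this interplay---and verifying that the safety condition (Theorem~\ref{prop:completeness-excluded}) does not force additional blowup in the general-CQ case---is the delicate part of the proof. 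The lower bound, by contrast, should be comparatively routine once the configuration-graph encoding is fixed, the only care being to keep the rule set linear plus a single transitivity rule and the query atomic.
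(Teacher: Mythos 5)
Your upper-bound argument follows essentially the same route as the paper: use Theorem~\ref{prop:completeness-excluded} to get the finite pair $(\Pi_\UP, Q_\UQ)$, bound both components by a single exponential (the paper bounds $|\Pi_\UP|$ by $O(t\times p\times r^r)$ and uses Proposition~\ref{excluded} to keep every CQ in $Q_\UQ$ of size at most $|Q|$), and then evaluate. One point where you should be more careful: your generic claim that ``Datalog entailment is in ExpTime even when the program is of exponential size, since one saturates over a polynomial-size active domain'' is not true for arbitrary exponential-size programs (a rule body with exponentially many variables makes matching doubly exponential), and the saturation itself is not polynomial-size in general once arities are unbounded. The paper avoids this by exploiting the specific shape of $\Pi_\UP$: its non-transitivity rules have atomic bodies and \emph{binary} heads $p^+(\#1,\#2)$, so a single breadth-first application of all such rules followed by transitive closure yields a saturation of size polynomial in $|\F|$, over which each of the exponentially many size-bounded CQs is checked in \NP. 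Your program does satisfy the properties needed for your coarser argument to go through, but they must be stated.

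The genuine gap is in the lower bound. The reduction you describe---linear rules generate the edge relation of the configuration graph, a single transitivity rule closes it, and the atomic query asks ``whether an accepting configuration is reachable''---does not establish ExpTime-hardness. Plain reachability in the (implicitly represented) configuration graph of a polynomial-space machine captures only nondeterministic acceptance, i.e.\ \PSpace; acceptance of an \emph{alternating} PSpace machine is an AND/OR reachability problem, and a universal configuration accepts only if \emph{all} of its successors lead to acceptance, which cannot be expressed as ``some accepting configuration is reachable'' under transitive closure. The paper's proof (following \cite{bt:rr16}) hinges on a series/parallel gadget that your sketch omits: each configuration atom carries two distinguished terms, a \emph{begin} and an \emph{end}. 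For an existential configuration the $p$-edges place the two successor subtrees in parallel (begin of $s$ to the begin of each successor, each successor's end to the end of $s$), while for a universal configuration they are placed in series (begin of $s$ to the first successor's begin, the first successor's end to the \emph{second successor's begin}, the second successor's end to the end of $s$); accepting configurations get a direct begin-to-end edge. With this wiring, $p(b,e)$ is entailed iff there exists an accepting computation \emph{tree}: the series composition is precisely what encodes universal branching as concatenation of $p$-paths. Without such a gadget your reduction proves only \PSpace-hardness, so the lower bound is not ``comparatively routine''---encoding the alternation is the crux of the hardness proof.
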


\section{Conclusion}

In this paper, we 
made some steps towards a better understanding of the interaction between transitivity and decidable classes of existential rules.
We obtained an undecidability result for aGRD+\fun{trans}, hence for 
\emph{fes}+\fun{trans} and \emph{fus}+\fun{trans}. More positively, 
we established decidability (with the lowest possible data complexity) of atomic CQ entailment over linear+\fun{trans} KBs
and general CQ entailment for safe linear+\fun{trans} rule sets. 
The safety condition was introduced to ensure termination of the rewriting mechanism when predicates of arity more than two are considered (rule sets which use only unary and binary predicates are trivially safe). We believe 
the condition can be removed with a much more involved termination proof.}

In future work, we plan to explore the effect of 
transitivity on \emph{fus} rule classes that are incomparable with linear rules,
namely domain-restricted and sticky rule sets \cite{blms11,cgp10}. 

\section*{Acknowledgements}
This work was supported by ANR project PAGODA (contract ANR 12 JS02 007 01).

\bibliographystyle{named}
\bibliography{bib-ijcai}

\newpage
\section*{Appendix}

\setcounter{proposition}{2}
\setcounter{theorem}{0}
\setcounter{lemma}{0}

\paragraph{Notations}
	In the following we use several notations that were not included
	in the main paper for space restrictions.
	Let $Q_1$ and $Q_2$ be two CQs, if there is a homomorphism from
	$Q_1$ to $Q_2$, we say that $Q_1$ is {\em more general} than $Q_2$,
	and we note $Q_1 \geq Q_2$.
	This definition is naturally extended to (U)PCQs:
	let $(\Q_1,\UP_1)$ and $(\Q_2,\UP_2)$ be two (U)PCQs, if for any full instance $Q_2$
	of $(\Q_2,\UP_2)$ there is a full instance $Q_1$ of $(\Q_1,\UP_1)$ such that
	$Q_1 \geq Q_2$, we say that $(\Q_1,\UP_1)$ is {\em more general} than $(\Q_2,\UP_2)$
	and we note $(\Q_1,\UP_1) \geq (\Q_2,\UP_2)$.

	We recall from the body of the paper that a unifier
	$\mu'=(Q',H,P_{\mu'})$ is more general than
	$\mu=(Q,H,P_{\mu})$ if there is a substitution $h$ from $\sigma'(Q')$ to $\sigma(Q)$ such that $h(\sigma'(Q')) \subseteq \sigma(Q)$ (i.e., $h$ is a homomorphism from $\sigma'(Q')$ to $\sigma(Q)$), and
	for all terms $x$ and $y$ in $Q' \cup H$, if $\sigma_{\mu'}(x) = \sigma_{\mu'}(y)$ then $\sigma_{\mu}(h(x)) = \sigma_{\mu}(h(y))$, where
	$\sigma_{\mu} $ and $\sigma_{\mu'}$ are substitutions associated respectively  with $P_{\mu}$ and $P_{\mu'}$.
	In what follows, we will write $\mu' \geq \mu$ to indicate that $\mu'$ is more general than $\mu$.

\medskip

\noindent\textbf{Proposition \ref{msa}}
\emph{Atomic CQ entailment over \emph{msa}+\fun{trans} KBs is
	undecidable.}
	
\smallskip
\begin{proof}

	The proof is by reduction from atomic
	CQ entailment with general
	existential rules.
	Let $\F$ be a set of facts,
	$\R$ be a set of rules, and $Q$ be an atomic CQ.

	First we consider a new transitive predicate $p$, which 
	is the only transitive predicate we use.

	We next rewrite $\F$ into $\F'$ as follows.
	For each term $t \in terms(\F)$, we add the atoms
	$p(t,a_t)$ and $p(a_t,t)$ to $\F'$, where
	$a_t$ is a fresh constant.

	Then we rewrite $\R$ into a \emph{msa} set of rules $\R^{m}$.
	For each rule $R = B \rightarrow H$, we consider the rule
	$R' = B' \rightarrow H'$ obtained as follows.
	Its body $B'$ is composed of the atoms of $B$
	as well as the atoms $p(t,t)$ for each term
	$t \in terms(B)$.
	Its head $H'$ contains the atoms of $H$ as well as two atoms
	$p(z,x_z)$ and $p(x_z,z)$, where $x_z$ is a fresh variable,
	for each existential variable $z$ in $H$.
	It can be checked that $\R^{m}$ indeed satisfies the MSA property.

	Now, let $R^T$ be the rule expressing the transitivity of $p$.
	It is clear that $(\F,\R) \models Q$ if and only if
	$(\F',\R^m \cup \{R^T\}) \models Q$.

	We conclude that atomic conjunctive query entailment over
	\emph{MSA}+\fun{trans} knowledge bases is undecidable.

\end{proof}

\medskip
\begin{proposition}
\label{prop:Datalog}
	Let $\F$ be a fact base and $(\UQ,\UP)$ be a UPCQ. 
	Then $\F,\Pi_\UP \models Q_\UQ$ iff $\F \models Q$ for
	some $Q \in full(\UQ,\UP)$.
\end{proposition}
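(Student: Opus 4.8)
The plan is to reduce both sides of the equivalence to homomorphism questions into a single canonical structure. Since $\Pi_\UP$ is a set of Datalog rules, the KB $(\F,\Pi_\UP)$ has a least Herbrand model $\F^{\ast}$, namely the saturation of $\F$ under forward chaining with $\Pi_\UP$, and $\F,\Pi_\UP \models Q_\UQ$ holds iff some disjunct $Q_\Q$ of $Q_\UQ$ maps homomorphically into $\F^{\ast}$. So it suffices to fix a disjunct $\Q$ of $\UQ$, together with its translation $Q_\Q$ (obtained by replacing each repeatable pattern $P^{+}[t_1,t_2]$ by $p^{+}(t_1,t_2)$), and show that $Q_\Q$ maps into $\F^{\ast}$ iff some full instance obtained by expanding $\Q$ maps into $\F$. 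The equivalence for the whole UPCQ then follows by ranging over disjuncts.

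The technical heart is a characterization of the derived atoms of $\F^{\ast}$. First I would observe that every rule head of $\Pi_\UP$ uses one of the fresh predicates $p^{+}$, that these predicates occur neither in $\F$ nor in any rule body except the transitivity rules, and that the predicates $p^{+}$ for distinct pattern names are derived independently of one another. Consequently $\F^{\ast}$ agrees with $\F$ on all ordinary predicates, and the only new atoms are $p^{+}$-atoms. I would then establish the expected \emph{path lemma}: $p^{+}(c,d)\in\F^{\ast}$ iff there is a sequence $c=e_0,e_1,\dots,e_r=d$ with $r\ge 1$ such that every edge $(e_{\ell-1},e_\ell)$ is \emph{witnessed}, i.e.\ there is an atom $a(\t)$ in $P$'s definition and a homomorphism $\phi$ with $\phi(\#1)=e_{\ell-1}$, $\phi(\#2)=e_\ell$, and $\phi(a(\t))\in\F$. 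This holds because the rules $a_i(\t_i)\rightarrow p^{+}(\#1,\#2)$ generate exactly the witnessed single edges while the transitivity rule $p^{+}(x,y)\wedge p^{+}(y,z)\rightarrow p^{+}(x,z)$ closes them transitively.

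Granting the path lemma, both directions become bookkeeping over instantiations. For the ``if'' direction I take a full instance $Q$ with a homomorphism $h\colon Q\to\F$ and build a homomorphism $h'\colon Q_\Q\to\F^{\ast}$ agreeing with $h$ on the terms of $Q_\Q$ (the ordinary-atom terms and the pattern endpoints). Ordinary atoms of $\Q$ are leaves sent by $h$ into $\F\subseteq\F^{\ast}$; each repeatable pattern $P^{+}[s,t]$ of $\Q$ was expanded into a path $P[s,x_1],\dots,P[x_{k-1},t]$ whose standard patterns were in turn expanded into atoms of $Q$, so $h$ witnesses each edge of the image path, the one-step rules derive the consecutive $p^{+}$-atoms, and transitivity yields $p^{+}(h(s),h(t))\in\F^{\ast}$. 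For the ``only if'' direction I take $g\colon Q_\Q\to\F^{\ast}$; ordinary atoms map into $\F^{\ast}$ restricted to ordinary predicates, which equals $\F$, and for each atom $p^{+}(s,t)$ the path lemma applied to $p^{+}(g(s),g(t))$ gives a witnessed path of some length $r\ge 1$. I expand $P^{+}[s,t]$ into $r$ standard patterns with fresh internal variables, expand each into its witnessing atom, and assemble a full instance $Q$; defining $h$ to coincide with $g$ on the terms of $Q_\Q$, to send the internal variables to the path vertices $e_\ell$, and to send the freshly renamed auxiliary variables of each witnessing atom through its witness $\phi_\ell$, yields a homomorphism $Q\to\F$.

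The hard part will not be either direction in isolation but making the path lemma airtight and keeping the glued-together homomorphism well defined. Specifically, I must ensure that each internal vertex shared by two consecutive edges receives a single consistent value (the path lemma guarantees $\phi_\ell(\#2)=e_\ell=\phi_{\ell+1}(\#1)$, and the endpoints satisfy $h(s)=g(s)=e_0$, $h(t)=g(t)=e_r$), that the non-$\#1,\#2$ variables of different witnessing atoms are kept disjoint by fresh renaming so $h$ has no clashes, and that the bound $r\ge 1$ matches the requirement that a repeatable pattern be expanded into at least one standard pattern. I also need to nail down the independence of the several predicates $p^{+}$, so that a derivation of one $p^{+}$-fact never routes through atoms witnessing a different pattern name; this is where the observation that the $p^{+}$ predicates are disjoint and appear only in their own generating and transitivity rules does the essential work.
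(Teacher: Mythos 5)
Your proposal is correct and takes essentially the same approach as the paper: both directions are reduced to the saturation of $\F$ under $\Pi_\UP$, and your ``path lemma'' is exactly the decomposition the paper performs inline when it factors each derived $p^+$-atom into a $p^+$-path whose individual edges were produced by the generating rules $a_i(\t_i) \rightarrow p^+(\#1,\#2)$ rather than by transitivity. The bookkeeping you flag (fresh internal and auxiliary variables, consistency at shared endpoints, and the fact that the $p^+$ predicates are fresh and mutually independent) corresponds precisely to the paper's construction of the instantiation tree and the homomorphisms $\pi$ and $\pi'$.
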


\begin{proof} We successively prove the two directions.

$(\Rightarrow)$ Let $T$ be an instantiation of $(\mathbb Q, \mathbb P)$ such
that there exists a homomorphism $\pi$ from its associated full instance  to
the fact base $\mathcal F$. Let us consider a node of $T$ labelled by a
standard pattern atom $P[t_1, t_2]$. The label $r(\rho(\vec t))$ of its
child  was obtained by choosing an atom $r(\vec t)$ in the pattern definition
of $P$. Thus our Datalog program $\Pi_\mathbb P$ contains the rule $r(\vec t) \rightarrow p^+(\#1,
\#2)$ where $\#1, \#2 \in \vec t$. Applying the rule
according to the homomorphism $\pi \circ \rho$, we can add
the atom $p^+(t_1, t_2)$ to $\mathcal F$. Let us repeat this procedure for
every node of $T$ labelled by a standard pattern atom. Consider next a
repeatable pattern atom labelled $P^+[t, t']$ whose children are respectively
labelled $P[t = t_1, t_2], P[t_2, t_3], \ldots, P[t_{k-1}, t_k = t']$.
According to the rule applications already described, $\mathcal F$ now
contains the atoms $p^+(t = t_1, t_2), p^+(t_2, t_3), \ldots, p^+(t_{k-1},
t_k = t')$. Then, by successive applications of the rule in  $\Pi_\mathbb P$ expressing the
transitivity of $p^+$, we finally add to $\mathcal F$ the atom $p^+(t, t')$.
Repeat this procedure for every node of $T$ labelled by a repeatable
pattern atom. Now the root of $T$ is labelled by some $\mathcal Q \in \mathbb
Q$. The UCQ $Q_\mathbb Q$ contains a CQ $Q$ that was obtained from
$\mathcal Q$ by replacing each repeatable pattern $P^+[t_1,t_2]$ by 
	$p^+(t_1,t_2)$. Observe
that the restriction of $\pi$ to the terms of $Q$ is a homomorphism from $Q$
to the fact based obtained from $\mathcal F$ by previous rule applications.

$(\Leftarrow)$ Conversely, let us consider a fact base $\mathcal F'$ obtained
by saturating the initial fact base $\F$ with the rules of $\Pi_\mathbb P$, and a
homomorphism $\pi$ from some CQ $Q'$ in the UCQ $Q_\mathbb Q$ to $\mathcal
F '$. Let us now build an instantiation $T$ whose full instance can be mapped
to $\mathcal F$ thanks to a homomorphism $\pi'$. The root node of $T$ is
labelled by the UPCQ $\mathcal Q$ in $\mathbb Q$ from which $Q'$ was
obtained. Its children are labelled by the atoms and pattern atoms of
$\mathcal Q$. Now we define the restriction of $\pi'$ to the terms of
$\mathcal Q$ as $\pi$. Let us now consider a child of the root labelled by a
repeatable pattern atom $P^+[t, t']$. It follows that $p^+(\pi(t), \pi(t'))$ is
an atom of $\mathcal F '$. Since this atom is not in the initial fact base, it
means that it has been obtained by a (possibly empty) sequence of
applications of the rule expressing the transitivity of $p^+$ on a $p^+$-path
$\pi(t) = t_1, \ldots, t_k = \pi(t')$ such that no atom $p^+(t_i, t_{i+1})$
in $\mathcal F '$ has been obtained by a transitivity rule. Then the node
labelled $P^+[t, t']$ has $k+1$ children respectively labelled $P[t=x_1,
x_2], \ldots, P[x_{k-1}, x_k = t']$. For the fresh variables $x_2, \ldots,
x_{k-1}$, we define $\pi'(x_i) = t_i$.  Repeat this procedure for every
repeatable pattern atom in $T$. Let us next consider a node of $T$ labelled by
a standard pattern atom $P[x, x']$. Since that node was obtained in the
previous phase, we know that the atom $p^+(\pi'(x), \pi'(x'))$ is in
$\mathcal F'$, and that it was not obtained from the application of a
transitivity rule. Thus, the Datalog rule used to produce that atom is
necessarily a rule obtained from the definition of the pattern $P$. Let
$ r(\vec t) \rightarrow p^+(\#1, \#2)$, where $\#1, \#2 \in \vec t$, be that
rule. According to that pattern definition, we can add to the node labelled
$P[x, x']$ a child labelled $r(\rho(\vec t))$. Since the Datalog rule was
applied according to a homomorphism $\pi''$, we define, for every fresh
variable $\rho(t)$, $\pi'(\rho(t)) = \pi''(t)$. Do the same
for every standard pattern atom of $T$. The instance associated
with $T$ is full, and $\pi'$ is a homomorphism from this full instance to the
initial fact base.
\end{proof}

\begin{proposition}
\label{prop:inst-interest_unifiers}
Let $(\Q,\UP)$ be a PCQ and $R \in \R_L^+$.
	For every instance $Q$ of $(\Q,\UP)$ and unifier $\mu$ of $Q$ with $R$,
	 there exist an instance of interest $Q'$ of $(\Q,\UP)$ w.r.t.\ $R$ and a unifier
	$\mu'$ of $Q'$ with $R$ such that $\mu'$ is more general than $\mu$.
\end{proposition}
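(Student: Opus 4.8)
The plan is to begin with an arbitrary instance $Q$ of $(\Q,\UP)$, produced by some instantiation $T$, together with the given unifier $\mu=(Q_p,H,P_\mu)$ (I write $Q_p$ for the piece, to avoid clashing with the statement's $Q$, and let $\sigma$ be a substitution associated with $P_\mu$), and to massage $T$ into an instantiation of interest while only making the unifier more general. The first observation is that, since the head predicate of $R$ is $p$ and $\sigma(H)=\sigma(Q_p)$, any standard pattern of $T$ whose expanded atom lies in $Q_p$ must share the predicate of the head atom it unifies with, namely $p$; hence it was expanded into one of the atoms $a^i_\ell$ of predicate $p$ in its pattern definition, and in particular every repeatable pattern $P_i^+[t_1,t_2]$ contributing an atom to $Q_p$ has $n_i>0$. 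This is exactly the case in which the instance-of-interest construction expands $P_i^+$, and it already matches the requirement that relevant standard patterns carry predicate $p$. What remains is to bound the number of such standard patterns under each repeatable pattern.

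Next I would isolate, for each repeatable pattern $P_i^+[t_1,t_2]$ in $\Q$, the standard patterns $P[u_m,u_{m+1}]$ of $T$ that are relevant for $\mu$. As noted when internal and external unifiers were defined, the single-piece property forces these to form a contiguous block $P[u_a,u_{a+1}],\dots,P[u_{b-1},u_b]$ whose endpoints $u_a,u_b$ are external and whose intermediate terms $u_{a+1},\dots,u_{b-1}$ are internal (hence occur nowhere outside the piece). All the expanded atoms $a^i_{\ell_m}(\cdots)$, obtained with $\#1\mapsto u_m$ and $\#2\mapsto u_{m+1}$, are identified by $\sigma$ with a single head atom $p(\vec s)$, so $\sigma(u_m)$ equals the value of $\vec s$ on the $\#1$-positions of $a^i_{\ell_m}$: one of at most $arity(p)$ distinct head values, while $\ell_m$ ranges over at most $n_i$ choices.

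The heart of the argument is a pumping step. I assign to each internal boundary $u_m$ a type recording the value $\sigma(u_m)$ together with the adjacent atom choice, and argue that the number of distinct types is at most $\min(arity(p),n_i)$. Thus, as soon as the block is longer than $\min(arity(p),n_i)+2$, two internal boundaries $u_m,u_{m'}$ with $m<m'$ share a type, so in particular $\sigma(u_m)=\sigma(u_{m'})$. I then cut the segment strictly between them: delete $P[u_m,u_{m+1}],\dots,P[u_{m'-1},u_{m'}]$ and reconnect by identifying $u_{m'}$ with $u_m$. The relevant block still runs from $t_1$ to $t_2$ with fresh internal variables, so the result is again a legitimate instantiation of $(\Q,\UP)$; and because the two glued boundaries carry the same $\sigma$-value, the restriction of $\mu$ to the surviving atoms (with $u_{m'}$ renamed to $u_m$) is a well-defined unifier $\mu_1$. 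The inclusion of the shortened piece into $\sigma(Q_p)$ is a homomorphism witnessing $\mu_1\ge\mu$: it maps $\sigma_1(Q_1)$ into $\sigma(Q_p)$, and whenever $\sigma_1$ equates two terms this equality is either inherited from $\sigma$ or is the glueing $\sigma(u_m)=\sigma(u_{m'})$, so the equality-preservation clause of ``more general'' holds.

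Iterating the cut over all repeatable patterns brings every relevant block down to length at most $\min(arity(p),n_i)+2$; replacing each contributing $P_i^+$ by this bounded chain of $p$-atoms and leaving the remaining patterns untouched yields an instantiation of interest $T'$ with associated instance $Q'$, and the accumulated unifier $\mu'$ satisfies $\mu'\ge\mu$ by transitivity of $\ge$. I expect the main obstacle to be the pumping step: pinning down the notion of boundary type so that its count is exactly $\min(arity(p),n_i)$ — reconciling the head-arity bound with the atom-choice bound and justifying the ``$+2$'' coming from the two external boundaries — and checking carefully that the cut yields a genuine instance together with a unifier meeting both clauses of the definition of ``more general'', in particular that no class of the new partition is forced to merge an existential variable with a term surviving outside the shortened piece.
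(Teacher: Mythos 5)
Your overall architecture --- isolate the contiguous relevant block under each repeatable pattern, excise the segment between two $\sigma$-equal boundaries, restrict the unifier, and verify the two clauses of ``more general'' --- is essentially the skeleton of the paper's own proof (which organizes the excision via ``matching pairs'' of positions and a greedy choice of maximal pairs). The genuine gap is exactly the step you flag as the main obstacle, and it is not a routine detail: if a \emph{type} is the pair (value $\sigma(u_m)$, adjacent atom choice), then the number of distinct types is bounded only by the product $arity(p)\cdot n_i$, not by $\min(arity(p),n_i)$. Pigeonhole on pair-types therefore only licenses a cut once a block exceeds length $arity(p)\cdot n_i+2$, which does not bring you within the instance-of-interest bound $\min(arity(p),n_i)+2$; with the counting as stated, the proposition is not proved.

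The repair is to run two \emph{separate} pigeonhole arguments rather than one on pairs. (a) Every boundary of a block occurs in a block atom, and all block atoms are unified with the single head atom, so every boundary lies in the partition class of some head term; moreover, the single-piece condition forces each \emph{internal} boundary into the class of an existential variable of the head, and there are at most $arity(p)$ such classes. Hence a block with more than $arity(p)+1$ internal boundaries has two in a common class: cut between them. (b) If two block atoms are expanded from the same definition atom $a^i_\ell$, their left endpoints instantiate $\#1$ at the same positions of $p$ inside atoms all unified with the same head atom, hence are $\sigma$-equal: so a block longer than $n_i$ also admits a cut. Crucially, $\sigma$-equality already means membership in the same class of $P_{\mu}$ (a substitution obtained from a partition separates classes), so a cut merges no classes --- this also dissolves your closing worry, since no existential class can acquire a term surviving outside the shortened piece. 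Iterating whichever pigeonhole applies terminates with blocks of length at most $\min(arity(p)+1,n_i)\leq\min(arity(p),n_i)+2$, as required. One last point you gloss over: when the relevant block does not start at $t_1$ (or end at $t_2$), ``replacing $P_i^+$ by the chain'' silently identifies $t_1$ with the block's left external term, which can break the unifier (e.g., by putting two distinct constants into one class); this is precisely what the $+2$ slack in the definition of instances of interest lets you avoid, by keeping one placeholder $p$-atom on each side of the block, and it needs a sentence in any complete write-up.
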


\begin{proof}
	Let $(\Q,\UP)$ be a PCQ, $R \in \R_L^+$ be a rule with head $p(\t)$,
	$Q$ be an instance of $(\Q,\UP)$, and
	$\mu = (Q_2,p(\t),P_u)$ be a unifier of $Q$ with $R$.

	Consider a repeatable pattern $P^+[t_1,t_2]$  from which some atoms
	in $Q_2$ are expanded.
	If $P^+[t_1,t_2]$ is expanded into $k \leq arity(p) + 2$ standard
	patterns in $Q$\footnote{Strictly speaking, we mean the instantiation underlying $Q$,
	but to simplify the notation, here and later in the appendix, we will often refer to instances, leaving the instantiation implicit.},
	then there is an instance of interest $Q'$ that expands $P^+[t_1,t_2]$
	into exactly $k$ standard patterns. Thus, there is an
	isomorphism $\pi$ between the atoms expanded under $P^+[t_1,t_2]$
	in $Q$ and in $Q'$.
	Assume instead that $P^+[t_1,t_2]$ is expanded in $Q$ into $k > arity(p)+2$
	standard patterns.
	We denote by $\sigma$ a substitution associated with $P_u$, and by
	$P[t_1=x_0,x_1], P[x_1,x_2], \dots, P[x_{k-2},x_{k-1}], P[x_{k-1},x_k=t_2]$
	the sequence of standard patterns expanded from $P^+[t_1,t_2]$ in $Q$,
	and let $x_s$ and $x_e$ ($s < e$) be the external terms of $P^+[t_1,t_2]$
	w.r.t.\ $\mu$.
	The unifier is single-piece (cf.\ Section \ref{sec:prelims}), thus,
	for every $0 < i < k$, $\sigma(x_i) = \sigma(z_i)$ for some
    existential variable $z_i$ from the head of $R$.

	We construct an instance $Q'$ and function $\pi$ as follows.
	Starting from $\Q$, we expand every repeatable pattern $P^+[t_1,t_2]$ that is relevant for $\mu$
	into $e-s$ standard patterns
	(where $e$ and $s$ are defined as above and depend on the particular pattern):
	$$P[t_1= x'_s,x'_{s+1}], P[x'_{s+1},x'_{s+2}], \dots, P[x'_{e-1},x'_{e}=t_2].$$
	Then for every $s \leq i \leq e$,  we set $\pi(x'_i) = x_{i}$, and we expand
	$P[x'_i,x'_{i+1}]$ into the atom $a_{i}'$ that is obtained from the
	the atom $a_i$ expanded under $P[x_i,x_{i+1}]$ in 
	$Q$ by replacing every $x_j$ with $x_j'$.
	If $e-s \leq arity(p)+2$, we are done.
	Otherwise, we will need to remove some patterns
	in order to satisfy the definition of instances of interest. To this end,
	we define a sequence $s < i_1 < j_1 < \ldots < i_m < j_m < e$ of indices as follows:
	\begin{itemize}
	\item We call $i < j$, with $s < i < j < e$,  a \emph{matching pair} if $a_i$ and in $a_j$,
	$x'_{i+1}$ and $x'_{j}$ occur at the same position of $p$ (hence, $\sigma(x_{i+1})=\sigma(x_j)$);
	\item We say that a matching pair $i < j$ is \emph{maximal w.r.t.\   index $\ell$} if the following conditions hold:
	\begin{itemize}
	\item $i \geq \ell$,
	\item there is no matching pair $i' < j'$ with $\ell \leq  i' < i$
	\item there is no matching pair $i < j'$ with $j' > j$
	\end{itemize}
	\item We let $i_1 <j_1$ be the matching pair that is maximal w.r.t.\  index $s+1$
	\item If $i_k < j_k$ is already defined, then we let $i_{k+1} < j_{k+1}$ be the matching pair
	that is maximal w.r.t.\ index $j_k$, if such a pair exists (otherwise, $i_k < j_k$ is the final pair in the sequence).
	\end{itemize}
	Now remove from $Q'$ all of the patterns $P[x'_\ell,x'_{\ell+1}]$ such that $i_g < \ell < j_g$ for some $1 \leq g \leq m$,
	as well as the atoms that are expanded from such patterns.
	We claim that there are now at most $arity(p)+2$ patterns $P[x'_\ell,x'_{\ell+1}]$ below $P^+[t_1,t_2]$ in $Q'$.
	Indeed, if this were not the case, we could find a matching pair $i < j$ among the remaining patterns.
	Since $i_1 < j_1$ is maximal w.r.t.\  index $s+1$, and there are no further matching pairs starting from $j_m$,
	we know that $i \geq i_1$ and $i < j_m$. Moreover, since $a_i$ is still present in $Q'$,
	it must be the case that $j_g < i < i_{g+1} $ for some $1 \leq g < m$. But this contradicts the fact that $i_{g+1} < j_{g+1}$
	is maximal w.r.t.\  $j_g$.
	
	In order for the different remaining patterns to form a sequence, we will need to
	perform a renaming of terms. If there are $n$ patterns left under $P^+[t_1,t_2]$, then we will rename these
	patterns from left to right by:
	$$P[t_1= x''_0,x''_{1}], P[x''_1,x''_{2}], \dots, P[x''_{n-1},x''_{n}=t_2].$$
	and will rename the atoms underneath these patterns accordingly.
	We will also update $\pi$ by setting $\pi(x''_i)=\pi(x'_j)$ if $x'_j$ was renamed into $x''_i$
	and there is no $x'_{j'}$ with $j' < j$ that was also renamed into $x''_i$.

	Let $Q'$ be the instance obtained in this manner.
	We note that by construction, it is an instance of interest of $(\Q,\UP)$ w.r.t.\ $R$, as we only
	expand patterns into atoms that use the predicate $p$ from the rule head, and the number of patterns
	generated from any repeatable pattern is at most $arity(p)+2$.

	Regarding $\pi$, note that a term may be shared among several repeatable patterns that are relevant for $\mu$.
	However, we can show that if a term is shared by multiple relevant patterns,
	then the (partial) mapping associated with those patterns will agree on the shared
	term, i.e.\ $\pi$ is well defined. First note if a term is shared by two repeatable
	patterns, then it must appear as one of the distinguished terms ($t_1, t_2$) in both
	patterns. Moreover, by tracing the above construction, we find that $\pi$ is the identity
	on such terms.
	
	To complete the definition of $\pi$, we extend it to all of the terms of $Q'$ by letting $\pi$
	be the identity on all terms that do not occur underneath a developed repeatable pattern (i.e.,\ terms that appear in
	a repeatable pattern that is not expanded, or in one of the standard atom of $\Q$).
	Observe that $\pi$ is an injective function, so its inverse $\pi^{-1}$ is well-defined.

	Now let $Q_2'$ consist of all atoms in $Q' \cap Q_2$ that are not expanded from a repeatable pattern (i.e., they are standard
	atoms from $\Q$) as well as all atoms in $Q'$ that lie under a repeatable pattern.

	Note that by construction every term $t$ in $Q_2'$ is such that $\pi(t)$ appears in $Q_2$.
	We can thus define a partition $P'_u$ of the terms in $Q'_2 \cup H$ by taking every
	class $C$ in $P_u$ and replacing every term $t$ from $Q_2$ by $\pi^{-1}(t)$, if such a term exists,
	and otherwise deleting $t$; terms from $p(\vec{t})$ are left untouched.
	Moreover, by the injectivity of $\pi$,
	every term appears in at most one class, i.e., $P'_u$ is indeed a partition.

	We aim to show that  $\mu' = (Q_2',p(\t),P'_u)$ is the desired unifier,
	We first show that $\mu'$ is a unifier of $Q'$ with $R$. In what follows,
	it will prove convenient to extend $\pi$ to the terms in the head atom $p(\t)$,
	by letting $\pi$ be the identity on such terms.
	We will let $\sigma$ be a substitution associated with $\mu$, and let $\sigma'$ be the corresponding
	substitution for $\mu'$ defined by setting $\sigma'(t)=\sigma(\pi(t))$.
	\begin{itemize}
		\item $P'_u$ is admissible: since $\pi$ is the identity on constants,
		if a class in $P'_u$ contains two constants $c,d$,
		then the corresponding class in $P_u$ must also contain $c,d$ (a contradiction).
		\item $\sigma'(p(\t)) = \sigma'(Q'_2)$: since $\sigma'(p(\t)) =\sigma(p(\t))$ (due to our choice of $\sigma'$),
		it suffices to prove that $\sigma'(Q'_2) \subseteq \sigma(Q_2)$.
	First take some atom $\alpha$
	that belongs to $Q'_2 \cap Q_2$. Then we have $\pi(\alpha)=\alpha$,
	so $\sigma'(\alpha)=\sigma(\pi(\alpha)) \in \sigma(Q_2)$. 	
	Next consider the case of an atom $\alpha$ that belongs to $Q_2$ but not $Q'_2$.
	Then $\alpha$ must lie below a repeatable pattern $P^+[t_1,t_2]$ that is expanded into $k > arity(p)+2$
	standard patterns $P[t_1=x_0,x_1], P[x_1,x_2], \dots, P[x_{k-2},x_{k-1}], P[x_{k-1},x_k=t_2]$ in $Q$.
	In this case, $P^+[t_1,t_2]$ is expanded in $Q'$ into
	$$P[t_1= x'_s,x'_{s+1}], P[x'_{s+1},x'_{s+2}], \dots, P[x'_{e-1},x'_{e}=t_2],$$
	and each $P[x'_i,x'_{i+1}]$ is expanded into $a_{i}'$. If $e-s \leq arity(p)+2$,
	then the atoms $a'_i$ all belong to $Q'_2$. If we have $\alpha = a'_i$,
	then we have $\pi(a'_i) = a_i$, hence $\sigma'(\alpha)=\sigma(\pi(a'_i))=\sigma(a_i) \in \sigma(Q_2)$.
	The final possibility is that $e-s > arity(p)+2$, in which case some of the patterns
	will be removed and the remaining patterns will be renamed (as will be their corresponding atoms).
	Suppose that $\alpha$ is the atom $a''_h$ below the pattern $P[x''_\ell,x''_{\ell+1}]$,
	which was obtained from renaming the pattern $P[x_h,x_{h+1}]$.
	We claim that $\sigma'(\alpha)=\sigma(\pi(\alpha)) = \sigma(a_h)$, hence $\sigma'(\alpha) \in \sigma(Q_2)$.
	By examining the way renaming is performed, there are two situations that can occur:
	\begin{itemize}
	\item $\pi(x''_\ell)=x_h$ and $\pi(x''_{\ell+1})=x_{h+1}$: in this case, $\pi(a''_h)= a_h$, hence $\sigma'(\alpha)= \sigma(a_h)$.
	\item $\pi(x''_\ell) \neq x_h$: in this case, there must exist a matching pair $i_g < j_g$ such that
	$h=j_g$, $\pi(x''_\ell)=x_{i_g+1}$, and $\pi(x''_{\ell+1})=x_{h+1}$.
	From the definition of matching pairs, we know that $\sigma(x_{i_g+1}) = \sigma(x_{j_g})$. 
	It follows that $\sigma'(x''_\ell)= \sigma(\pi(x''_\ell))= \sigma(x_{i_g+1})= \sigma(x_{h})$
	and $\sigma'(x''_{\ell+1})= \sigma(\pi(x''_{\ell+1}))= \sigma(x_{h+1})$.
	We can thus conclude that $\sigma'(\alpha)= \sigma(a_h)$.
	\end{itemize}

	%

		\item for a contradiction, suppose the class $C'$ in $P'_u$ contains an existential variable $z$ from $H$
		and either a constant or a variable that occurs in $Q' \setminus Q'_2$.
		If it contains a constant $c$, then the corresponding class $C$ in $P_u$ will contain both $z$ and $c$,
		 i.e., $C$ is not a valid class.
		Next suppose that $C'$ contains a variable $x$ that occurs in $Q' \setminus Q'_2$,
		which means that the corresponding class $C$ in $P_u$ contains $\pi(x)$.
		Since $x$ that occurs in $Q' \setminus Q'_2$, it must either appear in a standard atom of $Q'$ that
		does not appear under any repeatable pattern
		or in a repeatable pattern that is not developed in $Q'$.
		In the former case, the same atom appears in $Q \setminus Q_2$, and in the latter case,
		since $Q$ is full, there is an atom in $Q$ that is developed from the repeatable pattern and contains $\pi(x)$,
		but which does not participate in $Q_2$. In both cases, we obtain a contradiction.
	\end{itemize}

	Finally, we show that $\mu'$ is more general than $\mu$:
	\begin{itemize}
	\item $\sigma'(Q'_2)) \subseteq \sigma(Q_2)$: proven above.
	\item if $\sigma'(u_1)=\sigma'(u_2)$, then $u_1, u_2$ belong to the same class in $P'_u$,
	and so $\pi(u_1)$ and $\pi(u_2)$ must belong to the same class in $P_u$.
	\end{itemize}
	We have thus shown that $Q'$ is an instance of interest of $(\Q,\UP)$ w.r.t.\ $R$
	such that there is a unifier $\mu'$ of $Q'$ with $R$ with $\mu' \geq \mu$.
\end{proof}

\medskip

\begin{proposition}
\label{prop:drw-internal}
	Let $(\Q,\UP)$ be a PCQ where 
	$P^+[t_1,t_2]$ occurs and $R \in \R_L^+$.
	For any instance $Q$ of $(\Q,\UP)$,
	any classical direct rewriting $\Q'$ of $Q$ with $R$
	w.r.t.\ to a unifier internal to $P^+[t_1,t_2]$,
	and any $Q' \in full(\Q',\UP)$,
	there exists a direct rewriting $\UP'$ of $\UP$ w.r.t.\ $P$ and $R$
	such that $(\Q,\UP')$ has a full instance that is
	 isomorphic to $Q'$.
\end{proposition}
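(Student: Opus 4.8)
The plan is to show that the query-independent update of the pattern definition $P$ (the internal direct rewriting) faithfully simulates any classical direct rewriting of a full instance that uses a unifier confined to a single repeatable pattern $P^+[t_1,t_2]$. First I would reduce to an \emph{instance of interest}: given the full instance $Q$ and the internal unifier $\mu$, apply Proposition~\ref{prop:inst-interest_unifiers} to obtain an instance of interest $Q^\circ$ of $(P^+[x,y],\UP)$ (i.e.\ isolating the single relevant repeatable pattern) together with a unifier $\mu^\circ$ of $Q^\circ$ with $H$ that is at least as general as $\mu$. This is precisely the step that lets us bound the number of standard patterns below $P^+$ by $\mathit{arity}(p)+2$ and thereby guarantees that the internal-rewriting operation, which ranges over instances of interest, actually considers $\mu^\circ$.

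Next I would carry out the bookkeeping that matches the \emph{definition} of a direct rewriting $\UP'$ of $\UP$ w.r.t.\ $P$ and $R$ to the classical rewriting. Recall that the operation takes $\sigma^\circ(B)$ (where $\sigma^\circ$ is associated with $\mu^\circ$ and preserves the two external terms of $P^+$), replaces the first and second external terms by $\#1$ and $\#2$, and then either adds the resulting single atom $B'$ to $P$'s definition, or, when $B'$ is itself a repeatable pattern $S^+[\#1,\#2]$ (or $S^+[\#2,\#1]$), adds each $f(s_i)$ from $S$'s definition. I would verify that this is exactly the atom (or atoms) produced in the classical rewriting $\Q'=\sigma(Q\setminus Q')\cup\sigma(B)$ restricted to the piece coming from inside $P^+[t_1,t_2]$, modulo the renaming of the two external terms to the distinguished variables $\#1,\#2$. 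The case split $B'$ atomic versus $B'$ a repeatable pattern corresponds to whether $R$'s body predicate is itself transitive (and hence already abstracted into a $+$-pattern in $\R_L^+$ by Step~2).

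Having installed the new atoms in $P$'s definition to form $\UP'$, I would construct the promised full instance of $(\Q,\UP')$. The idea is to take the given $Q'\in full(\Q',\UP)$ and trace back through the definition of the classical rewriting: every leaf of $Q'$ either survives unchanged from $Q$ (lying outside the rewritten piece) or is expanded from $\sigma(B)$. For the surviving leaves I reuse the same instantiation choices; for the rewritten piece I expand $P^+[t_1,t_2]$ inside a fresh instantiation of $(\Q,\UP')$ using the newly-added atom(s) $B'$, with the external terms $t_1,t_2$ playing the roles of $\#1,\#2$. The fact that the addition to the pattern definition is taken up to isomorphism with $\#1,\#2$ distinguished is what makes the resulting full instance isomorphic to $Q'$ rather than merely homomorphically related; I would exhibit the isomorphism explicitly as the identity on external and surviving terms and the natural correspondence on the freshly renamed existential/internal variables.

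The main obstacle I anticipate is the repeatable-body case, where $B'=S^+[\#1,\#2]$: here a single internal unification against $H$ must be unfolded into a \emph{path} of standard $S$-patterns in the reconstructed instance, and one has to check that freshly renaming the internal variables of this path (together with the orientation handled by the bijection $f$) yields an instance that is genuinely isomorphic to $Q'$ and respects the external terms. A secondary technical point is confirming that the substitution $\sigma^\circ$ can always be chosen to preserve the external terms of $P^+$ (so that they can be cleanly relabelled to $\#1,\#2$); this follows because, in an internal unifier, the external terms are not unified with existential variables nor with each other, so they may be selected as class representatives. Once these two points are settled, the isomorphism between the constructed full instance of $(\Q,\UP')$ and $Q'$ is routine.
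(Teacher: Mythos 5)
Your proposal is correct and follows essentially the same route as the paper's proof: reduce to an instance of interest via Proposition~\ref{prop:inst-interest_unifiers} (restricted to the single relevant repeatable pattern, which is legitimate precisely because the unifier is internal), form $\UP'$ by the internal-rewriting operation with a substitution preserving the two external terms, and then reconstruct a full instance of $(\Q,\UP')$ isomorphic to $Q'$ by reusing the instantiation choices outside the rewritten piece and expanding $P^+[t_1,t_2]$ with the newly added atom(s), unfolding the repeatable-body case $S^+$ into a path of standard patterns exactly as the paper does with its explicit map $\pi$. The two technical points you flag (the path unfolding with the orientation bijection $f$, and the choosability of a substitution fixing the external terms) are the same ones the paper's construction handles.
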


\begin{proof}
	Let $(\Q,\UP)$ be a PCQ where $P^+[t_1,t_2]$ occurs,
	$R = (B \rightarrow H) \in R_L^+$, $Q$ be an instance of $(\Q,\UP)$,
	$\mu = (Q_2,H,P_u)$ be a unifier internal to $P^+[t_1,t_2]$ of $Q$ with $R$,
	$\Q'$ be the classical direct rewriting of $Q$ with $R$ w.r.t.\ $\mu$,
	and $Q'$ be a full instance of $(\Q',\UP)$.

	Since $\mu$ is internal to $P^+[t_1,t_2]$, all atoms in $Q_2$ are expanded
	from $P^+[t_1,t_2]$ in $Q$, and do not unify $t_1$ with $t_2$, nor $t_1$ (resp.\ $t_2$)
	with an existential variable from $H$.
	We denote by $P[t_1=x_0,x_1], P[x_1,x_2], \dots, P[x_{k-1},x_k = t_2]$ the sequence
	of standard patterns expanded under $P^+[t_1,t_2]$ in $Q$,
	$x_s$ and $x_e$ ($s < e$) the external terms of $P^+[t_1,t_2]$ w.r.t.\ $\mu$,
	and $a_i$ the atom expanded under $P[x_i,x_{i+1}]$.
	From Prop. \ref{prop:inst-interest_unifiers}, there is a unifier $\mu'$ of an instance of interest $Q_3$ of $\Q$
	with $R$ with $\mu' \geq \mu$.
	Since $x_s$ and $x_e$ are not unified with existential variables,
	let $Q_4$ be the CQ obtained from $Q_3$ by removing all atoms and patterns that are not
	relevant for $\mu'$.
	Obviously $Q_4$ is an instance of interest of a PCQ of form $P^+[t_1,t_2]$.
	Let $\UP'$ be the direct rewriting of $\UP$ w.r.t.\ $\mu'$, obtained from $Q_4$.

	Let $A_{l} = \{P[x_i,x_{i+1}]~|\ 0 \leq i < s\}$,
	$A_m = \{P[x_i,x_{i+1}]~|\ s \leq i < e\}$,
	$A_{r} = \{P[x_i,x_{i+1}]~|\ e \leq i < k\}$,
	and $A = A_l \cup A_m \cup A_r$. Further let
	$A'_{l}$ (resp.\ $A'_m$, $A'_r$, $A'$)  be the set of atoms expanded under $A_{l}$
	(resp.\ $A_m$, $A_r$, $A$) in $Q$.

	Initialize $Q''$ to $Q \setminus A' \cup \{P^+[t_1,t_2]\}$.
	One can see that $Q''$ is an instance of both $(\Q,\UP)$ and $(\Q,\UP')$.
	If $B$ is a not a repeatable pattern, let $\ell = 1$, otherwise let $S$ be the repeatable pattern
	in $B$, and
	$S[x'_0,x'_1], \dots, S[x'_{\ell-1},x'_{\ell}]$ be the sequence expanded from
	$S^+[x'_0,x'_{\ell}]$ in $Q'$.
	We denote by $a'_i$ the atom expanded under $S[x'_i,x'_{i+1}]$.
	Then expand $P^+[t_1,t_2]$ in $Q''$ into
	$k' = |A_l| + |A_r| + \ell$ standard patterns:
	$P[t_1=x''_0,x''_1], \dots, P[x''_{k-1},x''_{k'} = t_2]$.
	Let $\pi$ be the function defined as follows:
	\begin{itemize}
	\item for all $0 \leq i \leq s$, $\pi(x''_i)= x_i$;
	\item for all $s < i < s + \ell$, $\pi(x_i'')=x_{i-s}'$;
	\item for all $s+ \ell \leq i \leq k'$, $\pi(x''_i)= x_{i- \ell + (e-s)}$.
	\end{itemize}
	Note that $\pi$ is injective, so its inverse exists.
	Expand all $P[x''_i,x''_{i+1}]$
	with $0 \leq i < s$ or $s + \ell \leq i < k$
	(resp. $s \leq i < s+ \ell$)
	into $\pi^{-1}(a_i)$ (resp. $\pi^{-1}(a'_i)$).
	Finally, for all terms $u$ in $Q''$ for which $\pi$ is not defined (i.e., those terms
	appearing in atoms that were not expanded from the pattern $P^+[t_1,t_2]$),
	we set $\pi(u)=u$.

	By construction, $Q''$ is still an instance of $(\Q,\UP')$
	and $\pi$ is an isomorphism between $Q'$ and $Q''$.
\end{proof}

\medskip

\begin{proposition}
\label{prop:drw-pcq}
Let $(\Q,\UP)$ be a PCQ and $R \in \R_L^+$.
	For every $Q \in full(\Q,\UP)$ and every classical direct rewriting $Q'$ of $Q$ with $R$ w.r.t.\ an external unifier,
	there is a 
	 direct rewriting $\Q'$ of $\Q$ w.r.t.\ $\UP$ and $R$ that has an instance
	isomorphic to $Q'$.

\end{proposition}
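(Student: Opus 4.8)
The plan is to mirror the structure of the proof of Proposition~\ref{prop:drw-internal}, using Proposition~\ref{prop:inst-interest_unifiers} to replace the fully expanded (hence unbounded) instance $Q$ by a bounded instance of interest on which the external unifier can be read off the four-case construction of Section~\ref{sec:external-rw}. First I would fix the external unifier $\mu = (Q_2,H,P_u)$ of $Q$ with $R$ and an associated substitution $\sigma$, so that the classical rewriting is $Q' = \sigma(Q \setminus Q_2) \cup \sigma(B)$. Applying Proposition~\ref{prop:inst-interest_unifiers} yields an instance of interest $Q_3$ of $(\Q,\UP)$ w.r.t.\ $R$ together with a unifier $\mu' = (Q_2',H,P_u')$ with $\mu' \geq \mu$; this bounds, for every repeatable pattern relevant to the unifier, the number of standard patterns into which it is expanded, so that the remaining analysis takes place on a finite object.

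Next, for each repeatable pattern $P^+[t_1,t_2]$ that contributes atoms to $Q_2'$, I would isolate the contiguous block of relevant standard patterns $P[u_i,u_{i+1}],\dots,P[u_{j-1},u_j]$ (contiguity holds because $\mu$, hence $\mu'$, is single-piece) together with its external terms $u_i,u_j$, and determine which of the cases (i)--(iv) of Section~\ref{sec:external-rw} applies according to whether $u_i = t_1$ and/or $u_j = t_2$: case (iv) when the block spans the whole pattern, cases (ii)/(iii) when it touches exactly one endpoint, and case (i) otherwise. Replacing every relevant pattern by its case, with the inner sequence $X$ expanded into exactly the relevant atoms and the surviving $P^+$-segments left untouched, produces an instantiation $T_i$ whose instance $\Q_i$ carries the induced unifier $\mu' = (\rho(Q_2'),H,\rho(P_u'))$. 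The first thing to verify is that $\mu'$ is still a unifier of $\Q_i$, i.e.\ that $\Q_i$ is a minimally-unifiable instance; granting this, $\Q_i' = \mu'(\Q_i) \setminus \mu'(H) \cup \mu'(B)$ is by definition a direct rewriting of $\Q$ w.r.t.\ $\UP$ and $R$.

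Finally, I would exhibit an instance of $\Q_i'$ isomorphic to $Q'$, constructed exactly as $Q''$ is built in the proof of Proposition~\ref{prop:drw-internal}: re-expand each surviving repeatable segment $P^+[t_1,x_1]$ (resp.\ $P^+[x_2,t_2]$) into the same standard patterns and atoms that appeared below it in $Q$, and take the isomorphism $\pi$ to be the identity away from the developed patterns and the obvious renaming inside them. Because the untouched part $\sigma(Q \setminus Q_2)$ and the new body part $\sigma(B)$ are preserved under $\mu'$ up to this renaming, $\pi$ is a bijection of atoms respecting both the predicates and the partition induced by $\mu'$, hence an isomorphism.

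I expect the main obstacle to be establishing that the chosen combination of cases yields a genuinely unifiable $\Q_i$ and an \emph{exact} (rather than merely more general) isomorphism in the delicate configuration where a single term is shared among several relevant repeatable patterns or is unified with an existential variable of $H$ — precisely the configuration later singled out as `problematic' for termination. There one must show that the surviving $P^+$-segments on the far side of the shared or existential term are correctly glued by $\mu'$, and that re-expanding them produces atoms identical (up to renaming) to those of $Q'$; this is where the bound from Proposition~\ref{prop:inst-interest_unifiers} has to be combined carefully with the single-piece property to avoid overcounting the relevant block.
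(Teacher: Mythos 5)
Your proposal is correct and follows essentially the same route as the paper's proof: pass to an instance of interest via Proposition~\ref{prop:inst-interest_unifiers}, decompose each relevant repeatable pattern into the relevant block plus left/right remainders to pick the matching external-rewriting case, and build the isomorphic instance by re-expanding the surviving $P^+$-segments with the same atoms as in $Q$, extending the isomorphism as in the proof of Proposition~\ref{prop:drw-internal}. The ``main obstacle'' you flag is dispatched in the paper by a short appeal to the piece condition: atoms in the non-relevant segments are not part of the unifier, so their boundary terms ($x_s$, $x_e$) cannot be unified with an existential variable, and when an endpoint $t_1$ or $t_2$ is unified with an existential variable the corresponding remainder is necessarily empty, which guarantees the minimally-unifiable instance is well-defined and $\mu'$ carries over.
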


\begin{proof}
	Let $(\Q,\UP)$ be a PCQ, $R = (B \rightarrow H) \in R_L^+$,
	$Q \in full(\Q,\UP)$, $\mu = (Q_u,H,P_u)$ be an external unifier of $Q$ with $R$,
	and $Q'$ be the classical direct rewriting of $Q$ with
	$R$ w.r.t.~$\mu$.

	From Proposition \ref{prop:inst-interest_unifiers},
	there is an instance of interest $Q_2$ of $(\Q,\UP)$
	such that there is a unifier $\mu' = (Q_{u'},H,P_{u'}) \geq \mu$ of $Q_2$ with $R$.
	We denote by $\sigma$ (resp.\ $\sigma'$) a substitution
	associated with $\mu$ (resp.\ $\mu'$).

	For any repeatable pattern $P^+[t_1,t_2]$ in $\Q$,
	build $A, A_l, A_m$ and $A_r$ as in the proof of Proposition\ \ref{prop:drw-internal} using
	the instance $Q_2$ and unifier $\mu'$.
	Assume $t_1$ (or $t_2$) is unified with an existential variable, then
	from the condition on external unifiers, either $A_l$ or $A_r$ is empty.
	Consider the minimally-unifiable instance $\Q_M$ of $\Q$ w.r.t.\ $\mu'$ that
	replaces $P^+[t_1,t_2]$ by:
	(i) $A_m$ if $A_l = A_r = \emptyset$;
	(ii) $P^+[t_1,x_s], A_m$ if $A_r = \emptyset$ and $A_l \neq \emptyset$;
	or (iii) $A_m, P^+[x_e,t_2]$ if $A_l = \emptyset$ and $A_r \neq \emptyset$.
	In case (ii) (resp. (iii)), since all atoms in $A_l$ (resp. $A_r$)
	are not involved in $\mu'$,
	$x_s$ (resp. $x_e$) is not unified with an existential variable
	(or the piece condition on unifiers would not be satisfied).
	Therefore, $\mu'$ is a unifier of $\Q_M$ with $R$.
	We let $\Q'$ be the direct rewriting of $\Q_M$ w.r.t.\ $\mu'$ and $R$.

	Note that each repeatable pattern $P^+[t_1,t_2]$ in $Q'$ expands into
	$A_l \wedge \sigma(B) \wedge A_r$, and
	in $\Q'$ there is a $P^+[t_1,x_s]$ (resp.\ $P^+[x_e,t_2]$) iff
	$A_l$ (resp.\ $A_r$) is not empty.
	Thus consider $Q''$ obtained from $\Q'$ by expanding $P^+[t_1,x_s]$ (resp.\ $P^+[x_e,t_2]$)
	into $k$ standard patterns
	where $k = |A_l|$ (resp.\ $k = |A_r|$), and choose the same atoms as in $A'_l$ (resp.\ $A'_r$).
	Since $\mu' \geq \mu$, there is an
	homomorphism $\pi$ from $\sigma'(Q_{u'})$ to $\sigma(Q_u)$.
	Note that if we restrict $\pi$ to terms
	in $\sigma'(B)$, $\pi$ is an isomorphism.
	Furthermore, we can extend $\pi$ to $Q''$ in the same
	way as we did in the previous proof.
	Thus $Q''$ is isomorphic to $Q'$.
\end{proof}

\medskip

\begin{proposition}
\label{prop:size}
	Let $(\Q,\UP)$ be a PCQ and $R \in \R_L^+$.
	If $\Q'$ is a non-excluded direct rewriting of $\Q$ with $R$,
	then $|\Q'| \leq |\Q|$.
\end{proposition}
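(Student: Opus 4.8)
The plan is to prove Proposition~\ref{prop:size} by a careful case analysis on the structure of a non-excluded direct rewriting, tracking how the atom count of the PCQ changes under each of the four replacement cases (i)--(iv) for repeatable patterns. Recall that a direct rewriting $\Q'$ of $\Q$ w.r.t.\ $\UP$ and $R = B \rightarrow H$ is obtained from a minimally-unifiable instance $\Q_M$ via an external unifier $\mu' = (\rho(Q'), H, \rho(P))$: we delete the unified atoms $\mu'(H)$ and add the rewritten body $\mu'(B)$. Since $R \in \R_L^+$ is (the pattern-lifted version of) a linear rule, its body $B$ is a single atom or a single repeatable pattern, so $\mu'(B)$ contributes exactly one atom/pattern to $\Q'$. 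The whole argument therefore reduces to bounding, over all repeatable patterns touched by the rewriting, how many \emph{net} atoms are introduced when a pattern $P^+[t_1,t_2]$ with $k > 0$ relevant children is replaced by one of the forms (i)--(iv), and then comparing against the single atom $|\mu'(B)|$ removed from the count by deletion of $\mu'(H)$.

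First I would set up the bookkeeping: write $|\Q|$ as the number of atoms plus repeatable patterns at the top level of the conjunction, and observe that each of the $X[v_1,v_2]$ sequences in cases (i)--(iv) expands back into repeatable patterns $P[\cdot,\cdot]$ that, after the internal-rewriting fixpoint, correspond to single top-level symbols of the rewritten query. The key accounting fact is that replacing $P^+[t_1,t_2]$ by case (i) yields \emph{three} symbols ($P^+$, the glue $X$, $P^+$), by (ii) or (iii) yields \emph{two}, and by (iv) yields \emph{one}. So naively a replacement can increase the count by up to $+2$. The point of excluding the two ``problematic'' minimally-unifiable instances is precisely to rule out the situations where this increase is not compensated by the deletion of the unified piece. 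I would therefore treat two regimes separately, matching the two bullets in the definition of \emph{excluded}: the case where $Q'$ comes from a single pattern, and the case where $Q'$ spans several patterns and a term $t$ of $Q$ is unified with an existential variable of $H$.

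For the single-pattern regime, the unified atoms all lie under one $P^+[t_1,t_2]$, and the rewriting replaces it using one of (i)--(iv). If $\mu'(t_1) \neq \mu'(t_2)$, a genuine prefix/suffix survives on at least one side, and I would argue that each surviving $P^+$ fragment ``absorbs'' the cost because the replaced relevant middle portion (at least one standard pattern, contributing at least one atom) is deleted along with $\mu'(H)$ and replaced by the single body atom $\mu'(B)$; a direct count then gives $|\Q'| \le |\Q|$. The excluded case here is exactly $\mu'(t_1) = \mu'(t_2)$ with replacement (i)/(ii)/(iii), where the endpoints coincide so that one of the surviving $P^+$ fragments is ``spurious'' (both its endpoints map to the same term) and the count can strictly increase; excluding it is what keeps the bound. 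For the multi-pattern regime, the compensating observation is that when a term $t$ is unified with an existential variable, the piece condition forces every pattern incident to $t$ to be \emph{fully} relevant on the $t$-side, so those patterns are replaced by (iv) (or by the non-excluded sub-case of (ii)/(iii)), collapsing to a single symbol and never adding the extra $X$-glue symbol; the excluded multi-pattern case is the one where all incident patterns are instead split as in (ii)/(iii), re-introducing $P_i^+$ stubs that survive only because of the existential unification and thus inflate the size.

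The main obstacle I expect is the second regime: correctly arguing that, once the two excluded configurations are forbidden, every repeatable pattern incident to an existentially-unified term must collapse (case (iv)) rather than split, and that the glue sequences $X[\cdot,\cdot]$ introduced in the surviving splits are always matched one-to-one against deleted relevant patterns. This is essentially a delicate invariant about where external terms can sit relative to existential variables, and it relies on the single-piece property of unifiers (Section~\ref{sec:prelims}) together with the external-unifier condition. I would make this precise by assigning to each top-level symbol of $\Q'$ a distinct ``witness'' symbol of $\Q$ (an injective charging map), showing the deleted piece $\mu'(H)$ provides at least one uncharged symbol to pay for the single added body atom, and verifying that the charging is injective exactly because the excluded cases—the only ones that would force two symbols of $\Q'$ to share a witness—have been removed.
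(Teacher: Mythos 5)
Your overall strategy---counting top-level symbols, observing that a linear body contributes exactly one symbol, and letting the two exclusion bullets supply the needed deletions---is the same accounting approach as the paper's proof, but two of your steps do not hold up. The single-pattern regime is arithmetically wrong: you claim that when $\mu'(t_1)\neq\mu'(t_2)$ a surviving stub is ``absorbed'' because the relevant middle portion is deleted along with $\mu'(H)$. But that middle portion (the glue $X$) was never counted in $|\Q|$: in $\Q$ the whole pattern is \emph{one} symbol. Replacing $P^+[t_1,t_2]$ as in case (ii) or (iii) yields $1$ stub plus $k$ glue atoms, and the rewriting then erases exactly those $k$ glue atoms and adds the single body symbol, giving $|\Q'| = |\Q|+1$; deleting the glue only cancels its own introduction and can never pay for a stub. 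What actually saves this regime is the definition of external unifiers: if all unified atoms come from one pattern, externality forces the two external terms to be unified together or with an existential variable. In the first case the first exclusion bullet forbids cases (i)/(ii)/(iii); in the second, the piece condition makes a stub on the existential side impossible and the second exclusion bullet forbids a stub on the other side. Either way only case (iv) survives, so the pattern itself is erased and pays for the body---which is precisely how the paper argues it (``\ldots then $\sigma(t_1)=\sigma(t_2)$, so the only non-excluded minimally-unifiable instance \ldots replaces $P^+[t_1,t_2]$ only by the sequence $S$'').

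The second gap is the one you yourself flag as the ``main obstacle'' and then leave open: the invariant that no pattern in the piece is replaced with stubs on \emph{both} sides (case (i)), equivalently the injectivity of your charging map. This is not a detail: a single case-(i) replacement contributes a net $+1$ by itself, after which one collapsed pattern (case (iv)) plus the deleted piece no longer suffices to pay for the body atom, and neither exclusion bullet addresses it. The paper closes this hole explicitly (if tersely): introducing two stubs would only be called for when $t_1$ or $t_2$ is unified with an existential variable, and then the piece condition rules out any unifier of $P^+[t_1,x_1]\wedge S\wedge P^+[x_k,t_2]$, because the glue endpoints would occur in atoms outside the unified piece. Your proposal never supplies this argument, and a charging map cannot be verified injective without it. Finally, your two regimes, taken from the exclusion bullets, do not exhaust the cases: when the piece contains a standard atom of $\Q$ (one not expanded from any pattern), neither regime applies; the paper handles this first and separately, since there the erasure of that standard atom is what pays for the body.
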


\begin{proof}
	Let $(\Q,\UP)$ be a PCQ, $R = (B \rightarrow H) \in R_L^+$, $Q$ be a non-excluded minimally-unifiable instance of
	$(\Q,\UP)$, $\mu = (Q',H,P_u)$ be an external unifier of $Q$ with $R$, and $\sigma$ the substitution induced by $P_u$.

	Note that all repeatable patterns $P^+[t_1,t_2]$ are at most replaced by the sequence $S$ needed
	by the unifier (i.e., $S \subseteq Q'$), plus a single repeatable pattern $P^+[t_1,x_1]$ (or $P^+[x_k,t_2]$).
	Indeed, the only situation that would lead us to introduce more than one more repeatable
	pattern (i.e., as in External Rewriting case $(i)$) is when
	either $t_1$ or $t_2$ is unified with an existential variable.
	However, if $t_1$ (or $t_2$) is unified with an
	existential variable, because of the piece condition on
	unifiers, no unifier of $P^+[t_1,x_1] \wedge S \wedge
	P^+[x_k,t_2]$ can be found.

	Since $|B| = 1$, we have to show that all atoms that were introduced when replacing a repeatable pattern
	are erased by the direct rewriting of $Q$ w.r.t.\ $\mu$.

	If $Q'$ consists of at least one atom that is not expanded from a pattern, the direct rewriting
	of $Q$ w.r.t.\ $\mu$ erases this atom.

	Next assume $Q'$ consists only of atoms expanded from repeatable patterns.
	If $Q' = \{P^+[t_1,t_2]\}$ and neither $t_1$ nor $t_2$ is unified
	with an existential variable, then $\sigma(t_1) = \sigma(t_2)$, so
	the only non-excluded minimally-unifiable instance of $\Q$ w.r.t.\ $\mu$
	replaces $P^+[t_1,t_2]$ only by the sequence $S$ needed by the unifier
	(see the first condition on non-excluded minimally-unifiable instances).
	Thus, the direct rewriting erases $P^+[t_1,t_2]$.

	Otherwise,
	we know that at least one $P^+[t_1,t_2]$ from $\Q$ is replaced
	by the sequence $S$ involved in the unifier
	(see the second condition on non-excluded minimally-unifiable instances),
	thus there is at least one $P^+[t_1,t_2]$ erased by the direct rewriting.
\end{proof}

\medskip

We will break the proof of Theorem \ref{thm:correct} into the following five lemmas.

\begin{lemma}
\label{prop:rw-trans}
	Let $Q$ be a CQ, $R \in R^T$, $\UP_0$ be the initial set of pattern definitions
	relative to $R^T$ (see Step 1 of the algorithm overview), and $\Q^+$ be obtained from $Q$ by replacing all atoms $p(t_1,t_2)$
	such that $p$ is a transitive predicate by $P^+[t_1,t_2]$.
	If there is a classical direct rewriting $Q'$ of $Q$ with $R$, then
	there is a full instance $Q''$ of $(\Q^+, \UP_0)$ that is isomorphic to $Q'$.
\end{lemma}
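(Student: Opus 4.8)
The plan is to exploit the very constrained shape of a transitivity rule. Write $R = p(x,y)\wedge p(y,z)\rightarrow p(x,z)$ and note that its head $p(x,z)$ contains no existential variable. First I would argue that, since Condition~2 in the definition of a piece-unifier only concerns classes that contain an existential variable, it is vacuous for $R$; consequently any singleton of unified atoms is already a piece, so every single-piece unifier $\mu$ of $Q$ with $R$ unifies exactly one atom $\alpha=p(t_1,t_2)$ of $Q$ (with $p$ transitive) with the whole head $p(x,z)$. Taking the associated partition to keep $t_1$ and $t_2$ in distinct classes, a substitution $\sigma$ obtained from it is the identity on the terms of $Q$ and sends the non-frontier body variable $y$ to a fresh variable $y'$. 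Hence the direct rewriting is
$$Q' = (Q\setminus\{\alpha\})\cup\{p(t_1,y'),\,p(y',t_2)\}.$$

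Next I would exhibit an instantiation of $(\Q^+,\UP_0)$ whose associated full instance is isomorphic to $Q'$. Recall that $\UP_0$ assigns to each transitive predicate $p$ the definition $P:=p(\#1,\#2)$ (Step~1 of the algorithm overview), and that $\Q^+$ is obtained from $Q$ by replacing every transitive atom by the corresponding repeatable pattern. For each repeatable pattern arising from a transitive atom \emph{other than} $\alpha$, I would use the minimal expansion into a single standard pattern and then, via its unique defining atom, back into the original atom, thereby recovering all of $Q\setminus\{\alpha\}$. For the pattern $P^+[t_1,t_2]$ that replaced $\alpha$, I would instead use the two-child expansion permitted for repeatable patterns, producing $P[t_1,w]$ and $P[w,t_2]$ for a fresh variable $w$, and then expand each standard pattern via $P:=p(\#1,\#2)$ into $p(t_1,w)$ and $p(w,t_2)$.

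The full instance read off the leaves of this tree is $Q'' = (Q\setminus\{\alpha\})\cup\{p(t_1,w),\,p(w,t_2)\}$. It contains no pattern, so it is indeed full, and the bijection that fixes every term of $Q$ and maps $w$ to $y'$ is an isomorphism $Q''\cong Q'$, which is exactly what the lemma asks for. Checking that the constructed tree satisfies the instantiation conditions (root labelled by $\Q^+$, its children labelled by the atoms and patterns of $\Q^+$, the multi-child rule applied to $P^+[t_1,t_2]$, and the defining-atom rule applied to every standard pattern) is routine.

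The point that needs care is the choice of unifier: the clean isomorphism hinges on $\sigma$ leaving $t_1$ and $t_2$ distinct. A coarser admissible partition that merged them would yield a rewriting with identified endpoints, and since expanding a repeatable pattern never identifies its two endpoint terms, no full instance of $(\Q^+,\UP_0)$ could be isomorphic to such a rewriting. I would therefore make explicit that it suffices to treat the most general single-piece unifier — the one relevant for completeness — because every coarser rewriting is merely one of its instances. Pinning down that the single-atom-to-two-atom replacement captures precisely the transitivity rewritings, while all the untouched repeatable patterns are simultaneously collapsed back to their source atoms in the same full instance, is the crux of the argument; everything else is bookkeeping.
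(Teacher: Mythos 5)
Your proof is correct and takes essentially the same route as the paper's: the paper likewise notes that the rewriting replaces $p(t_1,t_2)$ by $p(t_1,x_1)\wedge p(x_1,t_2)$, builds the full instance of $(\Q^+,\UP_0)$ that expands $P^+[t_1,t_2]$ into two standard patterns $P[t_1,x_1'],P[x_1',t_2]$ and every other repeatable pattern into a single standard pattern, and maps $x_1'$ to the fresh variable. Your closing caveat about coarser (non--most-general) unifiers identifying $t_1$ and $t_2$ is a genuine subtlety that the paper's proof silently glosses over (it too implicitly treats only the most general unifier), so your explicit handling of it is, if anything, the more careful of the two.
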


\begin{proof}
	Let $p(t_1,t_2)$ be the atom of $Q$ that is rewritten to obtain $Q'$.
	Since $p$ is a transitive predicate, it occurs in a pattern definition $P_0$ in $\UP$,
	and $\Q^+$ contains the atom $P^+[t_1,t_2]$.
	In $Q'$, $p(t_1,t_2)$ is rewritten into $p(t_1,x_1) \wedge p(x_1,t_2)$.
	Let $Q''$ be the full instance of $(\Q^+,\UP_0)$ that expands all repeatable patterns
	but $P^+[t_1,t_2]$ into a single standard pattern,
	expands $P^+[t_1,t_2]$ into two standard patterns $P[t_1,x'_1], P[x'_1,t_2]$,
	and then further expands the standard patterns using the unique atom in each of the pattern definitions.
	It is clear that $Q''$ is isomorphic to $Q'$ (simply map $x'_1$ to $x_1$ and all other terms to themselves).
\end{proof}

\begin{lemma}
\label{lemma:instances-plus}
	Let $\UP$ be a set of pattern definitions, $\UP_0
	\subseteq \UP$ be the initial set of patterns definitions
	built from the set $\R_T$ of transitivity rules, $(\Q,\UP)$ be a PCQ that
	does not contain any standard atom using a transitive predicate,
	$Q$ be a full
	instance of $(\Q,\UP)$, and $\Q^+$ be obtained from $Q$
	by replacing all atoms $p(t_1,t_2)$ with $p$ transitive  
	by $P^+[t_1,t_2]$.
	
	Then, for every full instance $Q'$ of $(\Q^+,\UP_0)$, there is
	a full instance $Q''$ of $(\Q,\UP)$ such that
	$Q''$ is isomorphic to $Q'$.
\end{lemma}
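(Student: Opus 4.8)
The plan is to obtain $Q''$ by locally editing the instantiation $T$ of $(\Q,\UP)$ that produces $Q$, reopening each leaf carrying a transitive atom into a path of the appropriate length. As with every PCQ manipulated by the algorithm, $\Q$ consists only of standard atoms together with repeatable patterns, and by the hypothesis that $(\Q,\UP)$ has no standard atom on a transitive predicate, the leaves of $T$ that are standard atoms of $\Q$ carry no transitive predicate. Consequently, every transitive atom of $Q$ is a leaf produced by expanding a standard pattern $P''[v_i,v_{i+1}]$ --- itself one of the children $P''[w_1 = v_0, v_1], \dots, P''[v_{m-1}, v_m = w_2]$ of a repeatable pattern $P''^+[w_1,w_2]$ of $\Q$ --- into a binary atom $a$ of $P''$'s definition. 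Since transitive predicates are binary and every pattern atom mentions both $\#1$ and $\#2$, this $a$ is either $p(\#1,\#2)$ or $p(\#2,\#1)$, where $p$ is the predicate of the leaf.

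Because $\UP_0$ assigns to each transitive $p$ the one-atom definition $P := p(\#1,\#2)$, the repeatable pattern $P^+[t_1,t_2]$ that replaces a transitive leaf $p(t_1,t_2)$ in $\Q^+$ expands under $\UP_0$ into a forward path $p(t_1,z_1),\dots,p(z_{k-1},t_2)$ of some length $k \geq 1$ with fresh internal variables; thus $Q'$ is precisely $Q$ with each transitive atom replaced by such a path. To realise $Q'$ inside $(\Q,\UP)$, I would build $T''$ from $T$ by re-choosing, for each repeatable pattern $P''^+[w_1,w_2]$, a finer expansion into standard patterns: every segment $[v_i,v_{i+1}]$ whose atom $a$ is transitive is subdivided into $P''[v_i = y_0, y_1], \dots, P''[y_{k-1}, y_k = v_{i+1}]$ (with fresh $y_j$ and $k$ the length chosen for that leaf in $Q'$), and each of these new standard patterns is expanded into the same atom $a$; all other segments and all other parts of $T$ are left unchanged. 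This is a legal instantiation of $(\Q,\UP)$, since we have only subdivided repeatable patterns further and reused atoms already present in the relevant definitions.

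Let $Q''$ be the full instance associated with $T''$. The one delicate point is the orientation of $a$. If $a = p(\#1,\#2)$ then $t_1 = v_i$ and $t_2 = v_{i+1}$, and the subdivided segment yields $p(t_1,y_1),\dots,p(y_{k-1},t_2)$; if $a = p(\#2,\#1)$ then $t_1 = v_{i+1}$ and $t_2 = v_i$, and it yields the atoms $p(y_{j+1},y_j)$, which form the forward path $p(t_1,y_{k-1}),p(y_{k-1},y_{k-2}),\dots,p(y_1,t_2)$. In either case we obtain a forward $p$-path from $t_1$ to $t_2$ of length $k$, matching the path prescribed by $Q'$. The map that is the identity on every term outside the reopened segments and sends the internal fresh variables of each path in $Q'$ to those of the corresponding path in $Q''$ is then a well-defined isomorphism between $Q'$ and $Q''$.

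The main difficulty is conceptual rather than computational: one must be sure that every transitive atom of $Q$ can indeed be reopened into a path, which is exactly where the hypothesis that $(\Q,\UP)$ has no standard transitive atom is indispensable --- such an atom would persist into $Q$ yet could never be turned into a path of length $> 1$ within $(\Q,\UP)$, falsifying the claim. The remaining care lies in the orientation bookkeeping above and in observing that every target length $k \geq 1$ is reachable (with $k = 1$ simply leaving the original segment in place), so that all full instances of $(\Q^+,\UP_0)$, and not merely some of them, are captured.
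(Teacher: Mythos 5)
Your proof is correct and takes essentially the same approach as the paper's: both reconstruct the required full instance of $(\Q,\UP)$ by subdividing each standard-pattern child whose expansion is a transitive atom into a path of standard patterns expanded with the very same definition atom, thereby mirroring, segment by segment, the paths that $(\Q^+,\UP_0)$ produces. The only real difference is that you spell out the orientation bookkeeping for the $p(\#2,\#1)$ case, which the paper dismisses with ``a similar argument can be used if the positions are reversed.''
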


\begin{proof}
	We build the instance $Q''$ as follows.
	Initialize $Q''$ to the atoms and repeatable patterns occurring in $\Q$.
	Next, for all repeatable patterns $P_i^+[t_1,t_2]$ in the instantiation underlying $Q$
	consider each of the atom that is expanded from a child of $P_i^+[t_1,t_2]$ in turn, working from left to right.
	If the atom $p(\t)$ under $P_i[u,v]$ is being considered, 
	then do the following:
	\begin{itemize}
	\item if $p$ is not a transitive predicate, then add a single child $P_i[u,v]$ to $P_i^+[t_1,t_2]$, and expand it into $p(\t)$.
	\item if $p$ is transitive, then $p(\t)$ has been replaced in
	$\Q^+$ by $P^+[\t]$. We also know that $\t$ consists of the terms $u,v$ from $P_i[u,v]$.
	We suppose that $p(\t)=p(u,v)$ (hence $P^+[\t]=P^+[u,v]$); a similar argument can be used if the positions are reversed.  
	Let $P[u= x_0,x_1],\dots,P[x_{k-1},x_k=v]$ be the children of $P^+[u,v]$ in $Q'$, and
	$a_\ell$ be the atom expanded under $P[x_{\ell},x_{\ell+1}]$ ($0 \leq \ell < k$).
	In place of the child $P_i^+[u,v]$ in $\Q^+$, we will add $k$ children to $P^+_i[t_1,t_2]$ in $Q''$:
	$P_i[u= x_0,x_1],\dots,P_i[x_{k-1},x_k=v]$, and expand
	$P_i[x_j,x_{j+1}]$  into $a_{j}$. Note that we may assume that the terms $x_i$ ($0 < i < k$) are fresh, i.e.,
	they do not already appear in $Q''$.
	\end{itemize}
	It can be verified that the resulting full instance $Q''$ is isomorphic to $Q'$. Indeed, all atoms in $Q'$
	that are also in $Q$ are present in $Q''$. All other atoms belong to a sequence of transitive atoms,
	which we have reproduced (modulo renaming of variables) in $Q''$.
\end{proof}

\begin{lemma}
\label{prop:rw-eq-prw}
	Let $Q$ be a CQ and $\R$ be a set of \ltrans\ rules, and let $(\UQ,\UP)$ be the output of the algorithm.
	For any $Q'$ obtained from a sequence of classical direct rewritings of $Q$
	with $\R$,
	there is a PCQ $(\Q,\UP)$ with $\Q \in \UQ$ and a full instance $Q''$ of $(\Q,\UP)$ s.t.\ $Q''$ is isomorphic to $Q'$.
\end{lemma}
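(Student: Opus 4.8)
The plan is to prove Lemma \ref{prop:rw-eq-prw} by induction on the length $n$ of the sequence of classical direct rewritings $Q = Q_0, Q_1, \dots, Q_n = Q'$. The core idea is that each classical rewriting step (whether using a transitivity rule $R \in \R_T$ or a linear rule $R \in \R_L$) must be simulated, at the level of PCQs, by either an internal rewriting (absorbed into the pattern definitions $\UP$), an external rewriting (adding a new $\Q_j$ to $\UQ$), or the initial pattern-translation performed in Steps 1 and 4. The three supporting Propositions \ref{prop:drw-internal}, \ref{prop:drw-pcq} and Lemma \ref{prop:rw-trans} furnish exactly these simulations, so the proof amounts to threading the induction hypothesis through the correct one of these cases at each step.

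For the base case ($n=0$), I would simply take $\Q = \Q^+ \in \UQ$ (the PCQ obtained in Step 4 by replacing transitive atoms $p(t_1,t_2)$ with $P^+[t_1,t_2]$) and use Lemma \ref{lemma:instances-plus} to argue that $Q = Q_0$ is itself isomorphic to a full instance of $(\Q^+,\UP_0)$, and hence of $(\Q^+,\UP)$ since $\UP_0 \subseteq \UP$. For the inductive step, I assume that $Q_{n-1}$ is isomorphic to a full instance $\bar Q$ of some $(\Q,\UP)$ with $\Q \in \UQ$, and consider the rule $R$ applied to obtain $Q_n = Q'$ from $Q_{n-1}$. The classical direct rewriting of $Q_{n-1}$ transfers, via the isomorphism, to a classical direct rewriting of $\bar Q$ with $R$ w.r.t.\ some unifier $\mu$. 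I would then split on the nature of $R$ and $\mu$:

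\begin{itemize}
\item If $R \in \R_T$ is a transitivity rule, then Lemma \ref{prop:rw-trans} (combined with the fact that transitive atoms are always carried by repeatable patterns after Step 4) shows the effect is already captured by expanding a repeatable pattern into a longer sequence; no new element of $\UQ$ is needed.
\item If $R \in \R_L$ and $\mu$ is \emph{internal} to some repeatable pattern $P^+[t_1,t_2]$, then Proposition \ref{prop:drw-internal} guarantees that the direct rewriting of $\UP$ w.r.t.\ $P$ and $R$ (performed to fixpoint in Step 3) produces a $\UP'$ such that $(\Q,\UP')$ has a full instance isomorphic to $Q'$; since Step 3 reaches a fixpoint, this updated definition is already present in the final $\UP$.
\item If $R \in \R_L$ and $\mu$ is \emph{external}, then Proposition \ref{prop:drw-pcq} yields a direct rewriting $\Q'$ of $\Q$ w.r.t.\ $\UP$ and $R$ having an instance isomorphic to $Q'$; by the fixpoint construction in Step 5, $\Q'$ (or an isomorphic copy) belongs to $\UQ$, which closes the induction.
\end{itemize}

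The main obstacle I anticipate is the careful bookkeeping needed to keep the isomorphism and the ``instance-of'' relationship synchronized across the two-level translation: the classical rewriting acts on the \emph{full} instance $\bar Q$, whereas the supporting propositions speak about rewritings at the level of a \emph{partially-expanded} PCQ and its pattern definitions. In particular, I must be sure that when $R$ is linear it is actually applied via a rule of $\R_L^+$ (whose transitive body atoms have been turned into repeatable patterns in Step 2), that the preliminary translation of transitive head atoms into patterns (needed to invoke the internal/external propositions) does not lose any rewriting, and that the fixpoint nature of Steps 3 and 5 genuinely guarantees the simulating object is \emph{present} in the final output rather than merely producible. Matching each classical unifier to the correct internal-versus-external case, and confirming that the two-level machinery commutes with the isomorphism from the induction hypothesis, is where the real work lies; the remainder is a routine assembly of the three propositions.
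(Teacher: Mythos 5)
Your proposal follows essentially the same route as the paper's proof: induction on the length of the classical rewriting sequence, with the same case analysis (transitivity rules handled via Lemma \ref{prop:rw-trans} together with Lemma \ref{lemma:instances-plus} and the observation that PCQs in $\UQ$ carry no standard transitive atoms; internal unifiers via Proposition \ref{prop:drw-internal}; external unifiers via Proposition \ref{prop:drw-pcq}), and the same appeal to the fixpoint nature of Steps 3 and 5 to guarantee the simulating objects are present in the output. The only minor deviation is your base case, where the paper simply observes that $Q$ is itself a full instance of $(\Q^+,\UP)$ (expand each repeatable pattern into a single standard pattern and then into its defining atom from $\UP_0$), rather than invoking Lemma \ref{lemma:instances-plus}, whose hypotheses do not quite fit that situation --- but the fact being claimed there is immediate either way.
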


\begin{proof}
	Let $Q = Q_0, \mu_1, Q_1, \mu_2, Q_2, \dots, \mu_k, Q_k = Q'$ be a sequence of classical direct
	rewritings from $Q$ to $Q'$, and let $R_1, \ldots, R_k$ be the associated sequence of rules from $\R$.
	
	We show the desired property, by induction on $0 \leq i \leq k$. 	
	For the base case ($i=0$), we can set $\Q_0=Q^+$, since $Q_0=Q$ is clearly a full instance
	of $(\Q_0,\UP)$.
	
	For the induction step, suppose that we have $\Q_{i-1} \in \UQ$ and a full instance $Q''_{i-1}$ of $(\Q_{i-1}, \UP)$
	that is isomorphic to the CQ $Q_{i-1}$. There are two cases to consider, depending on the type of the rule $R_i$.

	If $R_i$ is a transitivity rule, then from Lemma
	\ref{prop:rw-trans}, 
	 we know that $\Q^+_{i-1}$ (obtained from $Q_{i-1}$ by replacing every transitive predicate $p$ by pattern $P^+$)
	is such that there is a full instance $Q_{i-1}^{+}$ of $(\Q^+_{i-1}, \UP)$ that is isomorphic to $Q_i$.
	Furthermore, we know that $\Q_{i-1}$ cannot contain any standard atoms with transitive predicates, since every
	PCQ produced in Step 5 contains patterns for the transitive predicates. Thus, we may apply Lemma \ref{lemma:instances-plus}
	and infer that $Q_{i-1}^{+}$ is isomorphic to some full instance
	$Q_{i-1}''$ of $(\Q_{i-1},\UP)$.
	Therefore, $Q_{i-1}''$ is isomorphic to $Q_{i}$.

	If $R_i$ is not a transitive rule,
	since $Q_{i-1}$ is isomorphic to some full instance $Q''_{i-1}$ of $(\Q_{i-1}, \UP)$,
	let $\mu_i''$ be the unifier of $Q''_{i-1}$ with $R_i$ obtained from $\mu$
	and the isomorphism between $Q_{i-1}$ and $Q''_{i-1}$.
	If $\mu_i''$ is internal to some repeatable pattern, then from Proposition \ref{prop:drw-internal},
	we know that there is an instance $Q'_{i}$ of $(\Q_{i-1},\UP)$ that is isomorphic to $Q_i$.
	Otherwise, from Proposition \ref{prop:drw-pcq}, there exists $\mu_i'$ and
	a direct rewriting $\Q_i$ of $\Q_{i-1}$ with $\mu'_i$ such that
	there is an instance $Q'_{i}$ of $(\Q_i,\UP)$ that is isomorphic to $Q_i$.

	We have thus completed the inductive argument and can conclude that
	there is a PCQ $(\Q,\UP)$ with $\Q \in \UQ$ and a full instance $Q''$ of $(\Q,\UP)$ s.t.\ $Q''$ is isomorphic to $Q' = Q_k$.
\end{proof}

\begin{lemma}
\label{prop:complete}
	Let $Q$ be a CQ, $(\F,\R)$ be a {\ltrans} KB,
	and ($\Pi_{\UP}$,$Q_{\UQ}$) be the output of the algorithm.
	If $\F,\R \models Q$ then $\F,\Pi_{\UP} \models Q'$ for some $Q' \in Q_{\UQ}$.
\end{lemma}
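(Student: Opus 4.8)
The plan is to obtain completeness of the pattern‑based algorithm purely by assembling results that are already in place: the classical soundness‑and‑completeness of piece‑rewriting (recalled in Section \ref{sec:prelims} from \cite{klmt13}), Lemma \ref{prop:rw-eq-prw} (which shows that every classical rewriting of $Q$ is captured, up to isomorphism, by a full instance of some PCQ in $\UQ$), and Proposition \ref{prop:Datalog} (which says that $(\Pi_\UP, Q_\UQ)$ faithfully represents $full(\UQ,\UP)$). No new combinatorial argument should be required here; the lemma is a chaining step that glues these three facts together.

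Concretely, I would proceed as follows. First, from the hypothesis $\F,\R \models Q$ and the fact that $\R = \R_L \cup \R_T$ is a set of existential rules, the classical rewriting theorem of \cite{klmt13} yields a classical rewriting $Q^{\star}$ of $Q$ w.r.t.\ $\R$, produced by a finite sequence of direct piece‑rewritings, such that $\F \models Q^{\star}$. Second, I would feed this rewriting sequence into Lemma \ref{prop:rw-eq-prw}, obtaining a PCQ $(\Q,\UP)$ with $\Q \in \UQ$ together with a full instance $Q''$ of $(\Q,\UP)$ that is isomorphic to $Q^{\star}$. Since isomorphic CQs admit exactly the same homomorphisms into $\F$, the assumption $\F \models Q^{\star}$ transfers to $\F \models Q''$; and since $\Q \in \UQ$, we have $Q'' \in full(\UQ,\UP)$. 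Third, I would apply Proposition \ref{prop:Datalog}: from $\F \models Q''$ with $Q'' \in full(\UQ,\UP)$ it delivers $\F,\Pi_\UP \models Q_\UQ$, and inspecting its construction shows that entailment holds for a single member CQ $Q' \in Q_\UQ$, which is exactly the desired conclusion $\F,\Pi_\UP \models Q'$ for some $Q' \in Q_\UQ$.

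The remaining points are bookkeeping rather than genuine obstacles. One must note that Lemma \ref{prop:rw-eq-prw} is stated for the full (possibly non‑terminating) output of the algorithm, so it is the unmodified Step 5 that is used here; completeness of the terminating variant is a separate matter handled in Theorem \ref{prop:completeness-excluded}. One should also make explicit the passage from the reading of $Q_\UQ$ as a single disjunction in Proposition \ref{prop:Datalog} to the per‑disjunct formulation in the lemma: because $(\F,\Pi_\UP)$ is a Horn theory with a unique minimal model, entailment of the UCQ $Q_\UQ$ coincides with entailment of one of its disjuncts. The genuinely hard work, namely tracking a transitivity step and an internal or external rewriting step through the pattern machinery, has already been discharged in Lemma \ref{prop:rw-eq-prw} and Propositions \ref{prop:drw-internal} and \ref{prop:drw-pcq}, so the proof of this lemma needs nothing further.
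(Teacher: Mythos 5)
Your proof is correct and follows essentially the same route as the paper's own (very terse) proof: classical rewriting completeness from \cite{klmt13} gives a rewriting $Q^{\star}$ with $\F \models Q^{\star}$, Lemma \ref{prop:rw-eq-prw} captures it as a full instance of some $\Q \in \UQ$ up to isomorphism, and Proposition \ref{prop:Datalog} converts that into $\F,\Pi_{\UP} \models Q'$ for some $Q' \in Q_{\UQ}$. The additional bookkeeping you make explicit (isomorphism preserving satisfaction, and passing from UCQ entailment to a single disjunct via the Horn/minimal-model argument) is left implicit in the paper but is accurate and does not alter the structure of the argument.
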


\begin{proof}
	Since $\F,\R \models Q$, there is a (finite) classical rewriting $Q'$ of $Q$ with $\R$ such that
	$\F \models Q'$.
	From Proposition \ref{prop:rw-eq-prw}, there is
	there is a PCQ $(\Q,\UP)$ with $\Q \in \UQ$ and a full instance $Q''$ of $(\Q,\UP)$ s.t.\ $Q''$ is isomorphic to $Q'$.
	Therefore, $\F \models Q''$.
	We conclude by Proposition \ref{prop:Datalog}.
\end{proof}

\begin{lemma}
\label{prop:sound}
	Let $Q$ be a CQ, $(\F,\R)$ be a {\ltrans} KB, 
	and ($\Pi_{\UP}$,$Q_{\UQ}$) be the output of the algorithm. 
	If $\F,\Pi_{\UP} \models Q_{\UQ}$ then $\F,\R \models Q$.
\end{lemma}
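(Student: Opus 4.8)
Lemma \ref{prop:sound} asserts soundness of the rewriting algorithm: if the Datalog program $\Pi_\UP$ together with $\F$ entails some rewriting in $Q_\UQ$, then the original KB $(\F,\R)$ entails the original query $Q$. Let me sketch how I would prove this.

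The plan is to argue that every step performed by the algorithm corresponds to a sound logical inference, so that anything derivable from the algorithm's output is already a consequence of $(\F,\R)$. First I would unfold what $\F,\Pi_\UP \models Q_\UQ$ means concretely: by Proposition \ref{prop:Datalog}, this is equivalent to $\F \models Q'$ for some full instance $Q'$ of $(\UQ,\UP)$. So it suffices to show that every full instance of $(\UQ,\UP)$ is a sound rewriting of $Q$ with respect to $\R$ in the classical sense, i.e., that $Q' \models_\R Q$ (meaning any model of $Q'$ and $\R$ is a model of $Q$, equivalently $Q',\R \models Q$). Combined with $\F \models Q'$ and the fact that $\F \models Q'$ implies $\F,\R \models Q'$, this yields $\F,\R \models Q$.

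The core of the argument is an invariant maintained throughout the run of the algorithm. I would prove, by induction on the number of rewriting steps (both the internal rewritings of Step 3 that update $\UP$, and the external rewritings of Step 5 that enlarge $\UQ$), that every full instance of the current $(\UQ,\UP)$ is a sound classical rewriting of $Q$. For the base case, the initial PCQ $\Q^+$ (with $\UP_0$) has full instances that are exactly $Q$ with each transitive atom $p(t_1,t_2)$ replaced by a nonempty $p$-path $p(t_1,x_1)\wedge\cdots\wedge p(x_{k-1},t_2)$; each such instance is a sound rewriting of $Q$ using only the transitivity rules $R^T$, since a $p$-path entails $p(t_1,t_2)$ by transitivity. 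For the inductive step, I would show that each direct rewriting operation (internal or external) takes a PCQ whose full instances are all sound rewritings and produces one with the same property. Here the essential observation is that each direct rewriting of $\UP$ or $\UQ$ is built from a genuine piece-unifier $\mu'$ with a rule $R\in\R_L^+$, so that unfolding the pattern definitions back into atoms reproduces a \emph{classical} direct rewriting step with a rule of $\R$ (a linear rule, after re-expanding the repeatable patterns into $p$-paths governed by $R^T$). In other words, each full instance of the new PCQ is obtainable from some full instance of the old PCQ by a classical direct rewriting with a rule of $\R$, and classical direct rewriting is sound by the standard rewriting-soundness result cited in the preliminaries \cite{klmt13}.

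The main obstacle I anticipate is carefully justifying that the pattern machinery does not introduce \emph{unsound} rewritings when patterns are re-expanded. Two points need care. First, in the internal rewriting, when a path of standard patterns is unified with a rule head and the rule body $B'$ is folded back into the pattern definition (possibly as a repeatable pattern $S^+$), I must verify that every full instance obtained by later expanding this new definition can be simulated by an actual sequence of classical rewritings with linear rules interleaved with transitivity applications; this is essentially the content of Proposition \ref{prop:drw-internal} read in the sound direction, and I would invoke the correspondence it establishes between pattern definitions and $p$-paths. Second, in the external rewriting, the replacement of $P^+[t_1,t_2]$ by one of the forms (i)--(iv) must be matched, on the instance level, by re-expanding the retained $P^+$ parts into $p$-paths; I would check that the unifier $\mu'$ used at the PCQ level restricts correctly to each full instance and that the piece condition is preserved, so that the resulting full instance is genuinely a classical direct rewriting. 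Once these two folding-back correspondences are in hand, the induction closes and soundness follows.
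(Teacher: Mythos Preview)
Your overall strategy is correct and matches the paper's: reduce via Proposition~\ref{prop:Datalog} to showing that every full instance of $(\UQ,\UP)$ is a sound classical rewriting of $Q$ w.r.t.\ $\R$, then argue by induction over the rewriting steps. There is, however, a concrete gap: you cannot invoke Propositions~\ref{prop:drw-internal} and~\ref{prop:drw-pcq} ``read in the sound direction''. Those propositions are \emph{completeness} statements---they say that every classical direct rewriting is captured by some PCQ-level direct rewriting---and they do not assert the converse. The sound direction (every full instance of a PCQ-level direct rewriting arises from classical rewritings of some full instance of the previous PCQ) has to be established on its own; it is not a re-reading of those results, and in particular the internal case where the rewritten body $B'$ is itself a repeatable pattern $S^+$ needs an argument that does not appear anywhere in Proposition~\ref{prop:drw-internal}.

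The paper's proof structures this somewhat differently. For internal rewriting it does \emph{not} work at the level of full instances; instead it proves a semantic invariant along the sequence $\UP_0,\ldots,\UP_k=\UP$: every rule added to $\Pi_{\UP_j}$ is a logical consequence of $\Pi_{\UP_0}\cup\R$ (this needs a case split, in particular when $B'=S^+[\#1,\#2]$ and one must pull in each atom of $S$'s definition). From this it derives that $\F',\Pi_\UP\models (Q')^{++}$ implies $\F',\R\models Q'$ for \emph{every} fact base $\F'$ and CQ $Q'$. For external rewriting the paper proves exactly the preservation claim you describe---full instances of $(\Q_{i+1},\UP)$ are obtained by \emph{sequences} of classical rewritings from full instances of $(\Q_i,\UP)$ (sequences, not single steps: when $\sigma(B)$ is a repeatable pattern its expansion requires several transitivity and linear-rule applications)---and then closes with an induction on~$i$ that quantifies over all fact bases so it can compose with the internal-soundness claim. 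Your single combined induction can be made to work, but you will still need to supply both of these arguments rather than cite the completeness propositions.
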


\begin{proof}
Let $\UP$ be the set of pattern
definitions computed in Step 3  of the
algorithm, and $\Pi_\UP$ the corresponding set of Datalog rules.
Consider the CQ $Q^{++}$obtained from $Q$ by replacing every atom $p(t_1,t_2)$
such that $p$ is transitive by the atom $p^+(t_1,t_2)$.
The following claim establishes the soundness of the
internal rewriting mechanism in Step 3: 

\begin{claim}\label{lemma_int}
If $\F, \Pi_\UP \models Q^{++}$, then $\F, \R \models Q$.
\end{claim}
\noindent\emph{Proof of claim.} Let $\UP_0, \UP_1, \ldots, \UP_k = \UP$ be
the sequence of sets of pattern definitions that led to $\UP$ in Step 3, with
$\UP_{i+1}$ being obtained from $\UP_i$ by a single direct (internal)
rewriting step. We prove by induction two distinct properties expressed at
rank $0 \leq j \leq k$:
\begin{description}
     \item[P1] every rule in $\Pi_{\UP_j}$ is a semantic
         consequence of $\Pi_{\UP_0} \cup \R$.
     \item[P2] for every fact base $\F'$ and CQ $Q'$ (over the original vocabulary):
     $$\F', \Pi_{\UP_j} \models (Q')^{++}\quad\Rightarrow\quad \F',
         \R \models Q'$$
\end{description}
In the second property, $(Q')^{++}$ denotes the CQ obtained by replacing every atom $p(t_1,t_2)$
such that $p$ is transitive by the atom $p^+(t_1,t_2)$. Observe that \textbf{P2} at rank $k$ yields the claim:
we simply take $\F'=\F$ and $Q' = Q$.

\smallskip

\emph{Base case ($i=0$):} property \textbf{P1} is obviously verified.
For property \textbf{P2}, we note that $\UP_0$ consists of the
following rules for every transitive predicate $p$: the transitivity rule
$p^+(x,y) \wedge p^+(y,z) \rightarrow p^+(x,z)$ and the initialization rule
$p(x,y) \rightarrow p^+(x,y)$. Clearly, if $\F, \Pi_{\UP_0} \models
(Q')^{++}$, then we have $\F, \R \models Q'$, since if we can derive $p^+(a,b)$
using $\F, \Pi_{\UP_0}$, then we can also derive $p(a,b)$ from $\F, \R$ using
the transitivity rule for $p$ in $\R$.

\smallskip

\emph{Induction step for \textbf{P1}:} we assume property \textbf{P1} holds for some rank $0 \leq i < k$
and show that it holds also for $i+1$.

Suppose that $\UP_{i+1}$ is obtained from $\UP_{i}$ by a
single direct rewriting step w.r.t.\ pattern name $P$ and the rule $R = B
\rightarrow H \in \R_L^+$. Let $\Q=P^+[x,y]$, $Q$ be the considered
instance of interest of $\Q$ w.r.t.\ $R$, $\mu = (Q',H,P_u)$ be the considered internal
unifier of $Q$ with $H$, and $\sigma$ be the considered substitution
associated with $\mu$ that preserves the external terms. Finally, let $B'$ be
obtained from $\sigma(B)$ by substituting the first (resp.\ second) external
term by $\#1$ (resp.\ $\#2$).

Since we know that $\mu$ is an internal unifier,
the external terms of $Q'$ cannot be unified together or with an existential variable. 
Thus by considering $Q''$ and $P_{u'}$ obtained from $Q'$ and $P_u$ by substituting
the first (resp.\ second) external term by $\#1$ (resp.\ $\#2$),
it is clear that $\mu' = (Q'', H, P_{u'})$ is a unifier of
$Q''$ with $R$ such that $\sigma'(B) = B'$, where
$\sigma'$ is the substitution associated with $\mu'$ that preserves
the special terms $\#1$ and $\#2$.

We consider two cases depending on the nature
of $B'$.

\smallskip

\noindent{\em Case 1:} The first possibility is that $B'$ is an atom
(as opposed to a repeatable pattern),
in which case we add the following rule to $\Pi_{\UP_i}$:
$B' \rightarrow p^+(\#1,\#2)$.

Let $a_1,\dots,a_k$ be the atoms of $Q''$,
and let $a'_j$ be the atom in $P$'s definition from which
$a_j$ is obtained.
Since there is a rewriting of $\{a_j~|\ 0 < j \leq k\}$ with $R$ 
into $B'$ (using the unifier $\mu')$, and the rule $R$ appears in the original set of rules $\R$,
it follows that $$\R \models B' \rightarrow a_1 \wedge \ldots \wedge a_k$$
From the induction hypothesis, we know that the rules
$a'_j \rightarrow p^+(\#1,\#2)$ ($0 < j \leq k$) are entailed
by $\Pi_{\UP_0},\R$.
We also know that for all $1 < j \leq k$, the atoms $a_{j-1}$ and $a_j$ share
a variable corresponding respectively to $\#2$ in $a'_{j-1}$
and to $\#1$ in $a'_j$. Thus, by applying the rules $a'_j \rightarrow p^+(\#1,\#2)$ ($0 < j \leq k$)
to the conjunction $a_1 \wedge \ldots \wedge a_k$, we obtain
$p^+(\#1,x_1) \wedge p^+(x_1,x_2) \wedge \dots \wedge p^+(x_{k-1},\#2)$.
Hence:
{\small$$ \Pi_{\UP_0},\R \models \bigwedge\limits_{j = 0}^{k}a_j  \rightarrow p^+(\#1,x_1) \wedge p^+(x_1,x_2) \wedge \dots \wedge p^+(x_{k-1},\#2)$$}
\hspace*{-.9mm}Since $\Pi_{\UP_0}$ contains a transitivity rule for $p^+$,
we can further infer that
$$\Pi_{\UP_0} \models  p^+(x_1,x_2) \wedge \dots \wedge p^+(x_{k-1},\#2) \rightarrow p^+(\#1,\#2)$$
By chaining together the preceding entailments, we obtain
$\Pi_{\UP_{0}},\R \models (B' \rightarrow p^+(\#1,\#2))$, as desired.

\smallskip

\noindent{\em Case 2:} The other possibility is that
$B'$ is a repeatable pattern of the form
$S^+[\#1, \#2]$ or $S^+[\#2, \#1]$.
Let $f$ be a bijection on $\{\#1,\#2\}$:
if $B'$ is of the form $S^+[\#1, \#2]$, $f$ is the identity, otherwise  $f$ permutes $\#1$ and $\#2$.
Then for all $s_\ell$ in the definition of $S$, we add $f(s_\ell)$ to $P$'s definition,
and we add the corresponding rules $f(s_\ell) \rightarrow p^+(\#1,\#2)$ to $\Pi_{\UP_i}$. 
Consider one such rule rule $f(s_\ell) \rightarrow p^+(\#1,\#2)$.

Let $a_1,\dots,a_k$ and $a'_1, \ldots, a'_k$ be defined as in Case 1.
Since there is a rewriting of $\{a_j~|\ 0 < j \leq k\}$ with $R \in \R_L^+$ 
into $B'$, and since the rule $R$ was obtained from a rule $R'$ in $\R$
by replacing the transitive predicate $s$ in the rule head by the repeatable pattern $S^+$,
it follows that $$\R \models f(s(\#1, \#2)) \rightarrow a_1 \wedge \ldots \wedge a_k$$
Arguing as in Case 1, we obtain
$$\Pi_{\UP_{0}},\R \models f(s(\#1, \#2)) \rightarrow p^+(\#1,\#2)$$
From the induction hypothesis, we know that that the rules
$s_\ell \rightarrow s^+(\#1,\#2)$ are entailed from
$\Pi_{\UP_0},\R$, and the same obviously holds for the rules $f(s_\ell) \rightarrow f(s^+(\#1,\#2))$.
By combining the preceding entailments, we obtain
$\Pi_{\UP_{0}},\R \models f(s_\ell) \rightarrow p^+(\#1,\#2)$.

\medskip

\emph{Induction step for property \textbf{P2}:} we assume \textbf{P2} holds for some rank $0 \leq i < k$
and show that it holds also for $i+1$.

Suppose now that $\F', \Pi_{\UP_{i+1}} \models (Q')^{++}$, for some fact base $\F'$
and CQ $Q'$ (over the original predicates).
This means that there is
a finite derivation sequence  $\F' = \F^{++}_0, \ldots, \F^{++}_m$ such that $\F^{++}_m
\models (Q')^{++}$ and such that for all $ 0 \leq \ell < m$, $\F^{++}_{\ell+1}$ is
obtained from $\F^{++}_\ell$ either (i) by a sequence of applications of rules from
$\Pi_{\UP_i}$ or (ii) by a sequence of applications of rules from $\Pi_{\UP_{i+1}}
\setminus \Pi_{\UP_i}$.

In case (i), we have 
$\F_\ell^{++}, \Pi_{\UP_i} \models \F_{\ell+1^{++}}$. Letting $\F_r$ be the
fact base obtained by replacing every predicate $p^+$ in $\F_r^{++}$
by the corresponding predicate $p$, and recalling that $\Pi_{\UP_i} $ contains the
rule $p(x,y) \rightarrow p^+(x,y)$, we have $\F_\ell, \Pi_{\UP_i} \models \F_{\ell+1}^{++}$.
Applying the induction hypothesis (treating $\F_{\ell+1}^{++}$ as a CQ),
we obtain $\F_\ell, \mathcal R \models \F_{\ell+1}$.

In case (ii), we have $\F_\ell^{++}, (\Pi_{\UP_{i+1}}
\setminus \Pi_{\UP_i}) \models \F_{\ell+1}^{++}$.
From property \textbf{P1}, we obtain $\F_\ell^{++}, \Pi_{\UP_{0}},
\mathcal R \models \F^{++}_{\ell+1}$. Using the rule
$p(x,y) \rightarrow p^+(x,y)$ (that is present in $\Pi_{\UP_{0}}$),
the latter yields $\F_\ell, \Pi_{\UP_{0}},
\mathcal R \models \F^{++}_{\ell+1}$. Finally,
we note that if we can
derive $p^+(a,b)$ from $\F_\ell, \Pi_{\UP_{0}},
\mathcal R$, then we can also infer $p(a,b)$
from $\F_\ell,
\mathcal R$ by using the transitivity rule for $p$
instead of using $p(x,y) \rightarrow p^+(x,y)$
and the transitivity rule for $p^+$.
Thus, we have $\F_\ell,
\mathcal R \models \F_{\ell+1}$.

We have thus shown that for every $ 0 \leq \ell < m$,
$\F_\ell, \mathcal R \models \F_{\ell+1}$.  Since $\F' = \F_0$,
by chaining these implications together, we obtain
$\F', \R \models \F_m$. Using the same reasoning as above,
we can infer $\F_m \models Q'$ from $\F^{++}_m \models (Q')^{++}$.
Then, by combining these statements, we obtain
 $\F', \mathcal R \models Q'$.
\emph{(end proof of claim)}

\medskip

Now let $\UQ$ be the set of queries computed in Step 5 by performing  
all possible external direct rewritings w.r.t.\ $\UP$ and rules from $\R_L^+$ , starting from $Q^+$,
and let $Q_\UQ$ be the set of CQs associated with $\UQ$ (defined as in Step 6).
We start by proving the following claim, which relates external direct rewriting steps
to sequences of classical direct rewritings.

\begin{claim}\label{preservation}
Let $\Q_{i+1}$ be a direct rewriting of $\Q_i$ w.r.t.\ $\UP$.
Then every full instance of $(\Q_{i+1},\UP)$ is
obtained from a sequence of (classical) direct rewritings of some full
instance of $(\Q_i,\UP)$.
\end{claim}

\noindent\emph{Proof of claim.} 
Let $(\Q_{i+1},\UP)$ be
obtained from an external rewriting of $(\Q_i,\UP)$ with rule
$R = B \rightarrow H$.
This means that there is a minimally unifiable instance $\Q^e$ 
and a
unifier $\mu = (X, H, P_u)$ of $\Q^e$ with $H$ (with associated substitution
$\sigma)$ such that $\Q_{i+1} = \sigma(\Q^e \setminus X) \cup \sigma(B)$.

Let us
consider a partial instance $\Q^P_{i+1}$ of $(\Q_{i+1},\UP)$ that fully
instantiates $\sigma(\Q^e \setminus X)$ but does not instantiate $\sigma(B)$
(we say that it is a $\sigma(B)$-excluding instance). Note that $\Q^P_{i+1}$
can be built equivalently by choosing a full instance $Q^e$ of $(\Q^e, \UP)$,
removing the atoms of $X$, then by applying the substitution $\sigma$ and
adding $\sigma(B)$. We can see that the classical direct rewriting of $Q^e$
according to $\mu$  produces $\Q^P_{i+1}$.
Moreover, since every full instance of $(\Q^e,\UP)$ is a full
instance of $(\Q_{i},\UP)$,  
we know that $Q^e$ is an instance of $(\Q_i,\UP)$.

Now consider any full instance $Q_{i+1}$ of $(\Q_{i+1},\UP)$. Note that it is a full
instance of some $\sigma(B)$-excluding instance $(\Q^P_{i+1},
\UP)$. There are two cases to consider:
\begin{itemize}
\item If $\sigma(B)$ is an atom, then $Q_{i+1} = \Q^P_{i+1}$ and thus $Q_{i+1}$ is
obtained from a classical direct rewriting of an instance of $(\Q_i,\UP)$.
\item Otherwise, if $\sigma(B)$ is a repeatable pattern, then $Q_{i+1}$ is obtained from
$(\Q^P_{i+1}, \UP)$ by expanding $\sigma(B)$ into a sequence
of $k$ standard patterns, and expanding each of them into some atom $a_\ell$.
Let $B_k = \{ a_\ell~|\ 1 \leq \ell \leq k \}$.
Then, $\sigma(B)$ is generated in forward chaining from $B_k$ with a
sequence of applications of rules: $k$ applications of transitivity rules,
and $k$ applications of the rules encoded in $\UP$, each one stemming from a
finite sequence of applications of rules of $\R$ (see Claim \ref{lemma_int}).
Thus from the completeness of classical rewriting, $B_k$ can
be obtained from a sequence of classical direct rewritings from $\sigma(B)$,
and thus $Q_{i+1}$ is obtained from a sequence of classical direct rewritings
of an instance of $(\Q_i,\UP)$.
\end{itemize}
\emph{(end proof of claim)}

\medskip

The following claim shows the soundness of the external rewriting in Step 5 and completes the proof of the lemma.

\begin{claim}\label{lemma_ext}
If $\F, \Pi_\UP \models Q_\UQ$, then $\F, \R \models Q$.
\end{claim}
\noindent\emph{Proof of claim.}
Suppose that $\F, \Pi_\UP \models Q_\Q$ with $\Q \in \UQ$.
We know that 
the PCQ $\Q$ is obtained from a finite sequence $\Q_0 = Q^+, \Q_1, \ldots, \Q_k = \Q$ of
PCQs such that for all $0 \leq j < k$, $(\Q_{j+1},\UP)$ is a direct 
external rewriting of $(\Q_j,\UP)$. We will show by induction on $j$
that $\F, \Pi_\UP \models Q_{\Q_j}$ implies $\F, \R \models Q$ for every $0 \leq j \leq k$.

The base case ($j=0$) is a direct consequence of
Claim~\ref{lemma_int}. For the induction step, we assume the property is true at rank~$i$,
and we show that it is true at rank $i+1$.

Suppose that $\F, \Pi_\UP \models Q_{\Q_{i+1}}$. From
Proposition \ref{prop:Datalog}, it follows that there is a full instance $Q_{i+1}$
of $(\Q_{i+1},\UP)$ such that $\F \models Q_{i+1}$.
By Claim~\ref{preservation}, there is a full instance $Q_i$ of
$(\Q_i,\UP)$ such that $Q_{i+1}$ is obtained from a sequence of classical
rewritings from $Q_i$. Thus (from the correctness of the classical
rewriting), there is a fact base $\F'$ such that $\F, \mathcal R \models \F'$
and $\F' \models Q_i$. Applying Proposition \ref{prop:Datalog}, we obtain $\F', \Pi_\UP
\models Q_{\Q_i}$. Now from our induction hypothesis, it follows that $\F',
\R \models Q$, hence 
$\F, \R \models Q$.
\emph{(end proof of claim)}
\end{proof}

\medskip

\noindent\textbf{Theorem \ref{thm:correct}}
\emph{
	Let $Q$ be a CQ, $(\F,\R)$ be a {\ltrans} KB, 
	and ($\Pi_{\UP}$,$Q_{\UQ}$) be the output of the algorithm. 
	Then: $\F,\R \models Q$ iff $\F,\Pi_{\UP} \models Q'$ for some $Q' \in Q_{\UQ}$.
}\\[1mm]
\begin{proof}
	Follows from Lemma \ref{prop:complete} and Lemma \ref{prop:sound}.
\end{proof}

\medskip
The following two lemmas show that the non-excluded minimally-unifiable instances
are sufficient to ensure completeness when the input query is atomic
or when the input rule set satisfies the safety condition.

\begin{lemma}
\label{prop:unif_1with2}
	Let $(\Q,\UP)$ be a PCQ, $R \in R^+_L$, $Q$ be an instance of interest
	of $(\Q,\UP)$ and
	$\mu = (Q',H,P_u)$ be an external unifier of $Q$ with $R$ such that
	two external terms w.r.t.\ $\mu$ from a given pattern $P^+[t_1,t_2]$
	are unified together and with no existential variable.

	Every minimally-unifiable instance $(\Q,\UP)$ w.r.t.\ $\mu$ that
	replaces $P^+[t_1,t_2]$ as in the External Rewriting cases $(i)$, $(ii)$,
	or $(iii)$ will lead to a direct rewriting $(\Q'_i,\UP)$ that is more
	specific than $(\Q,\UP)$.
	Furthermore, for any classical direct rewriting $\Q''_i$ of $\Q'_i$ with $R$,
	either $(\Q,\UP) \geq (\Q''_i,\UP)$ or there is a classical direct rewriting $\Q'$ of
	the minimally-unifiable instance of $\Q$ that replaces $P^+[t_1,t_2]$
	as in case $(iv)$ with $R$ and $(\Q',\UP) \geq (\Q''_i,\UP)$.

\end{lemma}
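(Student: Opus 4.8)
The plan is to prove the two assertions in turn, deriving the second from the first. I would fix notation as follows: write $R = B \rightarrow H$ where $B$ is a single body atom (as $R$ is linear), let the relevant standard patterns expanded under $P^+[t_1,t_2]$ be bounded by the external terms $u$ and $u'$, and set $c := \sigma(u) = \sigma(u')$ (equal by hypothesis, with $c$ not an image of an existential variable). A preliminary fact I would establish first is that the atom reinserted by the external rewriting already lives inside the pattern definition. Since Step 3 is run to fixpoint \emph{before} Step 5, the definition of $P$ is closed under internal rewriting with $R$; the relevant matching of $\mu$ specializes such an internal rewriting of $P^+$ with $H$, so $\sigma(B)$ --- whose two frontier positions both collapse to $c$ because $\sigma(u)=\sigma(u')$ --- is the instantiation, taking $\#1 = \#2 = c$, of an atom already present in $P$'s definition. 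Proposition \ref{prop:drw-internal} is the tool that lets me move between internal rewritings and additions to $P$'s definition.

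For the first assertion I would show $(\Q,\UP) \geq (\Q'_i,\UP)$ directly, by a \emph{folding} homomorphism. Let $J$ be an arbitrary full instance of $(\Q'_i,\UP)$. In each of the cases $(i)$, $(ii)$, $(iii)$ the rewriting retains at least one repeatable pattern $P^+$ incident to the collapse point $c$, together with the inserted atom $\sigma(B)$, which is a self-loop at $c$. I would build a full instance $I$ of $(\Q,\UP)$ by expanding $P^+[t_1,t_2]$ so that it reproduces the retained $P^+$ path(s) of $J$ and, at $c$, uses one extra copy of the body atom supplied by $P$'s definition (available by the preliminary fact); mapping the far endpoint of that extra copy back to $c$ sends it onto $\sigma(B)$ in $J$, while all other atoms map identically. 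This yields a homomorphism $I \to J$, hence $(\Q,\UP) \geq (\Q'_i,\UP)$, uniformly for the three cases.

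For the second assertion I would analyse the additional classical direct rewriting $\Q'_i \to \Q''_i$ with $R$ according to how its unifier meets the problematic locus (the retained $P^+$ and the inserted $\sigma(B)$), splitting into two cases. If this unifier again identifies two external terms inside a retained $P^+$ without touching an existential variable, then it is of the same problematic shape, so the first assertion applies once more to give $(\Q'_i,\UP) \geq (\Q''_i,\UP)$; composing with $(\Q,\UP) \geq (\Q'_i,\UP)$ yields $(\Q,\UP) \geq (\Q''_i,\UP)$. Otherwise the rewriting consumes a genuine piece of the retained $P^+$ (possibly together with $\sigma(B)$) without re-collapsing, and I would replay exactly this matching on the case-$(iv)$ minimally-unifiable instance, in which $P^+[t_1,t_2]$ is the bare sequence $X[t_1,t_2]$. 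Because the endpoints were identified, the retained $P^+$ differs from $X[t_1,t_2]$ only by freely contractible repeated atoms, so the same folding homomorphism as above contracts the residual repeatable pattern onto $X[t_1,t_2]$ and shows that the case-$(iv)$ rewriting $\Q'$ satisfies $(\Q',\UP) \geq (\Q''_i,\UP)$. I would finish by checking that this dichotomy is exhaustive.

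I expect the second assertion to be the main obstacle. The first is essentially a single explicit homomorphism once the preliminary fact about $\sigma(B)$ is in place. The second, by contrast, requires tracking how a second application of $R$ can interact simultaneously with the retained repeatable pattern and the freshly inserted body atom, and in particular a faithful simulation of that interaction on the case-$(iv)$ instance together with the contraction of the residual $P^+$; verifying that the dichotomy ``re-collapse / genuine consumption'' is exhaustive, and that each branch lands inside $(\Q,\UP)$ respectively $(\Q',\UP)$, is the delicate step.
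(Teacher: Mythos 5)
Your proof rests on a ``preliminary fact'' that is false, and this is not a repairable detail of presentation but a structural error: Step~3 closes the pattern definitions under rewriting w.r.t.\ \emph{internal} unifiers only, and an internal unifier is \emph{by definition} one in which the two external terms are \emph{not} unified together. The unifier $\mu$ of this lemma unifies them, and that identification can be forced by the unification itself, in which case no internal unifier exists and Step~3 adds nothing. Concretely, take $R = b(x) \rightarrow p(x,x)$ with $p$ transitive and $P := p(\#1,\#2)$: every piece-unifier of every instance of $P^+$ with the head $p(x,x)$ collapses all unified terms into a single class, hence unifies the external terms, so $P$'s definition is unchanged at the fixpoint of Step~3; yet $\sigma(B)=b(c)$ is not an instantiation of any atom of $P$'s definition. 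Consequently the full instance $I$ you build for the first assertion --- expanding $P^+[t_1,t_2]$ so as to include ``one extra copy of the body atom'' --- need not be an instance of $(\Q,\UP)$ at all. The fact is also unnecessary: a homomorphism need not be surjective, so it suffices (as in the paper's proof) to expand $P^+[t_1,t_2]$ into the concatenation of the retained paths through the collapse point and simply not cover $\sigma(B)$; for the instances retaining a single repeatable pattern one even has $\Q \subseteq \Q'_i$ syntactically. So the first assertion survives, but only after deleting your central device.

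The second assertion is where the genuine gap lies. First, your dichotomy is not exhaustive: the unifier producing $\Q''_i$ may neither re-collapse external terms nor consume any atom of a retained $P^+$ --- for instance it may involve only (expansions of) $\sigma(B)$, or only atoms of the remainder $q[t_1,t_2]$ of the query, or a mixture of these. Second, the ``replay on the case-$(iv)$ instance'' cannot work as described: the atoms consumed in your second branch are expansions of the \emph{retained repeatable patterns} $P^+[t_1,x_s]$ and $P^+[x_s,t_2]$, i.e., arbitrary paths of atoms over $P$'s definition, and these have no counterpart in the case-$(iv)$ minimally-unifiable instance, whose pattern part is the one fixed sequence $X[t_1,t_2]$ --- which is, moreover, entirely erased by the rewriting with $R$; in addition, the case-$(iv)$ rewriting identifies $t_1$ with $t_2$, which obstructs homomorphisms into instances where these terms remain distinct. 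The paper's argument proceeds quite differently: it splits on whether the second unifier $\mu'$ involves an atom outside $\sigma(B)$. If it does, it deletes the $\sigma(B)$-atoms from $\mu'$, argues that the restriction is still a unifier --- the crucial observation being that $\sigma(B)$ is a loop at a single term and hence cannot connect two distinct terms of $q[t_1,t_2]$ --- and lifts this restricted unifier to $\Q$ itself, obtaining a direct rewriting of $\Q$ that is more general than $\Q''_i$. If it does not, then all unified atoms lie in $\sigma(B)$, and $(\Q,\UP) \geq (\Q''_i,\UP)$ follows by the same instance construction as in the first part. This commutation step --- strip $\sigma(B)$ from the unifier and perform the second rewriting directly on $\Q$, bypassing the excluded rewriting --- is the heart of the lemma, and it is absent from your proposal.
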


\begin{proof}
	Without loss of generality, let us write $\Q = q[t_1,t_2] \wedge P^+[t_1,t_2]$
	where $q[t_1,t_2]$ denotes a set of atoms where $t_1$ and $t_2$ may occur.
	We denote by $x_s$ and $x_e$ ($s < e$) the external terms of $P^+[t_1,t_2]$
	w.r.t.\ $\mu$, and by $A[x_s,x_e]$ the sequence of atoms
	expanded from $P^+[t_1,t_2]$
	involved in the unifier.
	Since we assume that no existential variable is unified with variables $x_s$ and $x_e$,
	no atom from $q$ can be part of the unifier.
	Consider the following minimally-unifiable instances:
	\begin{enumerate}
		\item $\Q_1 = q[t_1,t_2] \wedge P^+[t_1,x_s] \wedge A[x_s,x_e] \wedge P^+[x_e,t_2]$
		\item $\Q_2 = q[t_1,t_2] \wedge [t_1 = x_s,x_e] \wedge P^+[x_e,t_2]$
		\item $\Q_3 = q[t_1,t_2] \wedge P^+[t_1,x_s] \wedge X[x_s,x_e = t_2]$
	\end{enumerate}
	By unifying $x_s$ and $x_e$ together, we obtain the following instances:
	\begin{enumerate}
		\item $q[t_1,t_2] \wedge P^+[t_1,x_s] \wedge A[x_s,x_s] \wedge P^+[x_s,t_2]$
		\item $q[t_1,t_2] \wedge A[t_1,t_1] \wedge P^+[t_1,t_2]$
		\item $q[t_1,t_2] \wedge P^+[t_1,t_2] \wedge A[t_2,t_2]$
	\end{enumerate}
	Let $\Q'_i$ be the direct rewriting of $\Q_i$ w.r.t.\ $\mu$ with $R$.
	It is easy to see that $\Q \subseteq \Q'_2$ and $\Q \subseteq \Q'_3$,
	thus, $(\Q'_2, \UP)$ and $(\Q'_3,\UP)$ are more specific than $(\Q,\UP)$.
		
	Let $Q_1$ be a full instance of $(\Q'_1,\UP)$. We construct a full instance $Q$ of $(\Q,\UP)$ as follows.
	First note that $q[t_1,t_2]$ is common to both $\Q'_1$ and $\Q$,
	so we will expand all patterns in $q[t_1,t_2]$ exactly as in $Q_1$.
	Now let $k_1$ (resp.\ $k_2$) be the number of children of
	$P^+[t_1,x_s]$ (resp.\ $P^+[x_s,t_2]$) in the instantiation of $Q_1$,
	and expand $P^+[t_1,t_2]$ in $Q$ into $k = k_1 + k_2$ children:
	$P[t_1 = x_0,x_1], \dots, P[x_{k-1},x_{k} = t_2]$.
	Expand each $P[x_i,x_{i+1}]$ with $i < k_1$ as
	is expanded the $i^{th}$ child of $P^+[t_1,x_s]$ in $Q_1$; and
	each $P[x_i,x_{i+1}]$ with $k_1 \leq i < k$ as is expanded the
	$(i - k_1+1)^{th}$ child of $P^+[x_s,t_2]$ in $Q_1$.
	By construction, there is an homomorphism from $Q$ to $Q_1$.
	We have thus shown that  $(\Q,\UP) \geq (\Q'_1,\UP)$.

	Furthermore, let $\Q''_i$ be a classical direct rewriting
	of $\Q'_i$ with a rule $R' = B' \rightarrow H'$ w.r.t.\ unifier $\mu' = (Q', H', P'_u)$, where $1 \leq i \leq 3$.
	If at least one atom involved in $\mu'$ occurs in $\Q'_i
	\setminus \sigma(B)$ (where $\sigma$ is the substitution associated with $\mu$),
	then, let $\mu'' = \{Q'', H', P''_u\}$ where $Q'' = Q'
	\setminus \sigma(B)$ and
	$P''_u$ is the restriction of $P'_u$ to terms occurring in $Q'' \cup H'$.
	Since $Q'' \neq \emptyset$ and
	all terms from $\sigma(B)$ cannot connect
	two different terms from $q[t_1,t_2]$
	(indeed, the only term shared between $\sigma(B)$ and $q[t_1,t_2]$ is either
	 $t_1$ or $t_2$), $\sigma(B)$ can be seen as a loop on $t_1$ (or $t_2$),
	therefore we can remove $\sigma(B)$ while preserving the unifier, i.e.,
	$\mu''$ is a unifier of $\Q$ with $R'$.
	Moreover, since $P''_u$ and $Q''$ are only restrictions of
	$P'_u$ and $Q'$ respectively,
	it holds that $\mu'' \geq \mu'$.
	Then, we denote by $\Q''$ the direct rewriting of $\Q$ with $R'$ w.r.t.\ $\mu''$ and obtain $\Q'' \geq \Q_i'$.
	The other possibility is that all atoms involved in
	$\mu'$ occur in $\sigma(B)$, then,
 	$\Q_2''$ (resp.\ $\Q_3''$) is more specific than $\Q$
	since $\Q \subseteq \Q_2''$ (resp.\ $\Q \subseteq \Q_3''$).
	Moreover, for any instance $Q''_1$ of $\Q_1''$, one can easily build an
	instance $Q'$ of $\Q$ in the same way as above, and see that
	$Q' \geq Q''_1$.  Thus, we have $(\Q,\UP) \geq (\Q''_i,\UP)$.
\end{proof}

\begin{lemma}
\label{prop:unif_1withE}
	Let $(\Q,\UP)$ be a PCQ, $R \in \R^+_L$,
	$Q$ be an instance of interest of $(\Q,\UP)$ and
	$\mu = (Q',H,P_u)$ be an external unifier of $Q$ with $R$ such that
	one external term w.r.t.\ $\mu$ from a given pattern $P^+[t_1,t_2]$
	is unified with an existential variable,
	and where all atoms in $Q'$ are obtained from the expansion of a repeatable pattern.

	If $\Q$ is atomic, or if $\R_L$ is a set of safe linear rules, then
	every minimally-unifiable instance of $(\Q,\UP)$ w.r.t.\ $\mu$ that
	replaces all $P_i^+[t_1,t_2]$ as in the External Rewriting cases $(ii)$
	or $(iii)$ will lead to a direct rewriting $(\Q_i',\UP)$ that is more
	specific than $(\Q,\UP)$.
	Furthermore, for any direct rewriting $\Q''_i$ of $\Q'_i$ with $R$,
	either $\Q \geq \Q''_i$ or there is a direct rewriting $\Q'$ of
	a minimally-unifiable instance of $\Q$ w.r.t.\ $\mu$ that replaces
	at least one repeatable pattern as in case $(iv)$ and is
	such that $\Q' \geq \Q''_i$.

\end{lemma}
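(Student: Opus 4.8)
The plan is to mirror the proof of Lemma~\ref{prop:unif_1with2}, replacing the ``two external terms unified together'' configuration by the ``one external term unified with an existential variable'' configuration treated here. Without loss of generality I would handle the case in which all the relevant repeatable patterns end at $t$, i.e.\ have the form $P_i^+[t_i,t]$ and are rewritten as in case $(ii)$ (the symmetric situation $P_i^+[t,t_i]$ rewritten as in $(iii)$ is identical). Write $\Q = q \wedge P_1^+[t_1,t] \wedge \dots \wedge P_n^+[t_n,t]$, where $q$ collects the atoms and patterns not involving $t$. Since $R$ is linear, its head $H$ is a single atom $p'(\vec w)$ and $\sigma(B)$ is a single atom or a single repeatable pattern; moreover, because $\mu$ is single-piece and $t$ is unified with an existential variable of $H$, every atom of $Q'$ (each being a last edge of some $P_i^+[t_i,t]$) must unify with this one head atom.

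The crux of the argument — and the place where the hypothesis (atomic, or safe) is essential — is the structural observation that $\sigma$ identifies all the non-$t$ endpoints. Indeed, in each unified edge $t$ occupies the position of the existential variable of $H$, while the ``other'' endpoint $t_i'$ created by the split occupies the complementary endpoint position; as all these edges unify with the single head atom $p'(\vec w)$, the $t_i'$ sitting at the same position of $H$ are forced into one class of $P_u$. For binary predicates this holds automatically, and the safety condition is precisely what guarantees, in the general case, that the pseudo-transitive head predicate $p'$ has one pair of endpoint positions valid for \emph{every} transitive predicate of which it is a specialization — so that $t$ sits at the same position across all edges and the $t_i'$ occupy one and the same frontier position of $H$, hence collapse to a common term $v$. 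In the atomic case $Q'$ stems from a single repeatable pattern, so there is a unique endpoint and the identification is vacuous. I expect this position-counting to be the main obstacle: it requires relating the specialization positions recorded in the safety condition to the positions at which $t$ and the $t_i'$ actually occur, and checking that without safety the existential variable would be forced to occupy two distinct positions of $H$, which a single head atom cannot accommodate.

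Granting the identification, the first conclusion is easy. After the case-$(ii)$ rewriting the term $t$ disappears and $\Q_i'$ has the shape $q \wedge P_1^+[t_1,v] \wedge \dots \wedge P_n^+[t_n,v] \wedge \sigma(B)$, with $\sigma(B)$ attached at $v$. To show $(\Q,\UP)\geq(\Q_i',\UP)$ I would take an arbitrary full instance $G'$ of $(\Q_i',\UP)$ and build a full instance $G$ of $(\Q,\UP)$ with a homomorphism $G\to G'$: instantiate $q$ exactly as in $G'$, expand each $P_i^+[t_i,t]$ into a $P_i$-path of the same length as the residual $P_i^+[t_i,v]$-path in $G'$, and map $t\mapsto v$ together with each path onto the corresponding residual path; the extra material $\sigma(B)$ of $G'$ need not lie in the image. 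This is essentially the homomorphism used for case $(i)$ in Lemma~\ref{prop:unif_1with2}.

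For the second conclusion I would fix a direct rewriting $\Q_i''$ of $\Q_i'$ with $R$, coming from a unifier $\mu'$, and split according to whether $\mu'$ touches $\sigma(B)$. If at least one atom of $\mu'$ lies in $\Q_i'\setminus\sigma(B)$, then, exactly as in Lemma~\ref{prop:unif_1with2}, $\sigma(B)$ acts as a loop at $v$ that cannot connect two distinct terms of $q$, so restricting $\mu'$ to the terms of $(Q'\setminus\sigma(B))\cup H$ yields a unifier $\mu''$ of $\Q$ that is more general than $\mu'$; the induced direct rewriting of $\Q$ is then more general than $\Q_i''$, which gives the $\Q\geq\Q_i''$ alternative. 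If instead all atoms of $\mu'$ lie inside $\sigma(B)$ — which can happen only when $\sigma(B)$ is a repeatable pattern, since otherwise its single body predicate cannot unify with $H$ — I would compare with the case-$(iv)$ rewriting $\Q'$ of $\Q$ w.r.t.\ $\mu$, which attaches the very same $\sigma(B)$ at $v$ but collapses each residual path (so that $t_i=v$). As $\Q'$ carries strictly fewer atoms than $\Q_i'$, it satisfies $\Q'\geq\Q_i'$, and performing on $\Q'$ the rewriting of $\sigma(B)$ corresponding to $\mu'$ produces the required case-$(iv)$ descendant with $\Q'\geq\Q_i''$. Collecting the two conclusions completes the proof.
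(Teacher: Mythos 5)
Your overall route does coincide with the paper's own proof: the same WLOG reduction to a configuration where all relevant patterns share the endpoint $t$ and are rewritten as in case $(ii)$, the same use of the safety condition (resp.\ atomicity) to force the non-existential external terms to sit at a common position of the single head atom and hence collapse onto one term $v$ of $B$, the same instance-level homomorphism ($t \mapsto v$, paths onto residual paths) for the first conclusion, and the same restriction-of-the-unifier / ``loop at $v$'' argument when the new unifier $\mu'$ involves an atom outside $\sigma(B)$. Two slips in that part are harmless: the atoms of $Q'$ need not all be ``last edges'' (a unified segment may contain several atoms; only its two endpoints matter for the position argument), and what your sub-case~1 actually establishes is $[\text{a one-step rewriting of }\Q] \geq \Q''_i$ rather than $\Q \geq \Q''_i$ --- but this matches what the paper itself does in Lemma \ref{prop:unif_1with2}.

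The genuine gap is in your final sub-case, where all atoms of $\mu'$ lie inside $\sigma(B)$. You infer $\Q' \geq \Q'_i$ from ``$\Q'$ carries strictly fewer atoms than $\Q'_i$''. That inference is invalid: fewer atoms does not yield greater generality once terms have been identified, and the case-$(iv)$ rewriting does identify terms --- it sends every left endpoint $t_1,\dots,t_n$ to the same term $v$, whereas in $\Q'_i$, and in any further rewriting $\Q''_i$ obtained from a unifier confined to $\sigma(B)$, the $t_i$ stay pairwise distinct, separated from $v$ by the residual patterns $P_i^+[t_i,v]$. Concretely, if $q$ contains $r_1(t_1) \wedge r_2(t_2)$ with $r_1 \neq r_2$, then $\Q'$ contains $r_1(v) \wedge r_2(v)$, and no full instance of $\Q'$ maps homomorphically into any full instance of $\Q'_i$ or of $\Q''_i$; so both $\Q' \geq \Q'_i$ and your final claim $\Q' \geq \Q''_i$ fail in general. (Your side remark that this sub-case can only occur when $\sigma(B)$ is a repeatable pattern is also unjustified: after internal rewriting, pattern definitions contain non-transitive predicates, so the head predicate of a rule in $\R_L^+$ may well coincide with the predicate of its atomic body.) The repair is what the paper does by deferring to the reasoning of Lemma \ref{prop:unif_1with2}: when $\mu'$ touches only $\sigma(B)$, the rewriting leaves $q \wedge \bigwedge_i P_i^+[t_i,v]$ intact up to the substitution induced by $\mu'$, and $\Q \cong q \wedge \bigwedge_i P_i^+[t_i,t]$ maps homomorphically into it by sending $t$ to the image of $v$; hence the \emph{first} alternative $\Q \geq \Q''_i$ holds directly, and no detour through case $(iv)$ is needed for this sub-case.
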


\begin{proof}
	Let $(\Q,\UP)$, $R$, $Q$ and $\mu$ be as in the lemma statement, and let
	$P_1^+[t_1^1,t_2^1], \ldots, P_k^+[t_1^k,t_2^k]$ be the repeatable patterns that are relevant for $\mu$.
	For each $1 \leq i \leq k$,
	we denote by $P_i[t_1^i = x_0^i, x_1^i], \dots, P_i[x_{k-1}^i, x_{k_i}^i = t_2^i]$ the sequence
	of standard patterns expanded from $P_i^+[t_1^i,t_2^i]$, and we
	let $x^i_{s_i}$ and $x^i_{e_i}$ ($s_i < e_i$) be the external terms of $P_i^+[t_1^i,t_2^i]$ w.r.t.\ $\mu$.
	We assume without loss of generality that it is $x^i_{e_i}$ that is unified with an existential variable,
	and let $A_i[x^i_{s_i},x^i_{e_i} = t_2^i]$ denote the atoms expanded from $P_i[x_j^i,x_{j+1}^i]$ with
	$s_i \leq j < e_i$.

	Since the unifier $\mu$ is single-piece, all repeatable patterns relevant for $\mu$ have
	to share some variable. For simplicity, we assume that they all share their second term, i.e.
	 $t_2^i = t_2^j$ for all $1 \leq i,j \leq k$. (The argument is entirely similar, just more notationally involved,
	  if this assumption is not made.)
	Let us use $t_2$ for this shared term.
	Then we can write $\Q$ as follows:
	$$\Q = q[t_1^1,\dots,t_1^k] \wedge  \bigwedge\limits_{1 \leq i \leq k} P^+_i[t_1^i,t_2]$$
	Note that $t_2$ cannot occur in $q$.
	
	Because we have chosen the second term to be shared in all repeatable patterns,
	we only need to consider the minimally-unifiable instance $\Q_M$ of $(\Q,\UP)$ w.r.t.\ $\mu$ that replaces
	each $P^+_i[t_1^i,t_2]$ by $P_i^+[t_1^i,x^i_s], A_i[x^i_s,x^i_e = t_2]$, i.e.\ External Rewriting case (ii).
	Thus, we have
	 $$\Q_M = q[t_1^1,\dots,t_1^k] \wedge \bigwedge\limits_{1 \leq i \leq k} (P_i^+[t_1^i,x^i_s] \wedge A_i[x^i_s,t_2]).$$
	Let $\sigma$ be the substitution associated
	with $\mu$.
	From the safety condition (see Section \ref{sec:safety-condition}), we know that there is
	a pair of positions $\{p_1,p_2\}$ for the predicate $p$ of $H$, such that for all atoms $p(\t)$ occurring in
	a pattern definition the terms $\#1$ and $\#2$ occurs in positions $\{p_1,p_2\}$.
	We further note that the external terms in the concerned patterns are $t_2$ (which unifies with an existential
	variable in $H$) and the terms $x^i_s$ (which unify with a non-existential variable),
	and each of these external terms must be obtained by instantiating term $\#1$ or $\#2$.
	Since the $A_i[x^i_s,t_2]$ are unified together, and share the same predicate $p$,
	it follows that all of the $x^i_s$ must occur in the same position (either $p_1$ or $p_2$) of $p$;
	$t_2$ will occur in the other position among $p_1$ and $p_2$. We therefore obtain;
	$$\sigma(x^1_s) = \sigma(x^2_s) = \dots = \sigma(x^k_s) = x',$$
	where $x'$ is the term in $B$ that unifies with all of the $x^i_s$.
	(Note that if $\Q$ is an atomic query, there is a single $A_i$, so the previous statement
	 obviously holds, even without the safety condition.)
	Thus, $\Q_M$ becomes:
	$$q[t_1^1,\dots,t_1^k] \wedge \bigwedge\limits_{1 \leq i \leq k} (P_i^+[t_1^i,x'] \wedge A_i[x',t_2]).$$
	There is an isomorphism from $\Q$ to $\Q_M \setminus \{A_i \mid 1 \leq i \leq k\}$ that maps $t_2$ to
	$x'$.
	We then observe that $\{A_i \mid 1 \leq i \leq k\}$ is exactly the set of atoms that will be erased in
	the direct rewriting $\Q'_M = \Q_M \setminus \{A_i \mid 1 \leq i \leq k \} \cup \sigma(B)$,
	where $\sigma$ is a substitution associated with $\mu$.
	Therefore, $\Q$ is isomorphic to $\Q'_M \setminus \sigma(B)$, hence $(\Q,\UP) \geq (\Q'_M,\UP)$.
	One can see that the same reasoning as in the previous proof can be applied here
	to show that any further direct rewriting $\Q''_M$ of $\Q'_M$ will lead to more
	specific queries.
\end{proof}

\medskip

\noindent\textbf{Theorem \ref{prop:completeness-excluded}}
\emph{
	The modified query rewriting algorithm  halts.
	Moreover, Theorem \ref{thm:correct} (soudness and completeness)
	holds for the modified 
	algorithm if either
	the input CQ is atomic, or the input rule set is safe.
}

\begin{proof}
	From Lemma \ref{prop:complete}, we know that
	if we do not exclude any rewriting the algorithm is sound
	and complete,
	and Lemma \ref{prop:unif_1with2} and \ref{prop:unif_1withE} show that
	for any rewriting $\Q$ that we exclude, there is another
	rewriting $\Q'$ obtainable using only
	non-excluded direct rewritings that is more general
	than $\Q$.
	Therefore, the modified algorithm (in case of an atomic CQ,
	or a safe rule set) is complete.
	Furthermore, excluding rewritings cannot comprise the soundness
	of the rewriting mechanism.
\end{proof}

\medskip

\noindent\textbf{Theorem \ref{thm:data}}
\emph{
	Both (i) atomic CQ entailment over \ltrans\ KBs and (ii) CQ entailment over safe \ltrans\ KBs
	are  \NL-complete in data complexity. }\\[1mm]
\begin{proof}
	Consider a CQ $Q$,  a \ltrans\ rule set $\R$,
	and a fact base $\F$. Suppose that either $Q$ is atomic or $\R$ satisfies the safety condition. 
	Using Theorem \ref{prop:completeness-excluded}, we can compute
	a finite set $\Pi_\UP$  of Datalog rules and a finite set $Q_\UQ$ of CQs with the property that
	 $\F,\R \models Q$ iff $\F,\Pi_{\UP} \models Q'$ for some $Q' \in Q_{\UQ}$.
	 As $\Pi_\UP$ and $Q_\UQ$ do not depend on the fact base $\F$, they can be computed and stored
	 using constant space w.r.t.\ $|\F|$.

	To test whether $\F,\Pi_{\UP} \models Q'$ for some $Q' \in Q_{\UQ}$, we proceed as follows.
	For each rewriting $Q' \in Q_{\UQ}$,
	we can consider every possible mapping $\pi$ from the variables of $Q'$ to the terms of $\F$.
	We then check whether the facts in $\pi(Q')$ are entailed from $\F,\Pi_{\UP}$.
	For every atom $\alpha \in Q'$ over one of the original predicates, we can directly check if $\pi(\alpha) \in \F$,
	since the rules in $\Pi_\UP$ can only be used to derive facts over the new predicates $p^+$.
	For every atom $p^+[t_1,t_2] \in Q'$ where $p^+$ is a new predicate,
	we need to check whether $\F,\Pi_{\UP} \models p^+(\pi(t_1),\pi(t_2))$.
	Because of the shape of the rules in $\Pi_{\UP}$, the latter holds just in the case that there is
	a path of constants $c_1, \ldots, c_n$ with $c_1=\pi(t_1)$ and $c_n=\pi(t_2)$ such that
	for every $1 \leq i < n$, there is a rule $\rho_i = B_i \rightarrow p^+(\# 1, \#2)$ and substitution $\sigma_i$
	of the variables in $B_i$ by constants in $\F$ such that $\sigma_i(\#1)=c_i$, $\sigma_i(\#2)=c_{i+1}$, and $\sigma_i(B_i) \in \F$.	
	To check for the existence of such a path, we guess the constants $c_i$ in the path one at a time, together with the witnessing rule $\rho_i$
	and substitution $\sigma_i$, 
	using a counter to ensure that the number of guessed constants
	does not exceed the number of constants in $\F$. Note that we need only logarithmically many bits
	for the counter, so the entire procedure runs in non-deterministic logarithmic space.
	
	Hardness for \NL\ can be shown by an easy reduction from the \NL-complete directed reachability problem.
\end{proof}

\medskip

\noindent\textbf{Theorem \ref{thm:combined}}
\emph{Both (i) atomic CQ entailment over \ltrans\ KBs and (ii) CQ entailment over safe \ltrans\ KBs
	are in ExpTime in combined complexity.
	Furthermore, atomic CQ entailment over \ltrans\ KBs is
	ExpTime-hard in combined complexity.}\smallskip 

The proof of Theorem \ref{thm:combined} is provided in the following two lemmas.

\begin{lemma}
	Both (i) atomic CQ entailment over \ltrans\ KBs and (ii) CQ entailment over safe \ltrans\ KBs
	are in ExpTime in combined complexity.
\end{lemma}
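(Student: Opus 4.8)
The plan is to turn the rewriting characterisation of Theorem~\ref{prop:completeness-excluded} into an effective ExpTime decision procedure. By that theorem, for an atomic query or a safe rule set the modified algorithm terminates and produces a finite Datalog program $\Pi_\UP$ together with a finite set $Q_\UQ$ of CQs such that $\F,\R \models Q$ iff $\F,\Pi_\UP \models Q'$ for some $Q' \in Q_\UQ$. It therefore suffices to establish three things: that $\Pi_\UP$ can be built in exponential time and has exponential size; that $Q_\UQ$ can be built in exponential time and consists of exponentially many CQs of polynomial size; and that the final entailment test $\F,\Pi_\UP \models Q'$ for all $Q' \in Q_\UQ$ can be carried out within an exponential time bound.

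For the first point, I would analyse the internal rewriting of Step~3. Every direct rewriting adds to a pattern definition an atom over a predicate from $\R_L^+$ in which the distinguished variables $\#1,\#2$ occur, and such atoms are retained only up to isomorphism. For a predicate of arity $a$, each argument position is either $\#1$, $\#2$, or a fresh variable, and the fresh variables are grouped into equivalence classes; hence the number of such atoms up to isomorphism is bounded by a quantity exponential in $a$, and thus exponential in the size of the input. Consequently each pattern definition contains at most exponentially many atoms, the fixpoint of Step~3 is reached after at most exponentially many additions, and $\Pi_\UP$ (one single-body rule per such atom, plus one transitivity rule per transitive predicate) has exponential size and is computable in exponential time.

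For the second point, I would invoke the size bound of Proposition~\ref{excluded}: since the modified Step~5 performs only non-excluded direct rewritings, every PCQ it produces satisfies $|\Q'| \le |\Q|$, so by induction from the initial $\Q^+$ (with $|\Q^+| \le |Q|$) all produced PCQs have size at most $|Q|$. Up to isomorphism there are only exponentially many PCQs of size at most $|Q|$ over the original predicates extended with the pattern names, so the fixpoint of Step~5 is reached after exponentially many iterations, each itself computable in exponential time (the instances of interest and their external unifiers, together with the case distinctions $(i)$--$(iv)$, are all exponentially bounded). Translating each repeatable pattern $P^+[t_1,t_2]$ into an atom $p^+(t_1,t_2)$ then yields the set $Q_\UQ$ of exponentially many CQs, each of size at most $|Q|$.

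Finally, for the evaluation I would first saturate $\F$ under $\Pi_\UP$. The single-body rules $a_i \rightarrow p^+(\#1,\#2)$ derive at most polynomially many (in $|\F|$) new $p^+$-atoms, and the transitivity rules then add at most quadratically many more per transitive predicate; so the saturated fact base has polynomial size in $|\F|$, although computing it costs exponential time because $\Pi_\UP$ has exponentially many rules. Each $Q' \in Q_\UQ$ can then be evaluated against this saturated fact base by a homomorphism test, which by brute force over all variable assignments runs in exponential time, and there are only exponentially many such $Q'$. Summing these exponential bounds keeps the whole procedure in ExpTime. The main obstacle will be the size analysis itself: in particular, pinning down the exponential bound on the number of non-isomorphic atoms per pattern definition, which simultaneously controls the size of $\Pi_\UP$ and the running time of Step~3, and then combining it correctly with Proposition~\ref{excluded} to bound the external rewriting; the subsequent saturate-then-evaluate argument is routine.
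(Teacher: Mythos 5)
Your proposal is correct and follows essentially the same route as the paper's proof: both invoke Theorem~\ref{prop:completeness-excluded}, bound $\Pi_\UP$ by counting non-isomorphic atoms per pattern definition (exponential in the arity), use Proposition~\ref{excluded} to bound all rewritings in Step~5 by $|Q|$ and hence their number by an exponential, and finish by saturating $\F$ (polynomial-size result, exponential time due to the exponentially many rules) followed by brute-force evaluation of exponentially many polynomial-size CQs. The only differences are cosmetic, e.g.\ the paper states the per-query size bound as linear in $|Q|$ and spells out the $O(r^{r^2})$ count of instances of interest, which your sketch subsumes.
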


\newcommand{\PP}{\UP}
\newcommand{\PQ}{\UQ}

\begin{proof}
	Consider a CQ $Q$, a {\em linear}+\fun{trans}
	rule set $\R = \R_L \cup \R_T$,
	with $\R_L$ a set of linear rules and $\R_T$ a set of transitivity rules,
	and a set of facts $\F$. 
	Suppose that either condition (i) or (ii) of the lemma statement holds.
	It follows from Theorem \ref{prop:completeness-excluded} that the modified query rewriting algorithm halts
	and returns a finite set $\Pi_\PP$  of Datalog rules
	and a finite set $Q_\PQ$ of CQs
	such that $(\F,\R) \models Q$ iff $(\F,\Pi_{\PP}) \models Q'$ for some $Q' \in Q_{\PQ}$.

	To prove membership in ExpTime, we show that:
	\begin{enumerate}
		\item[(i)] $\Pi_{\PP}$ is of exponential size and can be built in exponential time;
		\item[(ii)] $Q_{\PQ}$ is a set of exponential size, that can
		be built in exponential time, and any
		$Q' \in Q_{\PQ}$ is of linear size in $Q$;
		\item[(iii)] we can saturate $\F$ with $\Pi_{\PP}$ into $\F^*$ in polynomial time in the size of $\Pi_{\PP}$ and $\F$,
		and the resulting set of facts is of
		polynomial size in $\F$;
		\item[(iv)] $Q_{\PQ}$ can be evaluated over $\F^*$ in
		exponential time.
	\end{enumerate}

	We denote by $r$ the maximum arity of a predicate in $\R$,
	by $p$ the number of predicates occurring in $\R$ and by
	$t$ the number of transitive predicates.
	
	Let us consider the construction of $\Pi_{\PP}$.
	Since all rules generated in this step are linear rules
	and given a predicate $s$ the number of non-isomorphic atoms
	using $s$ is bounded by an exponential
	in $r$,
	for each transitive predicate there can be only exponentially many
	generated rules.
	Thus $|\Pi_{\PP}| = O(t\times p \times r^{r})$.
	For the first point, it remains to show that $\Pi_{\PP}$ can be
	built in exponential time.
	Consider the following algorithm:
	for each pattern definition $P$, repeat until fixpoint:
	choose a rule $R = (B,H) \in \R_L$,
	compute all instances of interest
	of $P$ w.r.t.\ $R$, and if there is an internal unifier, add the
	corresponding rewriting to $P$'s definition.
	The repeatable pattern $P^+[t_1,t_2]$ can be expanded
	into at most $r+2$ standard patterns
	(by the definition of instances of interest), and thus there
	are $r+2$ possible sizes for the instances of interest.
	Then for each of these standard patterns, we can choose an atom
	from $P's$ definition that uses the predicate of $H$.
	Since there are at most $r^r$ possible choices
	for instantiating a standard pattern, and there are at most
	$r+2$ standard patterns to expand, we obtain the following bound:
	there are $O((r+2) \times (r^{r})^{r+2}) = O(r^{r^2})$
	different instances of interest for a given pattern definition
	and a given rule.
	Therefore each step of the algorithm can be processed in
	exponential time.
	Since there are only exponentially many different possible 
	rewritings, the fixpoint is reached in at most exponential time.
	Hence, Point $(i)$ runs in exponential time.

	The argument for Point $(ii)$ proceeds similarly.
	The only difference comes from the fact that since $Q$ might not be
	atomic, we apply the rewriting step to conjunctive queries.
	However, from Proposition \ref{excluded},
	we know that all rewritten
	queries have size bounded by the size of $Q$.
	Therefore, by using the same argument as for Point $(i)$,
	we know that this step is exponential in both the maximum
	arity and in the size of the initial query $Q$. 

	Regarding Point $(iii)$, a single
	breadth-first step with all non-transitive 
	rules in $\Pi_{\PP}$ followed
	by the computation of the transitive closure is enough to build
	$\F^*$.
	While there are exponentially many non-transitive rules,
	each can be applied in polynomial time (since the body of each
	rule is atomic).
	Since each rule only creates atoms with transitive
	predicates, the resulting set of facts is of size $|terms(\F)|^2 \times
	p$.
	Now the transitive closure adds at most a quadratic number of
	atoms (for each transitive predicate), and can be computed in
	polynomial time in the size of $\F$.
	Therefore, $\Pi_{\PP}$ can be built in exponential time in $r$
	and is of polynomial size in $|\F|$.

	It remains to show that point $(iv)$ can be done in exponential
	time.
	Observe that since each query $Q' \in Q_{\PQ}$ is of size bounded
	by the initial query $Q$
	(Proposition \ref{excluded}),
	its evaluation can be computed in $NP$, thus in exponential
	time.
	Since there are only exponentially many queries in $Q_{\PQ}$,
	this step is also done in exponential time.

	Therefore, we can conclude that the entailment problem over
	{\em linear}+\fun{trans} sets of rules with atomic query, and
	over safe {\em linear}+\fun{trans} sets of rules is in ExpTime.
\end{proof}

\begin{lemma}
	Atomic CQ entailment over \ltrans\ KBs is
	ExpTime-hard in combined complexity.
\end{lemma}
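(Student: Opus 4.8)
The plan is to establish the lower bound by a polynomial-time reduction from a standard ExpTime-complete problem. A convenient source is the acceptance problem for alternating Turing machines working in polynomial space, which is ExpTime-complete (since alternating polynomial space equals ExpTime); a two-player corridor tiling game would serve equally well. Given such a machine $M$ and input $w$ with space bound $s = |w|^k$, I would construct in polynomial time a \ltrans\ knowledge base $(\F,\R)$ with $\R = \R_L \cup \R_T$ and an atomic query $Q$ such that $\F,\R \models Q$ iff $M$ accepts $w$. The construction is allowed to use predicates whose arity is polynomial in $|w|$: this is legitimate under combined complexity, and it is precisely the regime in which the $\mathrm{arity}^{\mathrm{arity}}$ blow-up identified in the matching upper bound becomes genuinely exponential, so I expect the hardness to arise from the same source as the membership cost.

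For the encoding, I would represent a configuration of $M$ by a single atom of arity $\Theta(s)$ recording the control state, the head position, and the contents of the $s$ tape cells; since there are only $2^{O(s)}$ distinct configurations, the chase of $(\F,\R)$ can realise the whole computation as a structure of exponential size. For each transition of $M$ I would add a linear rule whose single body atom is a configuration atom and whose head creates, through an existential variable, a fresh successor-configuration element together with an edge $p(\cdot,\cdot)$ of the distinguished transitive predicate $p$; the consistency of the encoded tape contents across one step is enforced locally by the reuse of frontier variables inside that single rule. The atomic query $Q$ would assert that the initial configuration is accepting, and acceptance would be certified through $p$ and its transitive closure, so that the answer is decided by reachability over the exponential chase.

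The crux, and the step I expect to be the main obstacle, is that a \emph{linear} rule has a \emph{single} body atom, so the only rule in $\R$ with a conjunctive (two-atom) body is the transitivity rule itself. Consequently every ``joining'' inference---in particular the universal (\emph{and}) branching of the alternating machine, which one would ordinarily encode by a conjunctive body such as $\mathit{edge}(x,y)\wedge \mathit{Acc}(y)\rightarrow\cdots$---must instead be routed through transitivity, which is the sole source of conjunctive power available. I would therefore design a gadget in which the transitive predicate $p$ composes exactly the successor edges of a universal configuration, so that the transitive closure certifies a configuration as accepting only when \emph{all} of its successors have already been certified; the existential (\emph{or}) branching is comparatively easy, since it matches the disjunctive effect already obtained from several linear rules feeding the same head. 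Engineering this gadget so that transitivity neither under- nor over-approximates alternating acceptance, while keeping every rule linear and constant-free, is the delicate point; it is also where I would expect the correctness argument to require the most care.

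With the reduction in place, the remaining work is routine. For correctness I would argue both directions by induction over the alternating computation tree, showing that an accepting leaf becomes $p$-reachable from the start configuration in the chase exactly when $M$ accepts, appealing to the rewriting characterisation of entailment in Theorem~\ref{thm:correct} wherever it simplifies the bookkeeping. For the resource bounds, the rule set $\R$, the fact base $\F$, and the query $Q$ are all of size polynomial in $|M| + |w|$, so the reduction runs in polynomial time. Combined with the ExpTime membership shown in the preceding lemma, this yields ExpTime-completeness of atomic CQ entailment over \ltrans\ KBs.
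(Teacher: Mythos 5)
You have the right scaffolding: the source problem (acceptance of a polynomial-space ATM), the polynomial-arity configuration atoms, linear rules generating the transitions, and---most importantly---the correct diagnosis that universal branching is the obstacle, since linear rules have atomic bodies and the transitivity rule is the only source of conjunctive power. This matches the paper's reduction (adapted from the RPQ hardness proof of \cite{bt:rr16}) in outline. But the proposal stops exactly where the proof begins: the gadget that makes transitivity simulate universal (\emph{and}) branching is never constructed, only specified by the behaviour it should have (``certifies a configuration as accepting only when all of its successors have already been certified''). That gadget \emph{is} the lower bound. The paper's solution is a series-parallel construction: every configuration atom carries two distinguished terms, a ``begin'' and an ``end'', and a configuration counts as certified iff its end is $p$-reachable from its begin. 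An accepting configuration gets a direct edge $p(\mathit{begin},\mathit{end})$; an existential configuration puts its two successors' spans in \emph{parallel} (edges from its begin to each successor's begin, and from each successor's end to its end); a universal configuration puts them in \emph{series} (its begin to the first successor's begin, the first successor's end to the second successor's begin, the second successor's end to its own end). The query is $p(b,e)$ on the two constants attached to the initial configuration. Without this, or an equivalent concrete construction, there is no reduction to verify.

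There is a second, related problem: the correctness invariant you announce in the final paragraph is the wrong one. You propose to show that ``an accepting leaf becomes $p$-reachable from the start configuration in the chase exactly when $M$ accepts.'' Start-to-leaf reachability cannot capture alternation: reaching one accepting leaf says nothing about the sibling branches of the universal nodes along the way, which is precisely the over-approximation danger you yourself flag. The invariant that works with the series-parallel gadget is local span traversability: for every configuration atom produced in the chase, $p(\mathit{begin},\mathit{end})$ is derivable iff the ATM accepts from that configuration, proved by induction on the height of the computation tree; the query $p(b,e)$ is then just this statement at the root. So the proposal has the right skeleton and correctly locates the difficulty, but it is missing the central construction, and the correctness argument as sketched would be carried out with an invariant that fails for universal states.
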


\begin{proof}
	To prove hardness, we can rely on a proof from
	\cite{bt:rr16}.
	In this paper, they prove that
	Regular Path Query (RPQ) entailment 
	over linear knowledge
	bases is ExpTime-hard.
	The problem is not a subproblem of ours, nor the contrary.
	However the proof uses only a particular RPQ of the form
	$p^+(t_1,t_2)$.
	This RPQ is entailed from $(\F,\R_L)$ if and only if the atomic CQ
	$p(t_1,t_2)$ is entailed from $(\F, \R_L \cup \{trans(p)\})$.
	Nevertheless, we recall below the main lines of the proof, while
	reformulating it in terms of our problem.
	Note that the linear rules have a non-atomic head to simplify
	the explanations, but can be decomposed into atomic-headed
	without loss of generality.

	The reduction is from the simulation of any Alternating Turing
	Machine (ATM) that runs in polynomial space.
	More specifically, the problem they consider is the following
	ExpTime-complete problem:
	given a PSpace ATM $M$, and a word $x$, does $M$ accept $x$?
	Without loss of generality, they consider ATM where each
	non-final universal state has exactly two existential state
	successors, and
	each non-final existential state has exactly two universal
	state successors.

	The proof uses a
	single transitive predicate that we call $p$.
	Given an ATM $M$ with input $x$, we create a predicate of arity
	polynomial in $x$ and $M$, that encodes the current
	configuration of the machine (its tape
	and the current state and head position).
	Furthermore, each atom encoding a configuration also 
	uses a term as a
	``begin'' and another as an ``end''
	(respectively the first and last position of the predicate),
	these are used later by the
	transitivity rules.
	Linear rules are used to generate the transitions of the ATM.
	First, for each transition in the ATM, there is a linear rule
	that generates the two next configurations, and depending on
	the type of the current state different transitive atoms are
	generated as illustrated by Figure \ref{fig:lin-exphard}.

	The initial configuration contains two special constants
	$b$ and $e$ as begin and end,
	and the set of facts contains only the atom
	encoding this configuration.

	When the state of the current configuration $s$ is existential,
	four atoms using predicate $p$
	are generated in the next step, the first two being used to link
	the begin of $s$ to the 
	begin of the two next configurations (since the ATM is non-deterministic by nature),
	and the last two atoms being used to
	link the end of the two next configurations to the end of $s$.

	When the state of the current configuration $s$ is universal,
	three atoms using $p$ are
	generated, the first one links the begin of $s$ to the begin of
	the first next configuration, the second one links the end of the first
	next configuration to the begin of the second next configuration, and finally
	the last one links the end of the last next configuration to the end of
	$s$.

	Finally, when the state of the current configuration $s$ is accepting,
	an atom using $p$ linking the begin of $s$ with the end of $s$
	is generated.

	The idea is that linear rules simulate the run of the
	machine, and that transitivity rules connect the initial begin
	to the initial end if and only if $M$ accepts $x$.

	\newcommand{\conf}[2]{
			\draw[-,black] (#1,#2) -- (#1+3,#2) -- (#1+3,#2-0.5)
				-- (#1,#2-0.5) -- (#1,#2);
			\draw[-,black] (#1+0.5,#2) -- (#1+0.5,#2-0.5);
			\draw[-,black] (#1+1,#2) -- (#1+1,#2-0.5);
			\node[] at (#1+1.5,#2-0.25) {$\dots$};
			\draw[-,black] (#1+2,#2) -- (#1+2,#2-0.5);
			\draw[-,black] (#1+2.5,#2) -- (#1+2.5,#2-0.5);
	}

	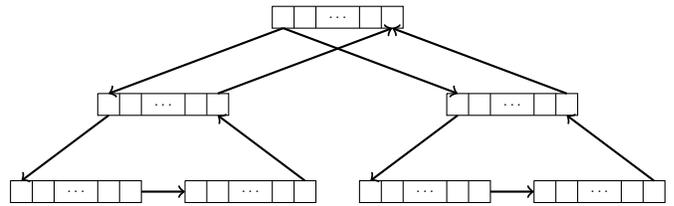
\begin{figure}
		\begin{center}
		\begin{tikzpicture}[every node/.style={transform shape},
		scale=0.58]
			\conf{0}{0}
			\conf{-4}{-2}
			\conf{4}{-2}
			\conf{-6}{-4}
			\conf{-2}{-4}
			\conf{2}{-4}
			\conf{6}{-4}

			\draw[thick,->,black] (0.25,-0.5) -- (-3.75,-2);
			\draw[thick,->,black] (0.25,-0.5) -- (4.25,-2);
			\draw[thick,->,black] (-1.25,-2) -- (2.75,-0.5);
			\draw[thick,->,black] (6.75,-2) -- (2.75,-0.5);

			\draw[thick,->,black] (-3.75,-2.5) -- (-5.75,-4);
			\draw[thick,->,black] (-3,-4.25) -- (-2,-4.25);
			\draw[thick,->,black] (0.75,-4) -- (-1.25,-2.5);

			\draw[thick,->,black] (4.25,-2.5) -- (2.25,-4);
			\draw[thick,->,black] (5,-4.25) -- (6,-4.25);
			\draw[thick,->,black] (8.75,-4) -- (6.75,-2.5);
		\end{tikzpicture}
		\end{center}

		\caption{Reduction from ATM simulation to 
			atomic CQ entailment over {\ltrans}
			knowledge bases. Edges stand for $p$-atoms and arrays
			stand for configuration atoms, with the first and last
			elements corresponding to the begin and end terms.}
		\label{fig:lin-exphard}
	\end{figure}

	Then, the query just asks whether the begin of the initial
	configuration can be linked to the end of the initial
	configuration (i.e., $Q = p(b,e)$).

	This reduction shows that atomic CQ entailment 
	over {\ltrans} sets of rules is ExpTime-hard.
\end{proof}

\end{document}